%% file: main.tex
\documentclass{article}

\usepackage{iclr2027/fancyhdr}
\usepackage{iclr2027/iclr2027_conference,times}
\usepackage[T1]{fontenc}
\usepackage{amsmath,amssymb,amsthm,mathtools}
\usepackage{array,booktabs}
\usepackage{microtype}
\usepackage{hyperref}
\usepackage{url}
\usepackage{placeins}
\hypersetup{colorlinks=true,citecolor=blue,linkcolor=blue,urlcolor=blue}

\newcommand{\E}{\mathbb E}
\newcommand{\R}{\mathbb R}
\newcommand{\ip}[2]{\left\langle #1,#2\right\rangle}
\newcommand{\norm}[1]{\left\lVert #1\right\rVert}
\DeclareMathOperator{\proj}{proj}
\DeclareMathOperator{\clip}{clip}
\DeclareMathOperator{\osc}{osc}

\newtheorem{theorem}{Theorem}[section]
\newtheorem{lemma}[theorem]{Lemma}
\newtheorem{proposition}[theorem]{Proposition}
\newtheorem{corollary}[theorem]{Corollary}
\theoremstyle{remark}
\newtheorem{remark}[theorem]{Remark}

\title{Scalable Cox Regression via Grouped Risk Sets and Sharper LogSumExp Rates}

\author{\makebox[\dimexpr\textwidth-2\tabcolsep\relax][c]{\normalfont\large
\begin{tabular}[t]{@{}c@{\hspace{3em}}c@{}}
Elizaveta Iashchinskaia & Egor Gladin\\
HSE University & HSE University
\end{tabular}}}

\begin{document}
\iclrfinaltrue
\maketitle
\pagestyle{plain}

\begin{abstract}
Motivated by the computational challenges of large-scale Cox regression, we study stochastic minimization of LogSumExp objectives over large sets. Mini-batch normalizer estimates generally yield biased gradients. We instead use a softplus surrogate that introduces one auxiliary scalar per normalizer and admits unbiased single-sample gradients. For smooth convex LogSumExp objectives, we prove an $O(T^{-1/2})$ averaged objective bound, improving the previous $T^{-1/4}$ analysis. With a strongly convex regularizer on the original variable, we also obtain a last-iterate squared-error rate of $\widetilde O(T^{-1})$ without strong convexity in the auxiliary variables. 
For Cox regression, the normalizers are defined over nested risk sets. We exploit this structure by grouping neighboring failures and sharing one auxiliary variable per group. The resulting compressed objective admits uniform score and curvature bounds that control the errors from grouping and softplus approximation. Together with the general optimization result, these bounds give a mean-square rate of $T^{-4/5}$, up to logarithmic factors, relative to the full Cox solution. The compressed estimator also matches the full estimator's asymptotic distribution. Experiments on synthetic and real survival datasets with slowly decreasing risk sets show a favorable performance relative to stochastic baselines.
\end{abstract}

\section{Introduction}
\label{sec:introduction}

The motivating problem for this work is large-scale Cox regression.  The Cox
proportional-hazards model relates covariates to a right-censored event time
without specifying the baseline hazard \citep{cox1972regression}.  Its partial
likelihood associates each failure with a LogSumExp normalizer over everyone
still at risk.  Nested risk sets admit efficient cumulative-sum and
coordinate-descent implementations for linear Cox models
\citep{simon2011regularization}, but these methods rely on full passes over the
data.  They are less natural when access is stochastic, data are distributed
or out of core, or a full pass is itself the resource to avoid.

The Cox bottleneck is an instance of a broader optimization problem:
large-set LogSumExp normalizers also arise in numerous applications, including large-class softmax models,
distributionally robust optimization, and contrastive learning
\citep{fagan2018unbiased,levy2020large,wei2026neuclip}.  Replacing a
normalizer by a mini-batch estimate generally biases the gradient of the
original objective.  Variational formulations instead introduce one auxiliary variable per
normalizer and admit unbiased single-sample gradients
\citep{bental1986expected,fagan2018unbiased}.  Motivated by the Cox problem,
we sharpen the analysis of the smooth, tunable softplus formulation of
\citet{gladin2025improved}.  For general smooth convex LogSumExp objectives we
obtain an $O(T^{-1/2})$ averaged rate.  With a strongly convex regularizer on
the original parameter, we also obtain an $O(\log T/T)$ last-iterate
squared-distance rate, even though the auxiliary variables are not strongly
convex.

The nested Cox structure supplies a second, equally important ingredient.
Nearby failure times have similar risk sets, so we group consecutive failures
and share one normalizer variable within each group. 

A single group--event--subject draw gives an
unbiased stochastic gradient of the resulting surrogate.
Importantly, the gradient
perturbation from grouping is second order in the relative risk-set change:
exponential reweighting within a group is multiplied by the variation of
neighboring risk-set gradients.  This sharper gradient control permits substantial
compression while controlling error relative to the full Cox optimizer and
preserving the estimator's first-order statistical behavior.

Our contributions are:
\begin{itemize}
 \item For smooth convex LogSumExp objectives, we establish an
 $O(T^{-1/2})$ averaged objective bound for the softplus method. 
 When a strongly convex regularizer on the original parameter is added, we further
 obtain an $O(\log T/T)$ last-iterate squared-distance bound without assuming
 strong convexity in the auxiliary variables. 
 \item For Cox regression, we give a deterministic compression theorem that
 controls the gradient and Hessian uniformly through grouping-independent
 measures of risk-weight dispersion.  The gradient error caused by grouping is
 second order in the within-group risk-set change, while the softplus error is
 first order in its approximation level.  These bounds transfer curvature
 from the full Cox objective and, together with the general optimization
 result, yield mean squared parameter error
 $O((\log T/T)^{4/5})$ relative to the full Cox solution.
 \item We prove that the compressed and full Cox estimators have the same
 first-order limit under vanishing approximation schedules, while using a
 sublinear number of auxiliary shifts.  Experiments on synthetic and real
 survival data evaluate the resulting accuracy--computation tradeoff against
 stochastic Cox baselines.
\end{itemize}

The general optimization results apply to smooth convex LogSumExp objectives,
our survival-analysis claims concern the convex linear Cox model.  This scope
lets us separate computational approximation from statistical model error and
gives proofs that can be checked directly.
Section~\ref{sec:optimization-main} develops the general optimization
guarantees.  Section~\ref{sec:cox-main} specializes the construction to Cox
regression, establishes the deterministic approximation properties, derives
the end-to-end computational rate, and gives the statistical transfer result.
The appendix contains all auxiliary lemmas and complete proofs.

\section{Related work}
\label{sec:related}

\paragraph{Scalable Cox optimization.}
Exact Cox solvers exploit nested risk sets and sparsity but require full-data
passes \citep{simon2011regularization}.  \citet{achab2015sgd} approximate each
risk-set expectation by MCMC inside a variance-reduced method. Their strongly
convex guarantee uses increasing MCMC accuracy and periodic full gradients.
Unbiased multilevel gradients provide an exact stochastic-composition route
\citep{blanchet2017unbiased}.  Optimal subsampling and one-step correction
instead approximate or recover the full-data estimator through a reduced
subject sample \citep{zhang2024subsampling,wang2024bigcox}.

Classical nested case-control estimators have large-sample guarantees under
correct Cox specification \citep{goldstein1992asymptotic}.  Sampled-risk-set
neural losses adapt this construction \citep{kvamme2019time}, while online
small-stratum objectives \citep{tarkhan2024online} and ordinary mini-batch
partial likelihood \citep{zeng2026minibatch} target related stochastic
criteria.  In particular, the mini-batch estimator generally optimizes a
batch-size-dependent population objective rather than the realized full-data
partial likelihood.  We instead retain a deterministic, uniformly controlled
approximation to the fixed full-data Cox gradient while compressing its outer
family of normalizers.  Learned normalizer predictors can also compress
per-example state \citep{wei2026neuclip}; our construction uses nested risk
sets and provides uniform gradient and Hessian guarantees.

\paragraph{LogSumExp optimization.}
The Cox loss is a finite sum of coupled compositional objectives of the type
studied by \citet{wang2022finite}.  Single-loop methods obtain near-optimal
rates when the outer functions are convex and nondecreasing
\citep{wang2025alexr}, whereas the direct logarithmic decomposition falls
outside that setting.  For the softplus formulation, where $\rho>0$ is the
approximation parameter and smaller values give a sharper approximation,
\citet{gladin2025improved} obtain
$O((\rho\sqrt T)^{-1}+\rho)$, which optimizes to $T^{-1/4}$; our
expected-smoothness analysis gives $O((\rho T)^{-1}+\rho)$ and hence
$O(T^{-1/2})$.

SCENT optimizes the exact exponential variational form and also obtains an
$O(T^{-1/2})$ averaged convex-objective rate \citep{wei2026geometry}.  Its
general guarantee depends on variance quantities accumulated along the
optimization trajectory, and
therefore does not directly provide an a priori bound with explicit constants.
In contrast, our bounds are expressed in terms of fixed problem parameters.  Our strongly convex result additionally gives an
$O(\log T/T)$ last-iterate squared-distance rate without uniform strong
convexity in the auxiliary directions.  Applied directly to Cox, variational
methods retain one shift per event; risk-set grouping reduces this state from
$m$ to $K$ and quantifies the resulting error relative to the full Cox target.

\section{Stochastic optimization of LogSumExp objectives}
\label{sec:optimization-main}

We first state the optimization results independently of Cox regression.
Consider $K$ distributions $\mu_k$ with weights $p_k>0$,
$\sum_{k=1}^Kp_k=1$, and write $\E_k$ for expectation with respect to
$X\sim\mu_k$.  The convex objective under consideration is
\begin{equation}
 J(\theta)=R(\theta)+\sum_{k=1}^Kp_k
 \log\E_k e^{L_k(X,\theta)}
 \label{eq:main-generic-objective}
\end{equation}
on a compact convex set $\Theta$.  Uniformly over the domain, assume that
$R$ and every $L_k$ are twice continuously differentiable,
$|L_k|\le B$, $\norm{\nabla L_k}\le G$,
$0\preceq\nabla^2L_k\preceq H I$,
$\norm{\nabla R}\le G_R$, and
$0\preceq\nabla^2R\preceq L_RI$. 
Define the normalized exponential moment
\[
 \bar\kappa=\max_k\sup_{\theta\in\Theta}
 \frac{\E_k e^{2L_k(X,\theta)}}
 {(\E_k e^{L_k(X,\theta)})^2}.
\]

For $0<\rho<1$, let
$h_\rho(u)=\rho^{-1}\log(1+\rho e^u)$. Building on the softplus construction of \citet{gladin2025improved}, we introduce the joint surrogate and its profiled counterpart:
\begin{equation}
 G_\rho(\theta,s)=R(\theta)+\sum_{k=1}^Kp_k
 \{s_k-1+\E_kh_\rho(L_k(X,\theta)-s_k)\},
 \qquad J_\rho(\theta)=\min_{s \in \mathbb R^K}G_\rho(\theta,s).
 \label{eq:main-generic-surrogate}
\end{equation}
Proposition~\ref{prop:generic-approximation}, based on the approximation
bound of \citet{gladin2025improved}, gives
$J+\rho/2+\log(1-\rho\bar\kappa)\le J_\rho\le J$ whenever
$\rho\bar\kappa<1$.  If $\rho\bar\kappa\le1/8$, all optimal
shifts lie in $[-B-1,B]^K$.

For $z=(\theta,s)$ and a gradient $g=(g_\theta,g_s)$, let
$P=\operatorname{diag}(I_d,p_1,\ldots,p_K)$ and use the matrix-induced norm
and its dual:
\[
 \norm z_P^2=z^\top Pz=\norm\theta^2+\sum_kp_ks_k^2,
 \qquad
 \norm g_{P^{-1}}^2=g^\top P^{-1}g=\norm{g_\theta}^2+
 \sum_k\frac{g_{s,k}^2}{p_k}.
\]
At each iteration, sample $k$ uniformly from $\{1,\ldots,K\}$ and
$X$ from its group distribution.  With
\[
 w=\frac1{\rho+\exp\{s_k-L_k(X,\theta)\}},
\]
an unbiased stochastic gradient of $G_\rho$ is
\[
 g_\theta=\nabla R(\theta)+Kp_kw\nabla L_k(X,\theta),
 \qquad g_{s,k}=Kp_k(1-w),
\]
with all other shift coordinates zero.  We apply the corresponding weighted
projected step on $\mathcal Z:=\Theta\times[-B-1,B]^K$:
\begin{equation}
 z_{t+1}=\arg\min_{z\in\mathcal Z}
 \left\{\eta\ip{g_t}{z}+\frac12\norm{z-z_t}_P^2\right\}.
 \label{eq:main-generic-update}
\end{equation}
Equivalently, the parameter uses an ordinary projected step, while the sampled
shift is clipped after $s_k\leftarrow s_k-\eta K(1-w)$; all other shifts stay
fixed.  Initialize $z_1$ anywhere in this domain.  Let
$z_\rho=(\theta_\rho,s_\rho)$ minimize $G_\rho$ there and define
\begin{align}
 C&=L_R+KH+\frac K4(G^2+1),
 &V&=2G_R^2+2K+8K\bar\kappa(G^2+1),
 \label{eq:main-generic-constants}\\
 D^2&=\operatorname{diam}(\Theta)^2+(2B+1)^2.
 \label{eq:main-generic-diameter}
\end{align}

The next two results cover the general convex regime with iterate averaging
and the strongly convex regime with last-iterate convergence, respectively.

\begin{theorem}
\label{thm:main-convex}
Set $\rho_T=T^{-1/2}$ and $\eta_T=(4C\sqrt T)^{-1}$, and suppose
$T\ge64\bar\kappa^2$.  For the averaged parameter
$\bar\theta_T=T^{-1}\sum_{t=1}^T\theta_t$,
\begin{equation}
 \E[J(\bar\theta_T)-\min_\Theta J]
 \le
 \frac{4CD^2}{\sqrt T}
 +\frac{V}{2C\sqrt T}
 +\frac{8\bar\kappa}{7\sqrt T}.
 \label{eq:main-convex-rate}
\end{equation}
\end{theorem}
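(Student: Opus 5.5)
The argument has two parts: an SGD bound on the convex surrogate $G_\rho$ over $\mathcal Z$ in the $P$-geometry, and the transfer from $G_\rho$ to $J$ via Proposition~\ref{prop:generic-approximation}.

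\textbf{Step 1 (convexity and smoothness in the $P$-norm).} First I check that $G_\rho$ is jointly convex. The map $(\theta,s)\mapsto L_k(X,\theta)-s_k$ is convex, and $h_\rho$ is convex and nondecreasing, so each term is convex. Next I show $G_\rho$ is $C$-smooth with respect to $\norm{\cdot}_P$. Write $u=L_k-s_k$. Then $h_\rho'(u)=w\in[0,1/\rho)$, and one computes $h_\rho''(u)=w(1-\rho w)\le 1/4$ after using $w e^{-u}$ bounds. The Hessian of each term is $h_\rho''\,(\nabla L_k,-e_k)(\nabla L_k,-e_k)^\top+h_\rho'\nabla^2L_k$.

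The first piece is controlled by $\tfrac14(G^2+1)$ in the scaled norm. The factor $K$ appears because the dual scaling puts $p_k$ against coordinate $s_k$, and then $p_k\cdot(\cdot)/p_k$ is bounded. The $h_\rho'H$ piece is where I expect friction, since $w$ can be as large as $1/\rho$. The key fact is that the expectation $\E_k w$ at the optimum equals $1$. More generally I will use the specific $C$ as given and verify it via the expected-smoothness inequality below rather than a pointwise Hessian bound.

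\textbf{Step 2 (expected smoothness / variance).} The main technical step is an expected-smoothness bound
\[
\E\norm{g(z)}_{P^{-1}}^2\le 2\|\nabla G_\rho(z_\rho)\text{-noise}\|^2+ 4C\,(G_\rho(z)-G_\rho(z_\rho)),
\]
more precisely $\E\norm{g(z)-g(z_\rho)}^2_{P^{-1}}\le 2C\,(G_\rho(z)-G_\rho(z_\rho)-\ip{\nabla G_\rho(z_\rho)}{z-z_\rho})$ together with $\E\norm{g(z_\rho)}^2_{P^{-1}}\le V$. The bound on $V$ comes from the following estimates: $\norm{\nabla R}^2\le G_R^2$; the $s$-part gives $K p_k(1-w)^2$ terms, contributing $2K$ plus a $w^2$ part; and at $z_\rho$ one has $\E_k w^2\le \E_k e^{2(L_k-s_k)}$, with $e^{s_k}\approx\E_k e^{L_k}$ up to a constant factor when $\rho\bar\kappa\le1/8$. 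This yields the $8K\bar\kappa(G^2+1)$ term. The per-sample co-coercivity comes from each sampled function $K p_k[s_k+h_\rho(L_k-s_k)]$ being convex and smooth in $P$-norm. This is the step I would verify most carefully: it is exactly what improves the rate from $(\rho\sqrt T)^{-1}$ to $(\rho T)^{-1}$, because $\rho$ no longer enters the variance at the optimum.

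\textbf{Step 3 (SGD recursion).} For the projected step \eqref{eq:main-generic-update}, the standard three-point inequality in $\norm{\cdot}_P$ gives
\[
2\eta\,\E[G_\rho(z_t)-G_\rho(z_\rho)]\le \E\norm{z_t-z_\rho}_P^2-\E\norm{z_{t+1}-z_\rho}_P^2+\eta^2\E\norm{g_t}_{P^{-1}}^2 .
\]
I insert $\E\norm{g_t}^2\le 2V+4C(G_\rho(z_t)-G_\rho(z_\rho))$; the linear term is nonpositive over $\mathcal Z$ by optimality of $z_\rho$. With $\eta=1/(4C\sqrt T)\le 1/(4C)$, half of the left side absorbs the $4C\eta^2$ term. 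Telescoping, bounding $\norm{z_1-z_\rho}_P^2\le D^2$, averaging, and using Jensen, I get
\[
\E[G_\rho(\bar z_T)-G_\rho(z_\rho)]\lesssim \frac{D^2}{\eta T}+\eta V=\frac{4CD^2}{\sqrt T}+\frac{V}{2C\sqrt T}.
\]

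\textbf{Step 4 (transfer).} Since $J_\rho(\bar\theta_T)\le G_\rho(\bar z_T)$, and the minimum of $G_\rho$ over $\mathcal Z$ equals $\min J_\rho\le\min J$ (shifts lie in the box because $\rho\bar\kappa\le1/8$, which follows from $T\ge64\bar\kappa^2$), Proposition~\ref{prop:generic-approximation} gives
\[
J(\bar\theta_T)\le J_\rho(\bar\theta_T)-\rho/2-\log(1-\rho\bar\kappa).
\]
Using $-\log(1-x)\le x/(1-x)\le \tfrac87x$ for $x\le1/8$, the extra term is at most $\tfrac{8\bar\kappa}{7}\rho$. Dropping $-\rho/2$ and setting $\rho=T^{-1/2}$ yields \eqref{eq:main-convex-rate}.
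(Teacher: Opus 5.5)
Your architecture (expected smoothness with the variance measured only at $z_\rho$, then the weighted projected SGD recursion with telescoping and Jensen, then the transfer through Proposition~\ref{prop:generic-approximation}) is the paper's, and Step~4 is correct. The gap is the smoothness constant in Steps~1--2, which is off by a factor $1/\rho$.

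You write $h_\rho''(u)=w(1-\rho w)\le1/4$, but with $w\in(0,1/\rho)$ its maximum is $1/(4\rho)$, attained at $\rho e^u=1$. The term $Kp_kw\nabla^2L_k$ also carries $w\le1/\rho$. So $G_\rho$ and the sample functions are only $C/\rho$-smooth in $\norm{\cdot}_P$. The inequality you plan to verify, $\E\norm{g(z)}_{P^{-1}}^2\le2V+4C[G_\rho(z)-G_\rho(z_\rho)]$, is then false in general. Take $K=1$, $R\equiv0$, $L\equiv B$, so $G=H=0$, $\bar\kappa=1$, $C=1/4$, $V=10$ and $s_\rho=B+\log(1-\rho)$. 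The shift $s=B+\log\rho$ is feasible once $\log(1/\rho)\le2B+1$, and there $w=1/(2\rho)$. Hence $\norm{g}^2=(1/(2\rho)-1)^2$, while the gap is below $(\log2)/\rho$. For $\rho=10^{-2}$ this is $2401$ against a right-hand side below $90$.

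Deferring to expected smoothness does not rescue the constant. In Lemma~\ref{lem:smoothness-noise} the factor $1/\rho$ enters exactly through scalar co-coercivity, $(w-w_*)^2\le D_h(a,a_*)/(2\rho)$. The correct statement is $\E_t\norm{g_t}_{P^{-1}}^2\le(4C/\rho)[G_\rho(z_t)-G_\rho(z_\rho)]+2V$. Your own explanation of the $(\rho T)^{-1}$ rate presupposes this $1/\rho$. With $\rho$-free smoothness you could fix $\eta$ and send $\rho\to0$, and the approximation term would vanish.

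The repair is local. With the correct constant, absorption in Step~3 requires $\eta\le\rho/(4C)$, not $\eta\le1/(4C)$. The theorem's schedule meets this with equality, since $\eta_T=(4C\sqrt T)^{-1}=\rho_T/(4C)$. The gap coefficient $2\eta_T-4C\eta_T^2/\rho_T$ is then exactly $\eta_T$. Telescoping gives $\E[G_\rho(\bar z_T)-G_\rho(z_\rho)]\le4CD^2/(\rho_TT)+\rho_TV/(2C)$, which is the displayed bound. Your final numbers survive only because $\eta_T$ and $\rho_T$ are coupled this way, not for the reason you give.

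A secondary point concerns how you get $\E\norm{g(z)-g(z_\rho)}_{P^{-1}}^2\le2(C/\rho)D_{G_\rho}(z,z_\rho)$. Co-coercivity of the whole sample function needs smoothness beyond $\Theta$, where $R$ and $L_k$ are not controlled. The paper avoids this by splitting $g=b+r$:
\begin{itemize}
\item $b$ is built from $w_*$ at the current $\theta$ and is bounded by $V$ using the uniform bounds together with $\E_kw_*=1$ and $\E_kw_*^2\le4\bar\kappa$;
\item co-coercivity is applied only to the globally smooth scalar $h_\rho$, combined with $D_{G_\rho}(z,z_\rho)\ge\sum_kp_k\E_kD_h(a,a_*)$.
\end{itemize}
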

Thus the softplus method attains an $O(T^{-1/2})$ objective rate for the
original LogSumExp problem.

\begin{theorem}
\label{thm:main-optimization}
Assume $0<\rho<1$, $\rho\bar\kappa\le1/8$, and that $J_\rho$ is
$\gamma$-strongly convex.  Define
\begin{equation}
 \mu=\left\{\frac{2(1+2G^2)}\gamma+16B+8\right\}^{-1}.
 \label{eq:main-generic-mu}
\end{equation}
For a constant step size
$0<\eta\le\min\{\rho/(4C),1/\mu\}$, the last iterate satisfies
\begin{equation}
 \E\norm{z_{T+1}-z_\rho}_P^2
 \le e^{-\mu\eta T}D^2+\frac{2\eta V}{\mu}.
 \label{eq:main-parameterized-rate}
\end{equation}
In particular, for $T\ge2$, set $\rho_T=8C\log T/(\mu T)$ and
$\eta_T=\rho_T/(4C)$.  If these choices satisfy the preceding conditions,
then
\begin{equation}
 \E\norm{\theta_{T+1}-\theta_{\rho_T}}^2
 \le\frac{D^2}{T^2}+\frac{4V\log T}{\mu^2T}.
 \label{eq:main-optimized-generic-rate}
\end{equation}
\end{theorem}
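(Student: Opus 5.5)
The plan is a one-step recursion for $\E\norm{z_t-z_\rho}_P^2$ from the weighted projected update \eqref{eq:main-generic-update}. Nonexpansiveness of the $P$-projection and optimality of $z_\rho$ on $\mathcal Z$ give
\[
\norm{z_{t+1}-z_\rho}_P^2\le\norm{z_t-z_\rho}_P^2-2\eta\ip{g_t}{z_t-z_\rho}+\eta^2\norm{g_t}_{P^{-1}}^2 .
\]
Taking conditional expectations and using unbiasedness of $g_t$ for $\nabla G_\rho(z_t)$ reduces the problem to two ingredients.

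\textbf{Expected smoothness.} I want
\[
\E\norm{g_t}_{P^{-1}}^2\le \frac{4C}{\rho}\bigl(G_\rho(z_t)-G_\rho(z_\rho)\bigr)+V .
\]
Write $g_t=(g_t-g^\star)+g^\star$, where $g^\star$ is the stochastic gradient evaluated at $z_\rho$. The second moment of $g^\star$ is bounded by $V/2$, using $|w|\le \rho^{-1}$, the $\bar\kappa$ moment bound (which controls $w$ at the optimal shift since $e^{s_k}\approx\E_ke^{L_k}$), and the $K p_k$ scaling against the $P^{-1}$ weights. For the difference term, each sampled component is a convex function with $P$-smoothness of order $C/\rho$. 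The softplus $h_\rho$ has second derivative at most $1/(4\rho)$, and the $KH$ and $K(G^2+1)/4$ terms in $C$ account for this. So the standard co-coercivity inequality for convex smooth functions gives $\E\norm{g_t-g^\star}^2\le (2C/\rho)\,D_{G_\rho}(z_t,z_\rho)$, and the Bregman divergence is at most the optimality gap since $z_\rho$ is constrained-optimal. This step uses convexity of $G_\rho$ jointly in $(\theta,s)$, which holds because $u\mapsto h_\rho(u)$ is convex and nondecreasing and $L_k$ is convex.

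\textbf{Contraction without strong convexity in $s$.} With $\eta\le\rho/(4C)$, the two bounds combine into
\[
\E\norm{z_{t+1}-z_\rho}_P^2\le\E\norm{z_t-z_\rho}_P^2-\eta\,\E[G_\rho(z_t)-G_\rho(z_\rho)]+\eta^2V .
\]
I then need a quadratic growth inequality
\[
G_\rho(z)-G_\rho(z_\rho)\ge\mu\norm{z-z_\rho}_P^2\qquad\text{on }\mathcal Z .
\]
I would split the gap as
\[
\bigl[G_\rho(\theta,s)-G_\rho(\theta,s^\star(\theta))\bigr]+\bigl[J_\rho(\theta)-J_\rho(\theta_\rho)\bigr].
\]
The second bracket is at least $\tfrac\gamma2\norm{\theta-\theta_\rho}^2$ by strong convexity of $J_\rho$. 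For the first bracket, $G_\rho$ is separable in $s$ for fixed $\theta$, and each one-dimensional function $s_k\mapsto s_k+\E_kh_\rho(L_k-s_k)$ has curvature $\E_k w(1-\rho w)$. This curvature is bounded below by a constant depending on $B$ on the box $[-B-1,B]$, which gives $\tfrac{c}{B}\sum_kp_k(s_k-s_k^\star(\theta))^2$. The last step is to relate $s^\star(\theta)$ to $s_\rho$: $s^\star$ is Lipschitz in $\theta$ with constant of order $G$, by the implicit function theorem applied to $\E_k[1-w]=0$. The triangle inequality then yields the stated $\mu$, in which the terms $1+2G^2$ over $\gamma$ and $16B+8$ are exactly what this argument produces.

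\textbf{Unrolling.} Plugging the growth bound into the recursion gives $a_{t+1}\le(1-\mu\eta)a_t+\eta^2V$. Unrolling yields $e^{-\mu\eta T}D^2+\eta V/\mu$, within the stated $2\eta V/\mu$. Alternatively, splitting $\eta$ across the two uses gives the factor $2$ directly. For \eqref{eq:main-optimized-generic-rate}, substitute $\eta_T=2\log T/(\mu T)$. This gives $e^{-\mu\eta_TT}=T^{-2}$ and $2\eta_TV/\mu=4V\log T/(\mu^2T)$, and projecting onto the $\theta$ block only decreases the $P$-norm.

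\textbf{Main obstacle.} I expect the quadratic growth step to be the hardest, specifically lower-bounding the shift curvature uniformly on the clipped box and controlling the Lipschitz dependence of $s^\star(\theta)$. I would first try to bound $\E_kw(1-\rho w)$ below using $|L_k-s_k|\le 2B+1$ and $\rho\le1/8$.
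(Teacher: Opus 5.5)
\textbf{Gap in the quadratic-growth step.} Your recursion skeleton is fine. Convexity and expected smoothness with $\eta\le\rho/(4C)$, followed by a growth inequality with constant $\mu$, give the $(1-\mu\eta)$ contraction and the stated unrolled and scheduled bounds. The paper uses the restricted-secant form $\ip{\nabla G_\rho(z)-\nabla G_\rho(z_\rho)}{z-z_\rho}\ge\mu\norm{z-z_\rho}_P^2$ rather than quadratic growth, but it enters the recursion the same way. The gap is in the step you flagged, and the plan you propose for it fails. With $a=L_k-s_k$, the shift curvature is $\E_k\,e^{a}/(1+\rho e^{a})^2$. On $|a|\le 2B+1$ the integrand can be as small as about $e^{-(2B+1)}$, for example with $s_k=B$ and $L_k\approx-B$. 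This is not slack in your estimate: $s_k\mapsto s_k+\E_kh_\rho(L_k-s_k)$ really has exponentially small curvature at such points of the box. A uniform lower bound over the box therefore gives $\mu$ of order $e^{-2B}$, not the stated $\{2(1+2G^2)/\gamma+16B+8\}^{-1}$. Its polynomial dependence on $B$ is exactly what working without strong convexity in the auxiliary variables is meant to achieve.

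\textbf{The missing idea.} You need to measure curvature relative to the optimal shift, not pointwise. At $s_k^\star(\theta)$, the first-order condition $\E_kw=1$ and the moment bound $\E_kw^2\le4\bar\kappa$ give $\E_kw(1-\rho w)=1-\rho\E_kw^2\ge1/2$. Lemma~\ref{lem:softplus-secant} then shows that the secant slope $[w_\rho(a)-w_\rho(a_*)]/(a-a_*)$ is at least $m_*/(1+|a-a_*|)$, which is a polynomial rather than exponential loss. Write $\phi_k(s)=s-1+\E_kh_\rho(L_k-s)$ and integrate this along $s_k$ in your decomposition. This gives $\phi_k(s_k)-\phi_k(s_k^\star(\theta))\ge (s_k-s_k^\star(\theta))^2/(8(B+1))$. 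Combined with your profile term and the Lipschitz bound $\nabla s_k^\star=\E_k[m\nabla L_k]/\E_km$ (norm at most $G$), your route then closes with a constant of the same order as the paper's, though not literally the stated $\mu$.

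\textbf{How the paper does it.} The paper avoids the implicit-function step entirely. It applies the secant lemma samplewise to $d_k=L_k(X,\theta)-L_k(X,\theta_\rho)-v_k$, using convexity of $L_k$. It obtains $\frac\gamma2\norm u^2$ from the one-sided Bregman divergence and $G_\rho\ge J_\rho$. It then recovers $\sum_kp_kv_k^2$ from $d_k$ and $G\norm u$ via $\E_km_k\ge1/2$.

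\textbf{A smaller point on co-coercivity.} The standard co-coercivity inequality is proved for functions that are convex and smooth on the whole space. Here $R$ and $L_k$ are only controlled on $\Theta$. The paper applies co-coercivity only to the globally smooth scalar $h_\rho$ and bounds the remaining hybrid term by the gradient bounds. This yields $2V$, not $V$, in \eqref{eq:expected-smoothness}, which is where the factor $2$ in $2\eta V/\mu$ comes from.
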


\begin{corollary}
\label{cor:main-generic-objective}
Let $\theta^\star$ minimize $J$ and assume that it is stationary, and define
$L_J=L_R+H+G^2$.  Under the scheduled choices of
Theorem~\ref{thm:main-optimization}, the last iterate satisfies
\begin{equation}
 \E[J(\theta_{T+1})-J(\theta^\star)]
 \le
 \frac{L_JD^2}{T^2}
 +\frac{4L_JV\log T}{\mu^2T}
 +\frac{16L_J\rho_T\bar\kappa}{7\gamma}
 =O\left(\frac{\log T}{T}\right).
 \label{eq:main-generic-objective-rate}
\end{equation}
\end{corollary}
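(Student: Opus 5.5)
The plan is to split the objective gap into a smoothness piece and an approximation piece:
\[
J(\theta_{T+1})-J(\theta^\star)=\bigl[J(\theta_{T+1})-J(\theta_{\rho_T})\bigr]+\bigl[J(\theta_{\rho_T})-J(\theta^\star)\bigr].
\]
The first bracket comes from the last-iterate bound \eqref{eq:main-optimized-generic-rate} together with smoothness of $J$. The second comes from the sandwich of Proposition~\ref{prop:generic-approximation}, converted into a parameter distance by strong convexity of $J_\rho$.

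\textbf{Step 1: smoothness of $J$ with constant $L_J=L_R+H+G^2$.} Write $\nabla^2 J=\nabla^2R+\sum_kp_k\{\E_{k,\theta}^{w}\nabla^2L_k+\operatorname{Cov}_{k,\theta}^{w}(\nabla L_k)\}$, where $\E^{w}_{k,\theta}$ is expectation under the tilted law with density proportional to $e^{L_k}$. The covariance of a vector bounded by $G$ is $\preceq G^2I$, so $\nabla^2J\preceq (L_R+H+G^2)I$. Since $\theta^\star$ is stationary,
\[
J(\theta)-J(\theta^\star)\le \ip{\nabla J(\theta^\star)}{\theta-\theta^\star}+\tfrac{L_J}{2}\norm{\theta-\theta^\star}^2=\tfrac{L_J}{2}\norm{\theta-\theta^\star}^2 .
\]
Using $\norm{a+b}^2\le2\norm a^2+2\norm b^2$ then gives
\[
J(\theta_{T+1})-J(\theta^\star)\le L_J\norm{\theta_{T+1}-\theta_{\rho_T}}^2+L_J\norm{\theta_{\rho_T}-\theta^\star}^2 .
\]
Taking expectations and applying \eqref{eq:main-optimized-generic-rate} to the first term yields exactly $L_JD^2/T^2+4L_JV\log T/(\mu^2T)$, which matches the first two terms of the claim.

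\textbf{Step 2: bound $\norm{\theta_{\rho}-\theta^\star}^2$ deterministically.} Since $\theta_\rho$ minimizes the $\gamma$-strongly convex $J_\rho$ over $\Theta$, first-order optimality gives $J_\rho(\theta^\star)-J_\rho(\theta_\rho)\ge\frac\gamma2\norm{\theta^\star-\theta_\rho}^2$. Proposition~\ref{prop:generic-approximation} gives $J_\rho\le J$ and $J_\rho\ge J+\rho/2+\log(1-\rho\bar\kappa)$. Hence
\[
J_\rho(\theta^\star)-J_\rho(\theta_\rho)\le J(\theta^\star)-J(\theta_\rho)-\rho/2-\log(1-\rho\bar\kappa)\le -\rho/2-\log(1-\rho\bar\kappa),
\]
because $\theta^\star$ minimizes $J$. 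Put $x=\rho\bar\kappa\le1/8$. By concavity, $-\log(1-x)\le x/(1-x)\le 8x/7$, so $-\rho/2-\log(1-\rho\bar\kappa)\le 8\rho\bar\kappa/7$. Therefore $\norm{\theta_\rho-\theta^\star}^2\le 16\rho\bar\kappa/(7\gamma)$, and multiplying by $L_J$ gives the third term.

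\textbf{Step 3: rate.} Because $\rho_T=8C\log T/(\mu T)=O(\log T/T)$, all three terms are $O(\log T/T)$.

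I expect the only delicate points to be the following. First, the minimizer $\theta_\rho$ in Theorem~\ref{thm:main-optimization} is the $\theta$-component of the joint minimizer of $G_\rho$ over $\mathcal Z$. This coincides with $\arg\min_\Theta J_\rho$ because the optimal shifts lie in $[-B-1,B]^K$ when $\rho\bar\kappa\le1/8$. Second, stationarity of $\theta^\star$ is used only to kill the linear term in Step 1.
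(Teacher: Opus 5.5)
Your proposal is correct and follows the paper's proof step for step. It uses the same value-sandwich-plus-strong-convexity bound $\norm{\theta_\rho-\theta^\star}^2\le16\rho\bar\kappa/(7\gamma)$, the same use of stationarity to drop the linear term in the $L_J$-smoothness bound, and the same split via $\norm{a+b}^2\le2\norm a^2+2\norm b^2$. Your tilted-covariance derivation of $L_J$ and your identification of $\theta_\rho$ with $\arg\min_\Theta J_\rho$ (via the shift box) spell out two points the paper only asserts.
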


If $R$ is $\lambda$-strongly convex, then so is $J_\rho$, and one may take
$\gamma=\lambda$.
Appendix~\ref{sec:generic} proves the two theorems and the corollary using the
expected-smoothness estimate of Lemma~\ref{lem:smoothness-noise}
and, under strong convexity, the restricted-secant inequality
of Lemma~\ref{lem:generic-rsi}.

\section{Cox regression with grouped risk sets}
\label{sec:cox-main}

We now introduce grouped Cox risk sets and 
specialize the construction of
Section~\ref{sec:optimization-main} to the resulting objective.

\subsection{Cox objective and risk-set grouping}

\paragraph{Full Cox objective.}

Consider $N$ subjects.  Subject $j$ has observed time $y_j$, event indicator
$\Delta_j\in\{0,1\}$, and covariate vector $x_j\in\R^d$.  For simplicity, we assume distinct
observed failure times.\footnote{The finite-data compression results accommodate
tied failures under the Breslow convention~\citep{breslow1974covariance}, as
does the experimental implementation (Appendix~\ref{sec:study27-protocol}).
The statistical transfer result assumes distinct failures.}
Let
$\mathcal E=\{j:\Delta_j=1\}$ and assume
$m=|\mathcal E|\ge1$. For $i\in\mathcal E$, define the risk set and its
size by
\[
 \mathcal R_i=\{j:y_j\ge y_i\},\qquad n_i=|\mathcal R_i|.
\]
For a coefficient vector $\beta$ in a compact convex set
$\mathcal B\subset\R^d$, write
\begin{equation}
 a_i(\beta)=\log\left(\frac1{n_i}\sum_{j\in\mathcal R_i}
 e^{\beta^\top x_j}\right),\qquad
 \mathcal L(\beta)=\frac\lambda2\norm\beta^2-\beta^\top\bar x_{\mathcal E}
 +\frac1m\sum_{i\in\mathcal E}a_i(\beta),
 \label{eq:main-exact-cox}
\end{equation}
where $\lambda\ge0$ is an optional ridge coefficient and
$\bar x_{\mathcal E}=m^{-1}\sum_{i\in\mathcal E}x_i$.  The factors
$1/n_i$ add only a constant to the usual negative log partial likelihood, so
they do not change its minimizer for a fixed $\lambda$ \citep{cox1972regression}.
Setting $\lambda=0$ recovers the unpenalized objective.
We call $\beta^\top x_j$ the linear predictor and $e^{\beta^\top x_j}$
the risk weight of subject $j$.

\paragraph{Compressing neighboring risk sets.}

Order failures by time and greedily partition them into consecutive groups
$\mathcal I_1,\ldots,\mathcal I_K$.  A group is extended for as long as
\begin{equation}
 \frac{\max_{i\in\mathcal I_k}n_i}{\min_{i\in\mathcal I_k}n_i}
 \le 1+\delta,
 \label{eq:main-grouping-rule}
\end{equation}
where $\delta>0$ is chosen by the user.  Equivalently, setting
$r_\delta:=\delta/(1+\delta)$ bounds the fraction of the largest risk set
that disappears within a group by $r_\delta$.  Let $m_k=|\mathcal I_k|$ and
$p_k=m_k/m$.  Conditional on group $k$, draw an event $I$ uniformly from
$\mathcal I_k$ and then a subject $J$ uniformly from $\mathcal R_I$.  We use
$\E_k$ for this two-stage expectation and set
\begin{equation}
 \Phi_k(\beta)=\log\E_k e^{\beta^\top x_J}
 =\log\left(\frac1{m_k}\sum_{i\in\mathcal I_k}e^{a_i(\beta)}\right).
 \label{eq:main-group-normalizer}
\end{equation}
Replacing the average of the $a_i$ in each group by $\Phi_k$ gives
the grouped objective
$\widetilde{\mathcal L}_\delta(\beta)
=\lambda\norm\beta^2/2-\beta^\top\bar x_{\mathcal E}
+\sum_kp_k\Phi_k(\beta)$.  The replacement is
exact for singleton groups.  More importantly, its gradient error is quadratic in the within-group risk-set change; this is formalized in
Theorem~\ref{thm:main-compression}.

For $0<\rho<1$, use the softplus function $h_\rho$ from
Section~\ref{sec:optimization-main}.  Introduce
one shift $s_k\in\R$ per group and the joint objective
\begin{equation}
 G_{\rho,\delta}(\beta,s)
 =\frac\lambda2\norm\beta^2-\beta^\top\bar x_{\mathcal E}
 +\sum_{k=1}^Kp_k\left[s_k-1+
 \E_k h_\rho(\beta^\top x_J-s_k)\right].
 \label{eq:main-joint-objective}
\end{equation}
Its profiled version is
\begin{equation}
 \mathcal L_{\rho,\delta}(\beta)
 =\min_{s\in\R^K}G_{\rho,\delta}(\beta,s).
 \label{eq:main-profile-objective}
\end{equation}
Thus $\mathcal L$, $\widetilde{\mathcal L}_\delta$, and
$\mathcal L_{\rho,\delta}$ denote the exact, grouped, and
grouped-softplus objectives, all with the same $\lambda$.
With $R(\beta)=\lambda\norm\beta^2/2-\beta^\top\bar x_{\mathcal E}$,
$L_k((I,J),\beta)=\beta^\top x_J$, and $\Theta=\mathcal B$, this is exactly
the generic construction in \eqref{eq:main-generic-surrogate} for the
two-stage group distributions defined above.
The scalar minimizers are unique, finite, and have strictly positive second derivatives.  Consequently the envelope and
implicit-function theorems justify differentiating the profile and taking the
Schur complement of the joint Hessian; details appear in
Appendix~\ref{sec:generic}.

\paragraph{Stochastic updates and cost model.}
\label{par:stochastic-updates-cost}
Assume $\norm{x_j}\le X$ and $|\beta^\top x_j|\le M$ for every subject and
$\beta\in\mathcal B$.  At iteration $t$, draw
$k_t\sim\operatorname{Unif}\{1,\ldots,K\}$, then
$i_t\sim\operatorname{Unif}(\mathcal I_{k_t})$, and finally
$j_t\sim\operatorname{Unif}(\mathcal R_{i_t})$.  With
\[
 \widehat x_{\mathcal E,t}=Kp_{k_t}x_{i_t},\qquad
 w_t=\frac1{\rho+\exp(s_{k_t,t}-\beta_t^\top x_{j_t})},
\]
we have $\E_t\widehat x_{\mathcal E,t}=\bar x_{\mathcal E}$.
One projected stochastic-gradient step is
\begin{align}
 \beta_{t+1}
 &=\proj_{\mathcal B}\!\left[
  \beta_t-\eta\{\lambda\beta_t
  +Kp_{k_t}(w_tx_{j_t}-x_{i_t})\}\right],
 \label{eq:main-beta-update}\\
 s_{k_t,t+1}
 &=\clip_{[-M-1,M]}\!\left[s_{k_t,t}-\eta K(1-w_t)\right],
 \label{eq:main-shift-update}
\end{align}
and all unsampled shifts stay fixed. The sampled Euclidean gradient of
\eqref{eq:main-joint-objective} has coefficient component
$\lambda\beta_t+Kp_{k_t}(w_tx_{j_t}-x_{i_t})$ and shift component
$g_{s,k_t}=Kp_{k_t}(1-w_t)$, with all other shift components zero.
This gradient estimator is unbiased. The weighted update divides the
sampled shift component by $p_{k_t}$, giving
\eqref{eq:main-shift-update}; see Appendix~\ref{sec:generic}.

After event times have been sorted and group boundaries stored, an iteration
uses two sampled covariate vectors and $O(d)$ arithmetic.  The optimizer state
is $O(d+K)$ beyond storage or streaming access to the data.  We assume constant
time access to a uniformly sampled member of an indexed risk set and an
inexpensive projection onto $\mathcal B$. 

\subsection{Gradient and curvature preservation}
\label{sec:compression}

We first introduce the two data quantities that govern the result.  The
normalized risk-weight second moment is
\begin{equation}
 \kappa=\sup_{\beta\in\mathcal B}\max_{i\in\mathcal E}
 \frac{n_i^{-1}\sum_{j\in\mathcal R_i}e^{2\beta^\top x_j}}
 {(n_i^{-1}\sum_{j\in\mathcal R_i}e^{\beta^\top x_j})^2}.
 \label{eq:main-kappa}
\end{equation}
Fix $q\in(0,1)$.  To control the risk weights of subjects leaving over a short
time interval, define
\begin{equation}
 L_q=1\vee\sup_{\beta\in\mathcal B}
 \max_{\substack{u,v\in\mathcal E,\ \mathcal R_v\subsetneq\mathcal R_u\\
                  n_v\ge qn_u}}
 \frac{|\mathcal R_u\setminus\mathcal R_v|^{-1}
 \sum_{j\in\mathcal R_u\setminus\mathcal R_v}e^{\beta^\top x_j}}
 {n_u^{-1}\sum_{j\in\mathcal R_u}e^{\beta^\top x_j}}.
 \label{eq:main-Lq}
\end{equation}
For fixed $\beta$ and $\mathcal R_u$, the maximum ranges over later nested
risk sets $\mathcal R_v$ that retain at least a $q$ fraction of
$\mathcal R_u$; the ratio compares the mean risk weight of the subjects who
leave between the two event times with the mean risk weight in
$\mathcal R_u$.
If the maximum has no admissible pair, it is omitted.
Both quantities are
defined independently of the grouping.  They are finite under bounded linear
predictors, but keeping them explicit replaces a worst-case exponential range
factor by the observed normalized dispersion and local risk-weight ratio.

\begin{theorem}
\label{thm:main-compression}
Suppose $r_\delta\le1-q$, $L_qr_\delta\le1/2$, and
$\rho\kappa\le1/9$.
Then, uniformly for $\beta\in\mathcal B$,
\begin{align}
 \norm{\nabla\mathcal L_{\rho,\delta}(\beta)-\nabla\mathcal L(\beta)}
 &\le X\{L_q^2r_\delta^2+4\rho\kappa\},
 \label{eq:main-gradient-bound}\\
 \nabla^2\mathcal L_{\rho,\delta}(\beta)
 &\succeq\nabla^2\mathcal L(\beta)
 -X^2\left\{\frac52L_q^2r_\delta^2+20\rho\kappa\right\}I.
 \label{eq:main-hessian-bound}
\end{align}
If $\nabla^2\mathcal L(\beta)\succeq\nu I$ on $\mathcal B$ for some
$\nu>0$, and if $\beta^\star$ and $\beta_{\rho,\delta}$ minimize
$\mathcal L$ and $\mathcal L_{\rho,\delta}$ on $\mathcal B$, respectively,
then
\begin{equation}
 \norm{\beta_{\rho,\delta}-\beta^\star}
 \le\frac X\nu\{L_q^2r_\delta^2+4\rho\kappa\}.
 \label{eq:main-minimizer-bound}
\end{equation}
\end{theorem}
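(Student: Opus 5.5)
The plan is to split both error terms through the grouped objective $\widetilde{\mathcal L}_\delta$,
\[
\nabla\mathcal L_{\rho,\delta}-\nabla\mathcal L=(\nabla\mathcal L_{\rho,\delta}-\nabla\widetilde{\mathcal L}_\delta)+(\nabla\widetilde{\mathcal L}_\delta-\nabla\mathcal L),
\]
and similarly for Hessians. The first piece is the softplus error; the second is the grouping error. Writing $\pi_i(\beta)$ for the risk-weighted distribution on $\mathcal R_i$, we have $\nabla a_i=\E_{\pi_i}x$ and $\nabla^2 a_i=\operatorname{Cov}_{\pi_i}x$. For $\Phi_k$ one gets $\nabla\Phi_k=\sum_{i\in\mathcal I_k}\omega_i\nabla a_i$ with $\omega_i\propto e^{a_i}$. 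Hence
\[
\nabla\widetilde{\mathcal L}_\delta-\nabla\mathcal L=\sum_k p_k\sum_{i\in\mathcal I_k}\left(\omega_i-\tfrac1{m_k}\right)\nabla a_i=\sum_kp_k\sum_i\left(\omega_i-\tfrac1{m_k}\right)(\nabla a_i-\nabla a_{u_k}),
\]
where $u_k$ is the earliest event in the group, so that $\mathcal R_{u_k}$ is the largest risk set. This representation is the source of the second-order claim: the weight deviation $\omega_i-1/m_k$ and the gradient spread $\nabla a_i-\nabla a_{u_k}$ should each be $O(L_q r_\delta)$.

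The key steps for the grouping error are as follows.
\begin{itemize}
\item For $v\in\mathcal I_k$, the set $\mathcal R_{u}\setminus\mathcal R_v$ has fraction at most $r_\delta\le 1-q$ of $n_u$. The pair $(u,v)$ is therefore admissible in \eqref{eq:main-Lq}.
\item By the definition of $L_q$, the departing subjects carry total weight at most $L_q r_\delta$ times the total weight of $\mathcal R_u$. Hence $e^{a_v}/e^{a_u}\in[(1-L_qr_\delta)/(1-r_\delta)\cdot\ldots]$, giving $|e^{a_v-a_u}-1|\lesssim L_qr_\delta$. The condition $L_qr_\delta\le1/2$ keeps the denominators bounded.
\item This gives $|m_k\omega_i-1|\le c\,L_qr_\delta$.
\item Since $\pi_v$ is $\pi_u$ conditioned on $\mathcal R_v$, which has $\pi_u$-mass $1-\epsilon$ with $\epsilon\le L_qr_\delta$, we get $\|\E_{\pi_v}x-\E_{\pi_u}x\|\le 2X\epsilon/(1-\epsilon)$.
\item Multiplying the two estimates gives $X L_q^2r_\delta^2$ up to constants, which I would then track to obtain exactly $1$.
\end{itemize}
For the Hessian, $\nabla^2\Phi_k=\sum\omega_i\nabla^2a_i+\operatorname{Cov}_\omega(\nabla a_i)$. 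The covariance term is PSD, so it can be dropped for a lower bound. What remains is $\sum(\omega_i-1/m_k)\nabla^2a_i$. I would rewrite this with $\nabla^2 a_i-\nabla^2a_{u}$, bounded in operator norm by $O(X^2\epsilon)$ since conditioning changes second moments by $O(\epsilon)$, yielding the $\frac52X^2L_q^2r_\delta^2$ term.

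For the softplus error I would use the generic machinery of Appendix~\ref{sec:generic}, namely the envelope theorem and the Schur complement. The gradient is $\nabla\mathcal L_{\rho,\delta}=\nabla R+\sum_kp_k\E_k[w\,x_J]$ with $w=(\rho+e^{s_k-\beta^\top x_J})^{-1}$ at the optimal shift. The optimal shift solves $\E_k w=1$. Expanding $w=e^{\beta^\top x_J-s}(1+\rho e^{\beta^\top x_J-s})^{-1}$ shows that $e^{s_k}=e^{\Phi_k}(1+O(\rho\kappa))$. The weights then equal the exact tilted weights times $1+O(\rho\kappa)$ corrections, giving $\|\E_k[wx_J]-\nabla\Phi_k\|\le 4\rho\kappa X$. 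Here $\rho\kappa\le 1/9$ is used to bound geometric series, and $\kappa$ for the group mixture is at most the per-event $\kappa$ by convexity. The Hessian follows the same pattern. The profiled Hessian is $\E_k[w(1-\rho w)xx^\top]-(\E_k[w(1-\rho w)x])(\E_k[w(1-\rho w)x])^\top/\E_k[w(1-\rho w)]$, a weighted covariance. Comparing it with $\operatorname{Cov}$ under tilted weights gives an error of $20\rho\kappa X^2$.

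The minimizer bound is standard. The Hessian bound only transfers partial curvature, so instead I would use strong convexity of $\mathcal L$. Optimality of $\beta_{\rho,\delta}$ gives $\langle\nabla\mathcal L_{\rho,\delta}(\beta_{\rho,\delta}),\beta^\star-\beta_{\rho,\delta}\rangle\ge0$, and optimality of $\beta^\star$ gives the analogous inequality. Adding these to the $\nu$-monotonicity of $\nabla\mathcal L$ yields $\nu\|\beta_{\rho,\delta}-\beta^\star\|^2\le\|\nabla\mathcal L_{\rho,\delta}-\nabla\mathcal L\|\,\|\beta_{\rho,\delta}-\beta^\star\|$ at $\beta_{\rho,\delta}$; then I would apply \eqref{eq:main-gradient-bound}. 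I expect the main obstacle to be getting the explicit constants ($1$, $4$, $5/2$, $20$) in the two expansions, particularly the softplus Hessian Schur complement. I would handle the constant bookkeeping using $\rho\kappa\le1/9$ and $L_qr_\delta\le1/2$ to bound $1/(1-x)$ factors by explicit numbers.
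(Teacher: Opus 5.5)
Your architecture matches the paper's. You split through $\widetilde{\mathcal L}_\delta$, obtain the second-order grouping error as (weight deviation)$\times$(spread of the $\nabla a_i$), use the envelope/Schur-complement formulas for softplus, and derive \eqref{eq:main-minimizer-bound} from the two variational inequalities plus $\nu$-monotonicity of $\nabla\mathcal L$. That last step is exactly the paper's argument.

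The genuine gap is in the softplus step. There you assert that the normalized second moment of the group mixture is at most the per-event $\kappa$ ``by convexity''. Jensen goes the other way. Let $A_i=e^{a_i(\beta)}$ and $B_i=n_i^{-1}\sum_{j\in\mathcal R_i}e^{2\beta^\top x_j}\le\kappa A_i^2$. Then the group moment $(m_k^{-1}\sum_iB_i)/(m_k^{-1}\sum_iA_i)^2$ is only bounded by $\kappa\,\overline{A^2}/\bar A^{2}$, and $\overline{A^2}/\bar A^{2}\ge1$. Mixing risk sets with different mean risk weights inflates the moment. It can strictly exceed $\kappa$, even for two nested risk sets that satisfy the grouping conditions.

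The missing ingredient is Lemma~\ref{lem:group-kappa-envelope}. The identity $a_u-a_v=\log\{(1-r)/(1-\alpha)\}$ together with $L_qr_\delta\le\frac12$ gives $\max_iA_i/\min_iA_i\le2$. For $y\in[a,b]$ one has $\E y^2/(\E y)^2\le(a+b)^2/(4ab)\le\frac98$. This is the actual role of the hypothesis $\rho\kappa\le\frac19$: it makes $\rho\cdot\frac98\kappa\le\frac18$, the regime in which the one-group estimates hold (e.g.\ $1<c\le(1-\rho\bar\kappa)^{-1}\le\frac87$). It also accounts for the slack in the constants, $4>3\cdot\frac98$ and $20>16\cdot\frac98$.

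Relatedly, the softplus weights are not pointwise $W\{1+O(\rho\kappa)\}$. Since $w/W=c/(1+\rho cW)$, the relative error is of order $\rho W$. So $\kappa$ enters only through the averaged estimate $\E|w-W|\le(c-1)\E W+\rho c\,\E W^2\le3\rho\bar\kappa$.

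Second, the constants $1$ and $\frac52$ in the grouping terms will not come out of the product you describe. You bound $\max_i|m_k\omega_i-1|$, which can be as large as $e^{\osc(a)}-1$, times $\max_i\norm{\nabla a_i-\nabla a_{u_k}}$, where the spread is measured from an endpoint rather than a center. That gives at best $2XL_q^2r_\delta^2/(1-L_qr_\delta)$. With your $2X\epsilon/(1-\epsilon)$ it is worse; the mixture identity $Q_u^\beta=(1-\alpha)Q_v^\beta+\alpha Q_A$ gives the sharper $2X\alpha$.

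What produces the stated constants is a sharp reweighting inequality:
\begin{itemize}
\item Interpolate $\pi_{t,i}\propto e^{ta_i}$.
\item Use $\frac{d}{dt}\E_{\pi_t}f=\operatorname{Cov}_{\pi_t}(f,a)$ and $\operatorname{Var}\le\osc^2/4$ to get $|\E_\pi f-\E_uf|\le\frac14\osc(f)\osc(a)$.
\item Apply this to the scalars $v^\top\nabla a_i$ and $v^\top\nabla^2a_iv$, using $\osc(a)\le2L_qr_\delta$, gradient diameter $2XL_qr_\delta$, and Hessian diameter $5X^2L_qr_\delta$ (from the mixture-covariance formula).
\end{itemize}
Your unspecified ``comparison with the tilted covariance'' for the softplus Hessian is cleanest via the least-squares form $v^\top\nabla^2\Phi_\rho v=\min_r\E[m(v^\top x-r)^2]$. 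Evaluate it at its own minimizer $r_m$, which lies in the range of $v^\top x$. This costs at most $4X^2\norm v^2\E|m-W|$, with $\E|m-W|\le4\rho\bar\kappa$.
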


For $\lambda>0$, the ridge term supplies the full-objective curvature
assumption with $\nu=\lambda$; when $\lambda=0$, this assumption depends on
the data and design.
The Hessian bound also shows that if the matrix error subtracted in
\eqref{eq:main-hessian-bound} is at most $\nu/2$, then
$\mathcal L_{\rho,\delta}$ is $\nu/2$-strongly convex.
The quadratic grouping error comes from multiplying the
$O(L_qr_\delta)$ change in event weights by the
$O(XL_qr_\delta)$ variation of event gradients within a group.
Softplus profiling contributes $O(\rho\kappa)$, and the Hessian
comparison retains the nonnegative covariance term.
Appendix~\ref{sec:cox} proves the gradient, curvature, and minimizer bounds.

\subsection{End-to-end computational rate}
\label{sec:cox-optimization-main}

We now combine the generic last-iterate guarantee of
Theorem~\ref{thm:main-optimization} with the Cox gradient and curvature bounds in
Theorem~\ref{thm:main-compression}.

\begin{corollary}
\label{cor:main-cox-rate}
Assume $\nabla^2\mathcal L(\beta)\succeq\nu I$ on $\mathcal B$ for some
$\nu>0$, fix $q\in(0,1)$, and suppose every event risk set contains at least
$cN$ subjects for a fixed $c\in(0,1)$.  Let the problem constants be fixed as
$T$ varies.  For all sufficiently large declared horizons $T$, use maximal
grouping and the schedule
\begin{equation}
 \delta_T=\left(\frac{\log T}{T}\right)^{1/5},\qquad
 \rho_T=\frac{8A_{K_T}\log T}{\mu T},\qquad
 \eta_T=\frac{2\log T}{\mu T},
 \label{eq:main-cox-schedule}
\end{equation}
where
$A_K=\lambda+K(X^2+1)/4$ and
$\mu=\{4(1+2X^2)/\nu+16M+8\}^{-1}$.
Initialize $(\beta_1,s_1)$ in
$\mathcal B\times[-M-1,M]^{K_T}$.  Then, conditional on the data, the last
iterate of
\eqref{eq:main-beta-update}--\eqref{eq:main-shift-update} satisfies
\begin{equation}
 \E\norm{\beta_{T+1}-\beta^\star}^2
 =O\left((\log T/T)^{4/5}\right).
 \label{eq:main-four-fifths}
\end{equation}
When $\beta^\star$ is interior, the exact-objective error has the same order,
and the auxiliary state is
$K_T=O((T/\log T)^{1/5})$ until singleton groups cause finite-data saturation.
\end{corollary}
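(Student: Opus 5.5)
Split the error by the triangle inequality,
\[
 \E\norm{\beta_{T+1}-\beta^\star}^2\le 2\,\E\norm{\beta_{T+1}-\beta_{\rho_T,\delta_T}}^2+2\norm{\beta_{\rho_T,\delta_T}-\beta^\star}^2 .
\]
The first term is the optimization error, controlled by Theorem~\ref{thm:main-optimization}. The second is the approximation bias, controlled by \eqref{eq:main-minimizer-bound} of Theorem~\ref{thm:main-compression}.

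\emph{Verifying the hypotheses.} As noted in Section~\ref{sec:cox-main}, the grouped objective is the generic construction with $L_k=\beta^\top x_J$. Hence $B=M$, $G=X$, $H=0$, $L_R=\lambda$, $G_R\le\lambda\operatorname{diam}(\mathcal B)+X$, and $\bar\kappa\le\kappa$, because a group normalizer is an average of risk-set normalizers and the ratio in $\bar\kappa$ is bounded by the worst per-risk-set ratio. The constant $C$ then equals $A_K$.

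For $T$ large we have $\delta_T\to0$ and $\rho_T\to0$. Note that $\rho_T\propto K_T\log T/T$ and $K_T=O(\delta_T^{-1}\log(1/c))$, since maximal grouping with risk sets in $[cN,N]$ multiplies $n$ by at least $1+\delta$ across consecutive groups. So $\rho_T$ vanishes and the conditions $r_\delta\le1-q$, $L_qr_\delta\le1/2$, $\rho\kappa\le1/9$ all hold eventually. The Hessian bound \eqref{eq:main-hessian-bound} then gives $\nu/2$-strong convexity of $\mathcal L_{\rho,\delta}$. With $\gamma=\nu/2$, the generic $\mu$ of \eqref{eq:main-generic-mu} becomes exactly the $\mu$ in the statement. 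One also checks that $\eta_T=2\log T/(\mu T)=\rho_T/(4A_{K_T})$ and $\eta_T\le1/\mu$, which is the scheduled choice of Theorem~\ref{thm:main-optimization}.

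\emph{Optimization term.} Applying \eqref{eq:main-parameterized-rate} gives $e^{-2\log T}D^2+4V\log T/(\mu^2T)$. Here $V=O(K_T)$ and $D^2$ is fixed (it does not depend on $K$ in the $P$-norm). So this term is $O(K_T\log T/T)=O\bigl((\log T/T)^{4/5}\bigr)$, because $K_T=O((T/\log T)^{1/5})$.

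\emph{Bias term.} Since $r_\delta\le\delta$,
\[
 \norm{\beta_{\rho_T,\delta_T}-\beta^\star}\le \frac{X}{\nu}\{L_q^2\delta_T^2+4\rho_T\kappa\}
 =O(\delta_T^2)+O(K_T\log T/T).
\]
Squaring gives $O(\delta_T^4)=O((\log T/T)^{4/5})$, and the $\rho$ contribution is of smaller order. This balance is exactly what the exponent $1/5$ achieves, since $\delta^4\asymp\delta^{-1}\log T/T$.

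\emph{Objective error and saturation.} When $\beta^\star$ is interior, $\nabla\mathcal L(\beta^\star)=0$. Smoothness of $\mathcal L$ (Hessian at most $\lambda+X^2$) then gives $\mathcal L(\beta)-\mathcal L(\beta^\star)\le\frac{\lambda+X^2}{2}\norm{\beta-\beta^\star}^2$, which has the same order. The state bound is the $K_T$ count above. Once $\delta_T$ falls below the smallest relative gap between consecutive risk sets, groups become singletons and $K=m$.

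\emph{Main obstacle.} The delicate step is the dependence of the constants on $K_T$. Both $C=A_{K_T}$ and $V$ grow linearly in $K$, so I must confirm that the scheduled $\rho_T$ still tends to zero. I must also confirm that the conditions of Theorem~\ref{thm:main-optimization} ($\rho\bar\kappa\le1/8$, $\eta\le1/\mu$) hold for large $T$. The counting bound $K_T\lesssim\log(1/c)/\log(1+\delta_T)$ settles both.
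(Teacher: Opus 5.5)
Your route is the paper's: split into the optimization error to the surrogate minimizer and the deterministic bias from \eqref{eq:main-minimizer-bound}, bound $K_T=O(\delta_T^{-1})$ by the maximal-grouping count, and balance $\delta_T^4$ against $\log T/(\delta_TT)$. This is exactly what Theorem~\ref{thm:cox-computational} and Corollary~\ref{cor:cox-four-fifths} do. However, two steps of your hypothesis check are wrong as written.

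\textbf{The bound $\bar\kappa\le\kappa$ is false.} Write $A_i=e^{a_i(\beta)}$ and $B_i=n_i^{-1}\sum_{j\in\mathcal R_i}e^{2\beta^\top x_j}\le\kappa A_i^2$. The group ratio is $(m_k^{-1}\sum_iB_i)/(m_k^{-1}\sum_iA_i)^2$. Its denominator is a squared mean, not a mean of squares, so no mediant argument applies. The per-risk-set bound only yields $\kappa\,\overline{A^2}/\bar A^{2}$, which exceeds $\kappa$ by Jensen unless the $A_i$ coincide. The paper needs Lemma~\ref{lem:group-kappa-envelope} at this point. The grouping conditions force $e^{\osc a}\le(1-L_qr_\delta)^{-1}\le2$ within each group, whence $\bar\kappa\le\frac98\kappa$. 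That is exactly why $\rho\kappa\le1/9$ delivers the generic requirement $\rho\bar\kappa\le1/8$ and bounds $V$. Since the partition changes with $T$, you need some $T$-uniform bound of this kind; for the bare $O(\cdot)$ claim even the crude $\bar\kappa\le e^{4M}$ would do.

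\textbf{Theorem~\ref{thm:main-optimization} does not cover the stated iterates.} That theorem is proved for the oracle $g_\theta=\nabla R(\theta)+Kp_kw\nabla L_k$ with the exact $\nabla R(\beta)=\lambda\beta-\bar x_{\mathcal E}$. The iteration \eqref{eq:main-beta-update} instead replaces $\bar x_{\mathcal E}$ by the sampled $Kp_{k_t}x_{i_t}$. So the theorem, together with your constant $V$ built from $G_R$, does not literally apply. You have to re-derive the expected-smoothness inequality \eqref{eq:expected-smoothness} for this oracle, as the paper does in Lemma~\ref{lem:cox-sampled-event-noise}. The extra sampled-event term is affine in $\beta$, so the smoothness and Bregman parts of the argument are unchanged. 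The noise at $z_\rho$ grows only by $O(K_TX^2)$, because $\E\norm{Kp_kx_I}^2\le K_TX^2\sum_kp_k^2$. This still gives $V_T=O(K_T)$, so your rate survives once the bound is redone. Separately, even for the exact oracle, $\norm{\nabla R}\le\lambda\sup_{\mathcal B}\norm\beta+X$, not $\lambda\operatorname{diam}(\mathcal B)+X$.
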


Indeed, maximal grouping gives $K_T=O(\delta_T^{-1})$, so the stochastic
error is $O(\log T/(\delta_TT))$, whereas the squared grouping bias is
$O(\delta_T^4)$.  Balancing these terms gives \eqref{eq:main-four-fifths};
Appendix~\ref{sec:cox} gives the exact finite-time bound and all smallness
conditions.

\subsection{First-order statistical efficiency}
\label{sec:statistics-main}

We now let the data set vary with sample size $N$ and set $\lambda=0$ in
all objectives on a fixed compact convex set $\mathcal B$.  Fix
$q\in(0,1)$ and assume $\max_{1\le j\le N}\norm{x_j}\le X$ uniformly in
$N$.  Let $\widehat\beta_N$ minimize the full empirical objective
$\mathcal L_N$, and let $\widehat\beta_N^{\rho,\delta}$ minimize its
grouped-softplus counterpart.  Classical conditions for the Cox model yield
asymptotic normality of the maximum partial-likelihood estimator
\citep{tsiatis1981large,andersen1982cox}.  We take the asymptotic normality of
the full Cox estimator as given and isolate the additional conditions under
which it transfers to the grouped-softplus estimator.

\begin{theorem}
\label{thm:main-statistics}
Suppose $\sqrt N(\widehat\beta_N-\beta_0)
\Longrightarrow\mathcal N(0,\mathcal I(\beta_0)^{-1})$ for a
positive-definite information matrix $\mathcal I(\beta_0)$.
Assume that, with probability tending to one, $\widehat\beta_N$ and
$\widehat\beta_N^{\rho,\delta}$ lie in a fixed convex neighborhood
$U\subset\mathcal B$ of $\beta_0$ and
$\nabla^2\mathcal L_N(\beta)\succeq\nu_0I$ on $U$ for a fixed $\nu_0>0$.
Let $L_{q,N}$ and $\kappa_N$ denote \eqref{eq:main-Lq} and
\eqref{eq:main-kappa} for the $N$-subject data set.  If
\begin{equation}
 \sqrt N\{L_{q,N}^2\delta_N^2+\rho_N\kappa_N\}
 \xrightarrow{p}0,
 \label{eq:main-statistical-condition}
\end{equation}
then
\[
 \sqrt N\norm{\widehat\beta_N^{\rho,\delta}-\widehat\beta_N}
 \xrightarrow{p}0,
 \qquad
 \sqrt N(\widehat\beta_N^{\rho,\delta}-\beta_0)
 \Longrightarrow\mathcal N(0,\mathcal I(\beta_0)^{-1}).
\]
In particular, if $L_{q,N}=O_p(1)$ and $\kappa_N=O_p(1)$, then for any
$\ell_N\to\infty$ one may choose
\[
 \delta_N=N^{-1/4}/\ell_N,\qquad
 \rho_N=N^{-1/2}/\ell_N.
\]
If every event risk set contains at least $cN$ subjects with probability
tending to one for some fixed $c\in(0,1)$, and if
$\ell_N=o(N^{3/4})$, maximal grouping uses
$K_N=O_p(N^{1/4}\ell_N)=o_p(N)$ shifts.
\end{theorem}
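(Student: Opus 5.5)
The plan is a localized version of the minimizer bound \eqref{eq:main-minimizer-bound} from Theorem~\ref{thm:main-compression}, followed by Slutsky's lemma. Work on the event $\Omega_N$ on which $\widehat\beta_N$ and $\widehat\beta_N^{\rho,\delta}$ both lie in $U$ and $\nabla^2\mathcal L_N\succeq\nu_0I$ on $U$. By assumption, $P(\Omega_N)\to1$.

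\textbf{Step 1: small parameters.} Since $\sqrt N\{L_{q,N}^2\delta_N^2+\rho_N\kappa_N\}\to0$ in probability, the bracket itself tends to zero in probability. Because $r_\delta\le\delta$, with probability tending to one we have $r_{\delta_N}\le1-q$, $L_{q,N}r_{\delta_N}\le1/2$, and $\rho_N\kappa_N\le1/9$. We add this to $\Omega_N$, so that Theorem~\ref{thm:main-compression} applies on $\Omega_N$ with $\lambda=0$. One point needs care: $L_{q,N}\delta_N\to0$ follows from $L_{q,N}^2\delta_N^2\to0$, which the condition gives.

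\textbf{Step 2: localized bound.} Theorem~\ref{thm:main-compression} assumes curvature on all of $\mathcal B$, but here we only have it on $U$, so we redo its short argument. Set $\Delta=\widehat\beta_N^{\rho,\delta}-\widehat\beta_N$ and $\nu_1=\nu_0/2$. By \eqref{eq:main-hessian-bound}, on $\Omega_N$ (after shrinking the parameters so that the subtracted term is at most $\nu_0/2$) we have $\nabla^2\mathcal L_{\rho,\delta}\succeq\nu_1 I$ on $U$.

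The two minimizers lie in $\mathcal B$, possibly on its boundary, so we use first-order optimality. We have $\ip{\nabla\mathcal L_N(\widehat\beta_N)}{\Delta}\ge0$ and $\ip{\nabla\mathcal L_{\rho,\delta}(\widehat\beta_N^{\rho,\delta})}{-\Delta}\ge0$. Because $U$ is convex, the segment between the two points stays in $U$. Integrating the Hessian of $\mathcal L_N$ along this segment gives
\[
\nu_0\norm\Delta^2\le\ip{\nabla\mathcal L_N(\widehat\beta_N^{\rho,\delta})-\nabla\mathcal L_N(\widehat\beta_N)}{\Delta}\le\ip{\nabla\mathcal L_N(\widehat\beta_N^{\rho,\delta})-\nabla\mathcal L_{\rho,\delta}(\widehat\beta_N^{\rho,\delta})}{\Delta}.
\]
Applying \eqref{eq:main-gradient-bound} then yields
\[
\norm\Delta\le \frac X{\nu_0}\{L_{q,N}^2r_{\delta_N}^2+4\rho_N\kappa_N\}.
\]
This version needs only the curvature of the full objective, so the surrogate Hessian bound is not even required. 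Multiplying by $\sqrt N$ and using $r_\delta\le\delta$ together with the hypothesis gives $\sqrt N\norm\Delta\to0$ in probability, because $P(\Omega_N^c)\to0$.

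\textbf{Step 3: distributional limit.} Write $\sqrt N(\widehat\beta_N^{\rho,\delta}-\beta_0)=\sqrt N(\widehat\beta_N-\beta_0)+\sqrt N\Delta$. Slutsky's lemma then gives the same normal limit.

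\textbf{Step 4: the ``in particular'' claims.} With $\delta_N=N^{-1/4}/\ell_N$ and $\rho_N=N^{-1/2}/\ell_N$, we get
\[
\sqrt N\{L_{q,N}^2\delta_N^2+\rho_N\kappa_N\}=O_p(\ell_N^{-2}+\ell_N^{-1})\to0.
\]
For the count $K_N$: maximal (greedy) grouping closes a group only when the ratio $n_{\max}/n_{\min}$ would exceed $1+\delta$. Consecutive closed groups therefore shrink the risk-set size by a factor of at least $1+\delta$ every group, possibly every two groups depending on the boundary convention. Since sizes range from $N$ down to $cN$, we get $K_N\le 2\log(1/c)/\log(1+\delta_N)+1=O(\delta_N^{-1})=O(N^{1/4}\ell_N)$. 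The condition $\ell_N=o(N^{3/4})$ makes this $o(N)$.

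\textbf{Main obstacle.} The delicate step is the localization. Theorem~\ref{thm:main-compression} is stated with global curvature on $\mathcal B$, but here curvature holds only on $U$ with probability tending to one. Steps 1 and 2 handle this by keeping all arguments on the segment inside $U$ and controlling the smallness conditions only in probability. The greedy-grouping count also needs care with the boundary convention.
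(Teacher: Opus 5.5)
Your proposal is correct and follows essentially the same route as the paper: on the event where the smallness conditions hold and both estimators lie in $U$, you combine the $\nu_0$-curvature of $\mathcal L_N$ along the segment with the two variational inequalities and the uniform gradient bound \eqref{eq:main-gradient-bound}, then apply Slutsky and bound $K_N$ with the greedy-grouping count. As you note, the surrogate-curvature detour in Step 2 is unnecessary, and the ``every two groups'' hedge can be dropped because maximality already gives $n_k^{\rm start}/n_{k+1}^{\rm start}>1+\delta_N$ at every group boundary.
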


The proof in Appendix~\ref{sec:statistics} combines the uniform gradient
bound in Theorem~\ref{thm:main-compression} with local strong convexity
and Slutsky's theorem.

\paragraph{Numerical accuracy.}
Suppose that the assumptions of Corollary~\ref{cor:main-cox-rate} hold with constants
uniform in $N$: the curvature and minimum risk-set fraction are bounded away
from zero, while $X$, $M$, $L_{q,N}$, and $\kappa_N$ are bounded above.  Then
a run of $T_N$ iterations has the same first-order limit as the full empirical
Cox estimator whenever
\begin{equation}
 N\left(\frac{\log T_N}{T_N}\right)^{4/5}\longrightarrow0,
 \label{eq:main-end-to-end-statistics}
\end{equation}
by Markov's inequality and \eqref{eq:main-four-fifths}.  The precise joint
numerical and statistical conditions are recorded in
Appendix~\ref{sec:statistics}.

\section{Experiments}
\label{sec:experiments}
\label{sec:study27-experiments}

The code for the experiments is publicly available at
\url{https://github.com/elizkaveta/Cox-Regression-Analysis/}.

\subsection{Benchmark protocol}
\label{sec:study27-benchmark}
We compare our method with Batch LSE, Minibatch Cox \citep{zeng2026minibatch}, BigSurv
\citep{rhoexpTarkhanSimon2020}, and Cox-CC \citep{kvamme2019time}
on SUPPORT2~\citep{rhoexpSupportData},
NWTCO~\citep{rhoexpNwtcoData}, and three synthetic datasets
(Table~\ref{tab:study27-data}). All methods share a stratified
60/20/20 train/validation/test split on each dataset, with
preprocessing fitted on training data and Breslow handling of ties~\citep{breslow1974covariance}.

\input{tables/data.tex}

All methods initialize the coefficient vector at $\beta_0=0$
and include the ridge penalty $\lambda\|\beta\|_2^2/2$,
with $\lambda=10^{-3}$.
Our method averages mini-batches of $b$ independent stochastic
gradient samples; although the theory is stated for one oracle sample per iteration, such averaging preserves unbiasedness and the moment bounds used in the analysis.
We use the decreasing step size
$\eta_t=\eta_0(1+t/1000)^{-1/2}$
and initialize the auxiliary shifts at
$s_{k,0}=\log(1-\rho)$.
We fix $\rho=10^{-4}$ and $\delta=0.05$. The batch size $b$ and initial step size $\eta_0$ are selected by validation.
At each saved checkpoint after initialization,
optimization and validation curves for our method are evaluated at
$\bar\beta_t=t^{-1}\sum_{r=1}^{t}\beta_r$,
the arithmetic mean of the projected post-update iterates.

For each method and dataset, we evaluate twelve configurations with two
tuning seeds. We select the configuration with the smallest
mean of the two seeds' minimum saved validation Cox losses and
run it with ten optimizer seeds. All methods receive the same
per-fit vector budgets:
$U_{\rm HPO}=\max\{3\cdot10^6,100N\}$ for tuning and
$U_{\rm final}=1.5U_{\rm HPO}$ for final runs,
where $N$ is the total number of subjects.

We measure training accuracy by the regularized full-Cox gap
$\mathcal L(\beta)-\mathcal L(\beta_{\rm ref})$ to a numerical
reference. Test Cox loss and Harrell's C-index are evaluated at
validation-selected checkpoints; predictive Cox loss is unpenalized.
Appendix~\ref{sec:study27-protocol} details the baseline objectives,
vector-work accounting, preprocessing, and selection and evaluation
history.
\subsection{Optimization and prediction}
\label{sec:study27-results}

Our method achieves the lowest median terminal regularized full-Cox training gap on all five datasets at matched vector-work budgets
(Figure~\ref{fig:study27-main}). BigSurv is the closest baseline, with
median terminal gaps 1.22--26.15 times ours
(Table~\ref{tab:study27-terminal}).

\begin{figure}[t]
\centering
\includegraphics[width=\linewidth]{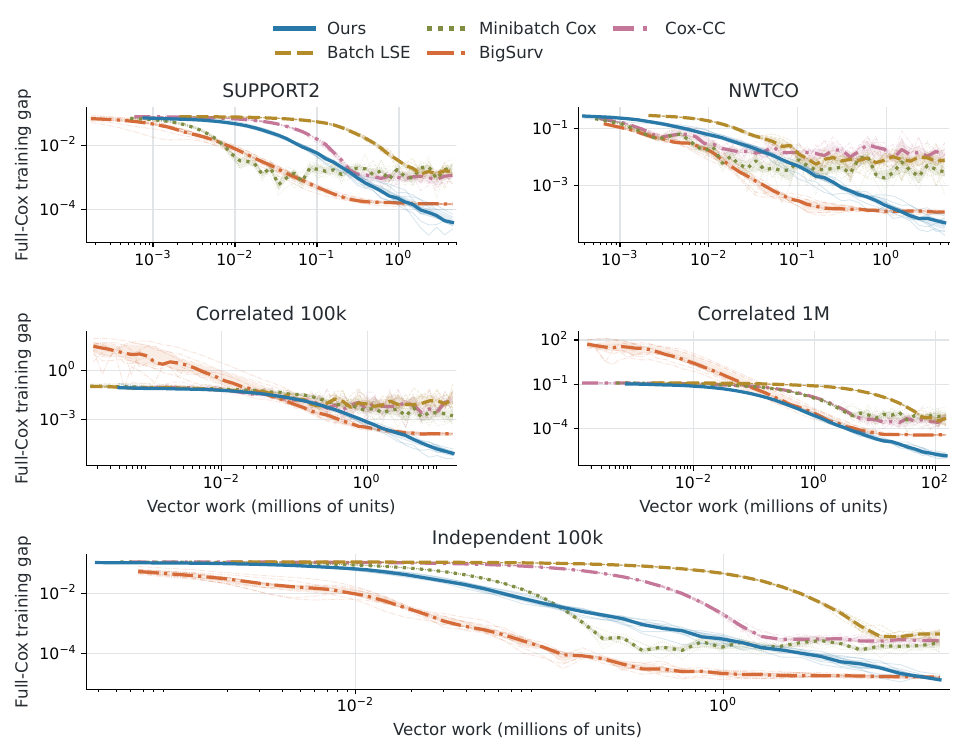}
\caption{Regularized full-Cox training gap against vector work. Each
dataset has the same cap for all five methods. Thin lines show ten
optimizer seeds, thick lines their medians, and bands their interquartile
ranges. Ours and BigSurv use running arithmetic averages of post-update
coefficient iterates. The other methods
use current coefficient iterates.}
\label{fig:study27-main}
\end{figure}

At $\delta=0.05$, grouping reduces the number of auxiliary shifts by
factors of 22.8--337.8 relative to one shift per training event
(Table~\ref{tab:study27-data}).

Additionally, to isolate the two approximation mechanisms in
Theorem~\ref{thm:main-compression}, we measure at the full-Cox reference
$\beta_{\rm ref}$
\[
 e_{\rm group}(\delta)
 =\|\nabla\widetilde{\mathcal L}_\delta(\beta_{\rm ref})
       -\nabla\mathcal L(\beta_{\rm ref})\|,
 \qquad
 e_{\rm soft}(\rho)
 =\|\nabla\mathcal L_{\rho,0.05}(\beta_{\rm ref})
       -\nabla\widetilde{\mathcal L}_{0.05}(\beta_{\rm ref})\|.
\]
Figure~\ref{fig:study27-diagnostics} examines the quadratic grouping and
linear softplus terms in~\eqref{eq:main-gradient-bound}.  Softplus error is
nearly linear in $\rho$ across all datasets.  The grouping slopes are
1.37--1.87; because the partition changes discretely with $\delta$, the
$O(\delta^2)$ bound does not imply an exact finite-grid slope of two.

\begin{figure}[t]
\centering
\includegraphics[width=\linewidth]{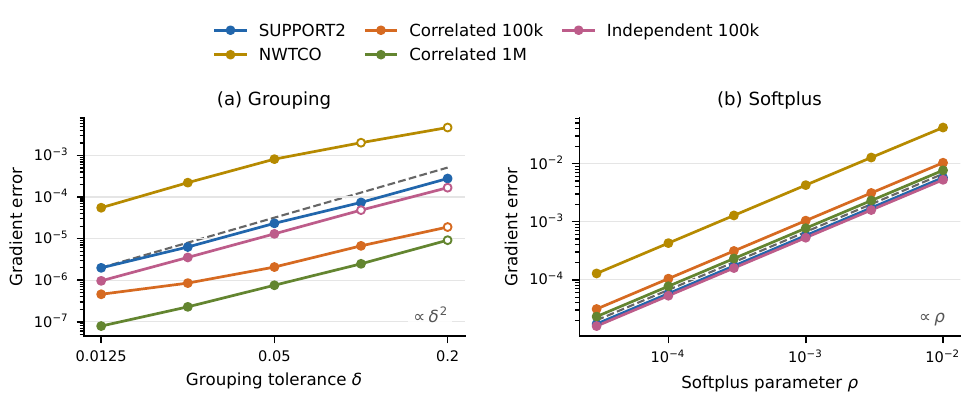}
\caption{Coefficient-gradient approximation errors at $\beta_{\rm ref}$.
The left panel isolates risk-set grouping using exact group normalizers; the
right isolates softplus profiling at $\delta=0.05$.  Filled markers satisfy
the measured pointwise conditions of Theorem~\ref{thm:main-compression};
hollow markers violate at least one applicable condition.  Dashed lines are
proportionality guides with arbitrary vertical offsets.}
\label{fig:study27-diagnostics}
\end{figure}

Our method has the lowest mean test Cox loss on NWTCO, Correlated 100k,
and Independent 100k. Minibatch Cox has the lowest mean on SUPPORT2,
and BigSurv on Correlated 1M. Full metrics and trajectories are reported
in Appendices~\ref{sec:study27-additional-results}
and~\ref{sec:study27-figures}.

Appendix~\ref{sec:study27-diagnostics} gives the parameter grids, diagnostics
at additional coefficient vectors, fitted slopes, numerical profiling
details, and pointwise checks of the sufficient conditions.

\FloatBarrier
\section{Limitations and conclusion}
\label{sec:conclusion}

Risk-set grouping converts the full Cox objective into a stochastic problem
with $K$ shared normalizers while preserving its gradient to second order in
the grouping tolerance. Under the stated approximation and curvature
conditions, this yields the $\widetilde O(T^{-4/5})$ computational rate.
Under the additional statistical conditions and parameter schedules,
the compressed estimator is first-order equivalent to the full Cox
estimator, with sublinear auxiliary state under the stated risk-set
size condition. The method is intended for stochastic-access regimes;
the experiments do not establish an end-to-end runtime advantage over
optimized cumulative-sum full-Cox solvers.

The analysis assumes a convex linear predictor and the stated
boundedness and curvature conditions. The constants $L_q$ and $\kappa$
can be large under adverse risk-weight heterogeneity. The implementation
handles equal failure times with the Breslow convention, whereas the
main statistical transfer result is stated for distinct failures.
Tied-time asymptotics, time-varying covariates, and nonlinear predictors
remain outside its scope.

At matched vector-work budgets, our method achieves the smallest median
terminal regularized full-Cox training gap on all five datasets. Together,
the theory and experiments establish a stochastic approach to full-Cox
optimization that combines explicit approximation guarantees with compact
auxiliary state.

\FloatBarrier

\subsection*{Reproducibility statement}
The appendix provides complete proofs and a detailed account of the
experimental setup, implementation, tuning, and evaluation procedures.
The accompanying source code is available at
\url{https://github.com/elizkaveta/Cox-Regression-Analysis/}.

\subsection*{AI use statement}
Generative AI assisted with language editing, literature searches, and proof
development.  All AI-assisted mathematical content was independently
verified, and the authors take full responsibility for the manuscript.

\bibliography{main}
\bibliographystyle{iclr2027/iclr2027_conference}

\clearpage
\appendix

\section{Generic grouped LogSumExp optimization}
\label{sec:generic}

This section proves Theorems~\ref{thm:main-convex}
and~\ref{thm:main-optimization}.  It first establishes the approximation,
optimum-noise, and profile-to-joint curvature lemmas, then derives the convex
averaged bound and the strongly convex last-iterate recursion.

\subsection{Softplus identities and profile regularity}

We use the assumptions and objectives of Section~\ref{sec:optimization-main}.
Write $F_k(\theta):=\log\E_k e^{L_k(X,\theta)}$ for one group's
LogSumExp term.  The scalar weight and curvature identities needed below are
\[
 w_\rho(a):=h_\rho'(a)=\frac{e^a}{1+\rho e^a},
 \qquad
 h_\rho''(a)=w_\rho(a)[1-\rho w_\rho(a)]
 \in\left(0,\frac1{4\rho}\right].
\]
For $0<\rho<1$, each scalar objective tends to $+\infty$ at both ends.  As
$s_k\to+\infty$, its leading term is $s_k$, while as
$s_k\to-\infty$, its leading term is
$(1-\rho^{-1})s_k$, both tend to $+\infty$.  Thus a scalar minimizer
exists, and it is unique because the second derivative with respect to
$s_k$ is positive.  Boundedness of the losses permits differentiation
under the expectation.  The minimizer is interior and its scalar second
derivative is nonzero, so the implicit-function theorem makes the optimal
shift differentiable in $\theta$, the envelope theorem therefore justifies
the profile gradients used below.

\begin{proposition}
\label{prop:generic-approximation}
Suppose $\rho\bar\kappa<1$.  Then, for every $\theta\in\Theta$,
\begin{equation}
 J(\theta)+\frac\rho2+\log(1-\rho\bar\kappa)
 \le J_\rho(\theta)\le J(\theta).
 \label{eq:generic-approximation}
\end{equation}
If $s_{k,\rho}(\theta)$ is the optimal shift for group $k$, then
\begin{equation}
 F_k(\theta)+\log(1-\rho\bar\kappa)
 \le s_{k,\rho}(\theta)<F_k(\theta).
 \label{eq:generic-shift-location}
\end{equation}
\end{proposition}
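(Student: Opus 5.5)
The plan is to work group by group: both $J$ and $J_\rho$ share $R(\theta)$, and $J_\rho$ minimizes separately over each scalar shift. So it suffices to fix $\theta$ and $k$ and study the scalar function
\[
 \phi(s)=s-1+\E_k h_\rho(L_k(X,\theta)-s),
\]
with $Z=\E_k e^{L_k}$ and $F_k=\log Z$. I will show $F_k+\rho/2+\log(1-\rho\bar\kappa)\le\min_s\phi\le F_k$. Weighting by $p_k$ and summing over $k$ then gives \eqref{eq:generic-approximation}. The shift location \eqref{eq:generic-shift-location} comes out along the way.

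\emph{Upper bound.} Since $\log(1+x)\le x$, we have $h_\rho(u)\le e^u$, so $\phi(s)\le s-1+e^{-s}Z$. Evaluating at $s=F_k$ gives $\min\phi\le F_k$.

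\emph{Shift location.} The minimizer $s^\ast=s_{k,\rho}(\theta)$ exists and is unique, as shown above. It satisfies the first-order condition $\E_k w_\rho(L_k-s^\ast)=1$, where $w_\rho(a)=e^a/(1+\rho e^a)<e^a$. Hence $1<e^{-s^\ast}Z$, which is $s^\ast<F_k$.

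For the lower bound on $s^\ast$, put $t=e^{-s^\ast}Z$ and $Y=e^{L_k}/Z$. Let $\nu$ be the tilted law with $d\nu/d\mu_k=e^{L_k}/Z$. The optimality condition then reads
\[
 t\,\E_\nu\Bigl[\frac1{1+\rho tY}\Bigr]=1 .
\]
The map $y\mapsto 1/(1+\rho t y)$ is convex, so Jensen's inequality gives $1\ge t/(1+\rho t\,\E_\nu Y)$. Moreover $\E_\nu Y=\E_k e^{2L_k}/Z^2\le\bar\kappa$. Combining, $t\le 1+\rho\bar\kappa t$, i.e. $t(1-\rho\bar\kappa)\le1$, which is exactly $s^\ast\ge F_k+\log(1-\rho\bar\kappa)$. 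This Jensen step under the tilted measure is the main point to get right: cruder expansions such as $w_\rho(a)\ge e^a-\rho e^{2a}$ only give a quadratic inequality in $t$, which is weaker than $\log(1-\rho\bar\kappa)$.

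\emph{Lower bound on the value.} Expanding $h_\rho$ in $u$ is not enough here: the term $-\rho\bar\kappa t^2/2$ it produces spoils the constant. Instead I will bound $h_\rho$ by the weight itself. Start from $-\log(1-v)\ge v+v^2/2$ with $v=x/(1+x)$. This gives
\[
 \log(1+x)\ge \frac{x}{1+x}+\frac12\Bigl(\frac{x}{1+x}\Bigr)^2 .
\]
With $x=\rho e^a$ this becomes $h_\rho(a)\ge w_\rho(a)+\rho w_\rho(a)^2/2$. Taking expectations at $s^\ast$ and using $\E_k w_\rho=1$ together with $\E_k w_\rho^2\ge(\E_k w_\rho)^2=1$, we get $\E_k h_\rho(L_k-s^\ast)\ge1+\rho/2$. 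Therefore $\phi(s^\ast)\ge s^\ast+\rho/2$. Inserting the shift lower bound from the previous step gives $\phi(s^\ast)\ge F_k+\log(1-\rho\bar\kappa)+\rho/2$, as required.
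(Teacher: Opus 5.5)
Your proof is correct. It has the same outer structure as the paper's: the minimization over $s$ separates by group, each scalar profile is compared with $F_k$, and the value bounds are averaged with weights $p_k$. The difference is that the paper gets the one-group bounds by invoking the softplus approximation and optimal-shift results of \citet{gladin2025improved}, after noting that $\E_k e^{2[L_k-F_k]}\le\bar\kappa$, whereas you derive them from scratch. Your three ingredients all check out.
\begin{itemize}
\item Evaluating $h_\rho(u)\le e^u$ at $s=F_k$ gives the upper bound.
\item Rewrite the first-order condition $\E_k w_\rho(L_k-s^\ast)=1$ under the tilted law as $t\,\E_\nu[(1+\rho tY)^{-1}]=1$. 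Jensen applied to the convex map $y\mapsto(1+\rho ty)^{-1}$, with $\E_\nu Y=\E_kY^2\le\bar\kappa$, then gives $t(1-\rho\bar\kappa)\le1$.
\item The pointwise bound $h_\rho\ge w_\rho+\tfrac\rho2 w_\rho^2$, together with $\E_k w_\rho^2\ge(\E_k w_\rho)^2=1$, gives $\phi(s^\ast)\ge s^\ast+\rho/2$.
\end{itemize}
Existence of $s^\ast$ is also fine, since $\bar\kappa\ge1$ forces $\rho<1$.

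What your route buys is a self-contained, checkable argument. It shows that the $\log(1-\rho\bar\kappa)$ term in the value bound comes entirely from the shift bound. It also exhibits directly the facts $1<e^{F_k-s_{k,\rho}}\le(1-\rho\bar\kappa)^{-1}$ and $\E_k w_{k,*}=1$, which Lemmas~\ref{lem:smoothness-noise} and~\ref{lem:softplus-gradient-curvature} later rely on. The paper's citation is shorter and credits the source, but it leaves implicit the check that the cited hypotheses and constants match the present normalization.
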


\begin{proof}
For each group, apply the softplus approximation and optimal-shift bounds of
\citet{gladin2025improved} with $\varphi=L_k(X,\theta)$.  Their normalized
second moment is $\E_k e^{2[L_k-F_k]}\le\bar\kappa$, so the groupwise
profile lies between $F_k+\rho/2+\log(1-\rho\bar\kappa)$ and $F_k$, and its
optimal shift lies in the interval~\eqref{eq:generic-shift-location}.
Weighting the value bounds by $p_k$ proves~\eqref{eq:generic-approximation}.
\end{proof}

\subsection{Weighted stochastic gradient method}

Write $\mathcal A=[-B-1,B]^K$ for the shift component of the feasible set
$\mathcal Z$ in Section~\ref{sec:optimization-main}.  We use the weighted
norm, stochastic gradient, projected update, and constants $C,V,D$ defined
there.  For a sampled pair $(k,X)$, the function differentiated below is
\[
 R(\theta)+Kp_k
 \left[s_k-1+h_\rho(L_k(X,\theta)-s_k)\right].
\]

\begin{lemma}
\label{lem:smoothness-noise}
Every sample function is convex and $C/\rho$-smooth in the weighted norm.
If $\rho\bar\kappa\le1/8$, $z_\rho$ minimizes $G_\rho$, and
$g(z_\rho)$ is a stochastic gradient at that point, then
\[
 \E\norm{g(z_\rho)}_{P^{-1}}^2\le V.
\]
Consequently, at every iterate,
\begin{equation}
 \E_t\norm{g_t}_{P^{-1}}^2
 \le\frac{4C}{\rho}
 [G_\rho(z_t)-G_\rho(z_\rho)]+2V.
 \label{eq:expected-smoothness}
\end{equation}
\end{lemma}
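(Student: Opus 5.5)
We prove the three claims in order: convexity and smoothness of each sample function, the bound on the noise at the optimum, and then expected smoothness by the standard co-coercivity argument.

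\emph{Convexity and smoothness.} Fix a sample $(k,X)$ and write $a=L_k(X,\theta)-s_k$. The sample function is $R(\theta)+Kp_k[s_k-1+h_\rho(a)]$. Composing the convex nondecreasing $h_\rho$ with the convex map $(\theta,s)\mapsto L_k(X,\theta)-s_k$ gives a convex function, and $R$ is convex, so the sample function is convex. Its Hessian is
\[
\nabla^2R\oplus0+Kp_k\{h_\rho''(a)\,vv^\top+h_\rho'(a)\,\nabla^2L_k\oplus0\},\qquad v=(\nabla L_k,-e_k).
\]
To get smoothness in the weighted norm we bound $P^{-1/2}\nabla^2P^{-1/2}$, using four facts:
\begin{itemize}
\item $h_\rho''\le 1/(4\rho)$;
\item $h_\rho'=w\le1/\rho$;
\item $\nabla^2L_k\preceq HI$;
\item $\norm{P^{-1/2}v}^2=\norm{\nabla L_k}^2+1/p_k$.
\end{itemize}
Together these give
\[
\lambda_{\max}\le L_R+\frac{KH}{\rho}+\frac{Kp_k}{4\rho}\left(G^2+\frac1{p_k}\right).
\]
With $p_k\le1$ and $\rho<1$, this is at most $C/\rho$. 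The bookkeeping of the $1/p_k$ factor against $p_k$ is the only delicate point.

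\emph{Noise at the optimum.} At $z_\rho$ we have $g_\theta=\nabla R+Kp_kw\nabla L_k$, so
\[
\norm{g_\theta}^2\le2G_R^2+2K^2p_k^2w^2G^2 .
\]
The shift coordinate contributes $g_{s,k}^2/p_k=K^2p_k(1-w)^2$. Taking expectation over $k$ uniform, $\E_k\,K(\cdot)$ becomes $\sum_kp_k(\cdot)$ after one factor of $K$ cancels. The expected value is therefore bounded by
\[
2G_R^2+2K\sum_kp_k^2\E_kw^2G^2+K\sum_kp_k\E_k(1-w)^2 .
\]
Using $(1-w)^2\le2+2w^2$ and $p_k\le1$, everything reduces to bounding $\E_kw^2$ at the optimal shift. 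Since $w\le e^{L_k-s_k}$, we get
\[
\E_kw^2\le e^{-2s_k}\E_ke^{2L_k}\le\bar\kappa\,e^{2(F_k-s_k)}.
\]
Proposition~\ref{prop:generic-approximation} gives $F_k-s_{k,\rho}\le-\log(1-\rho\bar\kappa)\le\log(8/7)$, hence $e^{2(F_k-s_k)}\le64/49<2$. So $\E_kw^2\le2\bar\kappa$ up to this constant, and summing yields at most
\[
2G_R^2+2K+8K\bar\kappa(G^2+1)=V.
\]
This step is where the constant must be checked. We also need that the constrained minimizer over $\mathcal Z$ has the unconstrained optimal shifts; this holds because they lie in $[-B-1,B]$ by the remark after Proposition~\ref{prop:generic-approximation}.

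\emph{Expected smoothness.} Write $\norm{g_t}^2\le2\norm{g(z_t)-g(z_\rho)}^2+2\norm{g(z_\rho)}^2$ in the dual norm. For a convex $C/\rho$-smooth sample function $f_\xi$, co-coercivity gives
\[
\norm{\nabla f_\xi(z)-\nabla f_\xi(z_\rho)}_{P^{-1}}^2\le\frac{2C}{\rho}\left[f_\xi(z)-f_\xi(z_\rho)-\ip{\nabla f_\xi(z_\rho)}{z-z_\rho}\right].
\]
Averaging over $\xi$ turns the bracket into $G_\rho(z_t)-G_\rho(z_\rho)-\ip{\nabla G_\rho(z_\rho)}{z_t-z_\rho}$. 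The inner-product term is nonnegative by optimality of $z_\rho$ over the convex set $\mathcal Z$, so it can be dropped, which yields \eqref{eq:expected-smoothness}.

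One caveat: co-coercivity needs $f_\xi$ to be smooth on the whole space, not just on $\mathcal Z$. This holds here, since the Hessian bound above is valid for all $s$, and we take $\theta$ on a region where the assumed bounds on $R$ and $L_k$ hold.
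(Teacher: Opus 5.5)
Your convexity and weighted-smoothness computation and your bound on the noise at $z_\rho$ are correct and essentially coincide with the paper's. Your use of $(1-w)^2\le2+2w^2$ instead of $\E_kw_*=1$ still fits inside $V$, and you rightly check that the constrained optimal shifts are the unconstrained ones.

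The gap is in the expected-smoothness step. Co-coercivity, $\norm{\nabla f_\xi(z)-\nabla f_\xi(z_\rho)}_{P^{-1}}^2\le(2C/\rho)\,D_{f_\xi}(z,z_\rho)$, does not follow from convexity plus a Hessian bound on a subset. Its standard proof evaluates $f_\xi$ at $z-(\rho/C)P^{-1}[\nabla f_\xi(z)-\nabla f_\xi(z_\rho)]$, whose $\theta$-component can leave $\Theta$. Moreover, a convex function whose gradient is Lipschitz only on a convex set need not satisfy the inequality there. On a set, the inequality is equivalent to the existence of a convex extension to the whole space with the same smoothness constant, and such an extension can fail to exist. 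In Section~\ref{sec:optimization-main}, every hypothesis on $R$ and $L_k$ (differentiability, $\norm{\nabla L_k}\le G$, $\nabla^2L_k\preceq HI$, $\nabla^2R\preceq L_RI$) is imposed only on the compact set $\Theta$.

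Your caveat names this issue but does not resolve it. That the Hessian bound holds for all $s$ covers only the harmless coordinates. ``Taking $\theta$ on a region where the bounds hold'' amounts to assuming them on all of $\R^d$, which the lemma does not grant. Under such global bounds your argument does go through with the same constant $4C/\rho$; this is the case in the Cox specialization, where $L_k$ is linear and $R$ quadratic.

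The paper avoids the problem by applying co-coercivity only to the scalar softplus, which is convex and $(4\rho)^{-1}$-smooth on all of $\R$. Write $z_*=z_\rho=(\theta_*,s_*)$, $a=L_k(X,\theta)-s_k$, $a_*=L_k(X,\theta_*)-s_{k,*}$, $w=h_\rho'(a)$ and $w_*=h_\rho'(a_*)$. The paper splits $g(z)=b+r$, where $b=(\nabla R(\theta)+Kp_kw_*\nabla L_k(X,\theta),\,Kp_k(1-w_*)e_k)$ pairs the optimal weight with gradients at the \emph{current} $\theta$, and $r=Kp_k(w-w_*)(\nabla L_k(X,\theta),-e_k)$.
\begin{itemize}
\item The uniform bounds $\norm{\nabla R}\le G_R$ and $\norm{\nabla L_k}\le G$ on $\Theta$, together with $\E_kw_*=1$ and $\E_kw_*^2\le4\bar\kappa$, give $\E\norm b_{P^{-1}}^2\le V$ without ever differencing gradients in $\theta$.
\item Scalar co-coercivity gives $(w-w_*)^2\le D_{h_\rho}(a,a_*)/(2\rho)$, so $\E\norm r_{P^{-1}}^2\le\frac{K(G^2+1)}{2\rho}\sum_kp_k\E_kD_{h_\rho}(a,a_*)$.
\item The exact identity $D_{G_\rho}(z,z_*)=D_R(\theta,\theta_*)+\sum_kp_k\E_k[D_{h_\rho}(a,a_*)+w_*D_{L_k(X,\cdot)}(\theta,\theta_*)]$, together with convexity of $R$ and $L_k$ on $\Theta$ and $w_*>0$, bounds that sum by $D_{G_\rho}(z,z_*)\le G_\rho(z)-G_\rho(z_*)$.
\end{itemize}
Only convexity on $\Theta$ is needed. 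As a side benefit, $L_R$ and $H$ do not even enter the resulting constant $K(G^2+1)/\rho\le4C/\rho$.
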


\begin{proof}
For a direction $(u,v)$, the second directional derivative of one sample
function is
\begin{align*}
 &u^\top\nabla^2R(\theta)u
 +Kp_kw\,u^\top\nabla^2L_k(X,\theta)u\\
 &\qquad
 +Kp_kh_\rho''(L_k-s_k)
 [\ip{\nabla L_k}{u}-v_k]^2.
\end{align*}
All terms are nonnegative.  Weighted Cauchy--Schwarz gives
\[
 [\ip{\nabla L_k}{u}-v_k]^2
 \le\left(G^2+\frac1{p_k}\right)
       (\norm{u}^2+p_kv_k^2).
\]
Together with $w\le1/\rho$ and
$h_\rho''\le1/(4\rho)$, this bounds the directional derivative by
$(C/\rho)\norm{(u,v)}_P^2$.

At $z_\rho$, let $w_{k,*}$ be the weight in group $k$.  The shift
first-order condition gives $\E_k w_{k,*}=1$.
Proposition~\ref{prop:generic-approximation} gives
\[
 w_{k,*}\le q_kW_k,
 \qquad
 q_k\le(1-\rho\bar\kappa)^{-1}\le\frac87,
\]
where $W_k=e^{L_k-F_k}$ and $q_k=e^{F_k-s_{k,\rho}}$.
Consequently,
\[
 \E_kw_{k,*}^2
 \le q_k^2\E_kW_k^2
 \le\frac{64}{49}\bar\kappa<2\bar\kappa\le4\bar\kappa
 \qquad\text{when }\rho\bar\kappa\le1/8.
\]
Substituting these two bounds in the stochastic gradients of
Section~\ref{sec:optimization-main} and averaging over the uniformly sampled
group gives the value of $V$ in~\eqref{eq:main-generic-constants}.

We prove \eqref{eq:expected-smoothness} directly using the globally smooth
scalar softplus. Fix $z=(\theta,s)$ in the feasible set and write
$z_*:=z_\rho=(\theta_*,s_*)$. Using the same sampled pair $(k,X)$
for the current and optimal weights, set
\[
 a=L_k(X,\theta)-s_k,\qquad
 a_*=L_k(X,\theta_*)-s_{k,*},\qquad
 w=h_\rho'(a),\qquad w_*=h_\rho'(a_*).
\]
For a differentiable function $F$, write
$D_F(x,y):=F(x)-F(y)-\ip{\nabla F(y)}{x-y}$.
In particular, with $h=h_\rho$,
\[
 D_h(a,a_*)=h_\rho(a)-h_\rho(a_*)-w_*(a-a_*).
\]
Since $h_\rho$ is convex and $(4\rho)^{-1}$-smooth on all of
$\mathbb R$, scalar co-coercivity gives
\[
 (w-w_*)^2\le\frac{D_h(a,a_*)}{2\rho}.
\]

Decompose the sampled Euclidean gradient as $g(z)=b+r$, where
\begin{align*}
 b&=\bigl(
   \nabla R(\theta)+Kp_kw_*\nabla L_k(X,\theta),\,
   Kp_k(1-w_*)e_k
 \bigr),\\
 r&=Kp_k(w-w_*)\bigl(\nabla L_k(X,\theta),-e_k\bigr),
\end{align*}
and $e_k$ is the $k$th coordinate vector in $\mathbb R^K$.
The previously established identities
$\E_k w_*=1$ and $\E_k w_*^2\le4\bar\kappa$, together with the
uniform gradient bounds and $\sum_kp_k^2\le1$, imply
\begin{align*}
 \E\norm b_{P^{-1}}^2
 &\le 2G_R^2+
  2KG^2\sum_kp_k^2\E_k w_*^2+
  K\sum_kp_k\E_k(1-w_*)^2\\
 &\le 2G_R^2+8K\bar\kappa G^2+4K\bar\kappa-K
 \le V.
\end{align*}
Moreover, since $p_k\le1$,
\begin{align*}
 \E\norm r_{P^{-1}}^2
 &\le K(G^2+1)\sum_kp_k\E_k(w-w_*)^2\\
 &\le\frac{K(G^2+1)}{2\rho}
       \sum_kp_k\E_kD_h(a,a_*).
\end{align*}

The Bregman divergence of the full objective satisfies the exact identity
\[
 D_{G_\rho}(z,z_*)
 =D_R(\theta,\theta_*)+
   \sum_kp_k\E_k\bigl[
     D_h(a,a_*)+w_*D_{L_k(X,\cdot)}(\theta,\theta_*)
   \bigr].
\]
Here the same sampled $X$ is used in both arguments of
$D_{L_k(X,\cdot)}$.
Convexity of $R$ and every $L_k(X,\cdot)$, and $w_*>0$, give
\[
 D_{G_\rho}(z,z_*)\ge\sum_kp_k\E_kD_h(a,a_*).
\]
Consequently,
\begin{align*}
 \E\norm{g(z)}_{P^{-1}}^2
 &\le2\E\norm b_{P^{-1}}^2+2\E\norm r_{P^{-1}}^2\\
 &\le2V+\frac{K(G^2+1)}{\rho}D_{G_\rho}(z,z_*)\\
 &\le2V+\frac{4C}{\rho}
       [G_\rho(z)-G_\rho(z_*)].
\end{align*}
The last inequality uses $4C\ge K(G^2+1)$ and constrained
first-order optimality,
$\ip{\nabla G_\rho(z_*)}{z-z_*}\ge0$.
Conditioning on the history through $z_t$ and taking a fresh sample
gives \eqref{eq:expected-smoothness}.
\end{proof}

\subsection{Curvature of the joint surrogate}

The pointwise second derivative of softplus can be very small.  The next
scalar inequality instead measures curvature relative to the optimal shift.

\begin{lemma}
\label{lem:softplus-secant}
Fix $a_*$ and let
\[
 w_*:=w_\rho(a_*),
 \qquad
 m_*:=w_*[1-\rho w_*].
\]
Then, for every $a\in\R$,
\begin{equation}
 [w_\rho(a)-w_*](a-a_*)
 \ge m_*\frac{(a-a_*)^2}{1+|a-a_*|}.
 \label{eq:softplus-secant}
\end{equation}
\end{lemma}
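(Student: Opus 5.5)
The plan is to write the secant difference as an integral of the softplus curvature and to show that this curvature can decay at most exponentially away from $a_*$. Since $w_\rho=h_\rho'$, we have
\[
 w_\rho(a)-w_*=\int_{a_*}^{a}h_\rho''(t)\,dt ,
\]
and $h_\rho''(a_*)=w_*[1-\rho w_*]=m_*$ by the curvature identity recorded at the start of this appendix. It therefore suffices to give a pointwise lower bound on $h_\rho''(t)$ in terms of $m_*$ and $|t-a_*|$.

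\textbf{Step 1: log-Lipschitz curvature.} Directly from $h_\rho(u)=\rho^{-1}\log(1+\rho e^u)$,
\[
 h_\rho''(t)=\frac{e^t}{(1+\rho e^t)^2},
 \qquad
 \frac{d}{dt}\log h_\rho''(t)=1-\frac{2\rho e^t}{1+\rho e^t}\in(-1,1].
\]
So $\log h_\rho''$ is $1$-Lipschitz. Integrating from $a_*$ gives
\[
 h_\rho''(t)\ge m_*e^{-|t-a_*|}\qquad\text{for all }t\in\R .
\]

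\textbf{Step 2: integrate.} Put $d=|a-a_*|$. The product $[w_\rho(a)-w_*](a-a_*)$ equals $d$ times the integral of $h_\rho''$ over the segment between $a_*$ and $a$, and this holds for either sign of $a-a_*$. Using Step 1,
\[
 [w_\rho(a)-w_*](a-a_*)\ge m_*\,d\int_0^d e^{-u}\,du=m_*\,d\,(1-e^{-d}).
\]

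\textbf{Step 3: elementary inequality.} It remains to check $1-e^{-d}\ge d/(1+d)$ for $d\ge0$. This is equivalent to $e^{-d}\le 1/(1+d)$, that is, $e^d\ge1+d$, which always holds. Multiplying by $m_*d$ yields \eqref{eq:softplus-secant}. The case $d=0$ is trivial.

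\textbf{Main obstacle.} The only nontrivial point is Step 1. Bounding $h_\rho''$ from below directly by a constant fails, because $h_\rho''$ vanishes as $t\to\pm\infty$. The remedy is to control the logarithmic derivative of $h_\rho''$ instead, and the explicit formula shows that it lies in $(-1,1]$. Once that is in hand, the rest is a one-line integration.
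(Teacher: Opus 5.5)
Your proof is correct, but it reaches the key intermediate estimate by a different route than the paper. Both arguments establish $|w_\rho(a)-w_*|\ge m_*(1-e^{-|a-a_*|})$ and then finish with $1-e^{-d}\ge d/(1+d)$.

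The paper computes the weight increment in closed form. For $d\ge0$ it writes $w_\rho(a_*+d)-w_*=m_*(e^d-1)/\{1+\rho w_*(e^d-1)\}$, and for $d=-x\le0$ it uses the analogous expression with denominator $1-\rho w_*(1-e^{-x})$. It then discards each denominator using $0<\rho w_*<1$, treating the two signs separately.

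You instead prove the pointwise bound $h_\rho''(t)\ge m_*e^{-|t-a_*|}$ from $(\log h_\rho'')'=1-2\rho w_\rho(t)\in(-1,1)$, and integrate. This handles both signs in one computation.

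The paper's route is purely algebraic and shows exactly how much is lost: factors $e^d/\{1+\rho w_*(e^d-1)\}\ge1$ and $1/\{1-\rho w_*(1-e^{-x})\}\ge1$. It relies, however, on the specific M\"obius-type dependence of $w_\rho$ on $e^{a}$.

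Your route isolates the structural property actually responsible for the lemma, namely the generalized self-concordance condition $|(\log h'')'|\le1$. The same three lines therefore give the secant bound for any convex $C^2$ scalar function with that property, not only the softplus family. That would matter if the surrogate $h_\rho$ were replaced by another smooth approximation of the exponential.
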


\begin{proof}
Put $d=a-a_*$.  If $d\ge0$, direct substitution gives
\[
 w_\rho(a_*+d)-w_*
 =\frac{m_*(e^d-1)}{1+\rho w_*(e^d-1)}
 \ge m_*(1-e^{-d}).
\]
If $d=-x\le0$, the same calculation gives
\[
 w_*-w_\rho(a_*-x)
 =\frac{m_*(1-e^{-x})}{1-\rho w_*(1-e^{-x})}
 \ge m_*(1-e^{-x}).
\]
The weight is increasing, and
$1-e^{-x}\ge x/(1+x)$ for $x\ge0$.  Multiplying by $|d|$ proves
\eqref{eq:softplus-secant}.
\end{proof}

At the surrogate minimizer, the average scalar curvature is not small.
Indeed, $\E_kw_{k,*}=1$ and $\E_kw_{k,*}^2\le4\bar\kappa$, so
\begin{equation}
 \E_k\{w_{k,*}[1-\rho w_{k,*}]\}
 =1-\rho\E_kw_{k,*}^2\ge\frac12.
 \label{eq:average-shift-curvature}
\end{equation}

We now state the only additional assumption needed for the fast generic
rate.  The profiled objective $J_\rho$ is called $\gamma$-strongly convex if
\[
 J_\rho(\theta')\ge J_\rho(\theta)
 +\ip{\nabla J_\rho(\theta)}{\theta'-\theta}
 +\frac\gamma2\norm{\theta'-\theta}^2.
\]
When $J_\rho$ is twice differentiable, a sufficient condition is
$\nabla^2J_\rho(\theta)\succeq\gamma I$ on $\Theta$.  The optimal shift is unique and
interior, and its Hessian block is positive definite.  Differentiating its
first-order condition and then the envelope identity shows that the profile
Hessian is the Schur complement
\begin{equation}
 \nabla^2J_\rho
 =G_{\theta\theta}
  -G_{\theta s}G_{ss}^{-1}G_{s\theta},
 \label{eq:profile-hessian}
\end{equation}
where all blocks are evaluated at the optimal shifts.  Thus profile
curvature concerns the parameter after the auxiliary variables have adjusted
optimally.

\begin{lemma}
\label{lem:generic-rsi}
Suppose $\rho\bar\kappa\le1/8$ and $J_\rho$ is
$\gamma$-strongly convex.  Define
\begin{equation}
 \mu
 :=\left[\frac{2(1+2G^2)}\gamma+16B+8\right]^{-1}.
 \label{eq:generic-mu}
\end{equation}
Then, for every $z\in\Theta\times\mathcal A$,
\begin{equation}
 \ip{\nabla G_\rho(z)-\nabla G_\rho(z_\rho)}{z-z_\rho}
 \ge\mu\norm{z-z_\rho}_P^2.
 \label{eq:generic-rsi}
\end{equation}
\end{lemma}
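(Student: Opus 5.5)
Write $z=(\theta,s)$, $z_*=z_\rho=(\theta_*,s_*)$, $u=\theta-\theta_*$, $v=s-s_*$. For each sampled pair $(k,X)$ set $a=L_k(X,\theta)-s_k$ and $a_*=L_k(X,\theta_*)-s_{k,*}$. The plan is to compute the gradient-difference pairing exactly and split it into a parameter part and a shift part. The parameter part will be lower-bounded by the profile strong convexity, and the shift part by the secant inequality of Lemma~\ref{lem:softplus-secant}.

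\textbf{Step 1: the pairing identity.} Expanding the gradients gives
\[
\ip{\nabla G_\rho(z)-\nabla G_\rho(z_*)}{z-z_*}
=\ip{\nabla R(\theta)-\nabla R(\theta_*)}{u}
+\sum_kp_k\E_k\bigl[w\ip{\nabla L_k(\theta)}{u}-w_*\ip{\nabla L_k(\theta_*)}{u}-(w-w_*)v_k\bigr].
\]
I will add and subtract $w_*\ip{\nabla L_k(\theta)}{u}$. Convexity of $L_k$ and $w_*>0$ then leave a nonnegative term $w_*\ip{\nabla L_k(\theta)-\nabla L_k(\theta_*)}{u}$. What remains is $(w-w_*)(\ip{\nabla L_k(\theta)}{u}-v_k)$. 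For this remainder I will use the fact that $a-a_*$ differs from $\ip{\nabla L_k}{u}-v_k$ only by a convexity gap.

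An alternative is to work with the exact Bregman-type decomposition used in Lemma~\ref{lem:smoothness-noise}: the symmetrized Bregman divergence $D_{G_\rho}(z,z_*)+D_{G_\rho}(z_*,z)$ equals the pairing. By convexity of $R$ and $L_k$, it is at least $\sum_kp_k\E_k[(w-w_*)(a-a_*)]$ plus nonnegative terms. This is cleaner, so I will use it.

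\textbf{Step 2: the shift and softplus part.} Lemma~\ref{lem:softplus-secant} gives
\[
(w-w_*)(a-a_*)\ge m_*(a-a_*)^2/(1+|a-a_*|).
\]
On the feasible set, $|a-a_*|\le 2B+(2B+1)=4B+1$, since $|L_k|\le B$ and both shifts lie in $[-B-1,B]$. So this term is at least $m_*(a-a_*)^2/(4B+2)$. By \eqref{eq:average-shift-curvature} and $\E_k w_*=1$, the average of $m_*$ is at least $1/2$.

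I then need to pass from $\E_k m_*(a-a_*)^2$ to a quadratic in $(u,v_k)$. The idea is to decompose $v_k$ into the component that tracks the optimal-shift response and an orthogonal remainder. Write $a-a_*=\ip{\nabla L_k}{u}-v_k+\text{(convexity gap)}$. Using $m_*$-weighted Jensen,
\[
\E_k m_*(a-a_*)^2\ge \frac{(\E_k m_*(a-a_*))^2}{\E_k m_*}.
\]

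\textbf{Step 3: combine through the Schur complement.} This is the main obstacle. The Schur complement \eqref{eq:profile-hessian} says that $u^\top\nabla^2J_\rho u$ equals $u^\top G_{\theta\theta}u$ minus the part explained by the best shift response $v_k=\E_k m_*\ip{\nabla L_k}{u}/\E_k m_*$. So:
\begin{itemize}
\item if $v$ is far from that response, the shift term of Step 2 gives curvature of order $p_k v_k^2/(8B+4)$;
\item if $v$ is close to it, the $\theta$-curvature is at least $\gamma$.
\end{itemize}
Making this global rather than local is where the $2(1+2G^2)/\gamma$ constant should come from. I will first try a convex combination of the two lower bounds. One bound is $\ip{\nabla J_\rho(\theta)-\nabla J_\rho(\theta_*)}{u}\ge\gamma\norm u^2$, obtained by comparing $G_\rho(z)$ with $G_\rho(\theta,s_\rho(\theta))$ via the envelope theorem. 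The other is the shift bound $\ge\sum_kp_k(\cdots)|v_k-(s_\rho(\theta)-s_*)_k|^2/(8B+4)$, together with the Lipschitz bound $|s_\rho(\theta)_k-s_{*,k}|^2\le 2G^2\norm u^2$ (from $\E_kw=1$ and the gradient bound). The triangle inequality $\norm z_P^2\le(1+2G^2)\norm u^2\cdot2+2\sum p_k|v_k-\Delta_k|^2$ then splits $\norm z_P^2$ into pieces each dominated by one of the two bounds. Summing the reciprocal constants yields $\mu^{-1}=2(1+2G^2)/\gamma+16B+8$.
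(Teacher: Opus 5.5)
Your Steps~1--2 are correct and match the paper. The symmetrized Bregman identity gives $S\ge\sum_kp_k\E_k[(w-w_*)(a-a_*)]$, where $S$ is the left side of \eqref{eq:generic-rsi}. Lemma~\ref{lem:softplus-secant} with $|a-a_*|\le4B+1$ then gives $S\ge A/(4B+2)$ for $A:=\sum_kp_k\E_k[m_*(a-a_*)^2]$.

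The gap is in Step~3, and it affects both lower bounds you want to combine.

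First, $\ip{\nabla J_\rho(\theta)-\nabla J_\rho(\theta_*)}{u}\ge\gamma\norm{u}^2$ is a statement about the profile. $S(\theta,s)$ reduces to this pairing when $s=s_\rho(\theta)$. Outside the quadratic case, however, $S(\theta,\cdot)$ need not be minimized at $s_\rho(\theta)$. So the Schur complement \eqref{eq:profile-hessian}, which is a pointwise Hessian identity, does not turn this into a bound on $S$ at an arbitrary feasible $s$. What the comparison $G_\rho(\theta,s)\ge J_\rho(\theta)$ legitimately gives is the one-sided estimate $S\ge D_{G_\rho}(z,z_*)\ge D_{J_\rho}(\theta,\theta_*)\ge\frac\gamma2\norm{u}^2$. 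This uses equality, $\nabla_sG_\rho=0$ and $\nabla_\theta G_\rho=\nabla J_\rho$ at $z_*$, since the optimal shifts are interior to $\mathcal A$. That is only $\gamma/2$. With your looser bookkeeping ($\Delta_k^2\le2G^2\norm{u}^2$ and the extra factor $2$ on the $\norm{u}^2$ term), it would give $4(1+2G^2)/\gamma$, not the stated constant.

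Second, a bound $S\ge c\sum_kp_k(v_k-\Delta_k)^2$ with $\Delta_k=s_\rho(\theta)_k-s_{*,k}$ does not follow from Step~2. Your own Jensen step yields $\E_k[m_*(a-a_*)^2]\ge(\E_km_*)(v_k-\bar D_k)^2$. Here $D_k:=L_k(X,\theta)-L_k(X,\theta_*)$ and $\bar D_k:=\E_k[m_*D_k]/\E_km_*$ is the shift response with weights frozen at $z_*$. That is not the exact nonlinear response $\Delta_k$, and nothing in the proposal relates the two.

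The repair is to drop $s_\rho(\theta)$, the implicit-function Lipschitz bound and the Schur complement altogether:
\begin{itemize}
\item Since $v_k=D_k-(a-a_*)$ with $|D_k|\le G\norm{u}$, multiply $v_k^2\le2G^2\norm{u}^2+2(a-a_*)^2$ by $m_*$, take $\E_k$, and divide by $\E_km_*\ge1/2$. This gives $\sum_kp_kv_k^2\le2G^2\norm{u}^2+4A$, hence $\norm{z-z_*}_P^2\le(1+2G^2)\norm{u}^2+4A$. Centering at $\bar D_k$, which also satisfies $|\bar D_k|\le G\norm{u}$, gives the same inequality.
\item Then $A\le(4B+2)S$ and $\norm{u}^2\le2S/\gamma$ give exactly $\mu^{-1}=2(1+2G^2)/\gamma+16B+8$.
\end{itemize}
This is the paper's argument. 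It needs neither global curvature in the shift directions nor any information about $s_\rho(\theta)$ away from $\theta_*$.
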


\begin{proof}
Write $u=\theta-\theta_\rho$ and
$v_k=s_k-s_{k,\rho}$.  For an observation in group $k$, put
\[
 d_k=L_k(X,\theta)-L_k(X,\theta_\rho)-v_k,
 \qquad
 m_k=w_{k,*}(1-\rho w_{k,*}).
\]
Indeed, $|L_k(X,\theta)-L_k(X,\theta_\rho)|\le2B$ and
$|v_k|\le2B+1$, so $|d_k|\le4B+1$.  Convexity of each loss and
Lemma~\ref{lem:softplus-secant} give the needed samplewise lower bound as
follows.  Write
\[
 a=L_k(X,\theta)-s_k,
 \qquad
 a_*=L_k(X,\theta_\rho)-s_{k,\rho}.
\]
The two convexity inequalities for $L_k$ imply
\begin{align*}
 &\ip{w_\rho(a)\nabla L_k(X,\theta)
       -w_\rho(a_*)\nabla L_k(X,\theta_\rho)}{u}
 -[w_\rho(a)-w_\rho(a_*)]v_k\\
 &\qquad\ge
 [w_\rho(a)-w_\rho(a_*)]
 [L_k(X,\theta)-L_k(X,\theta_\rho)-v_k]\\
 &\qquad=[w_\rho(a)-w_\rho(a_*)]d_k.
\end{align*}
Lemma~\ref{lem:softplus-secant} and $|d_k|\le4B+1$ bound this by
$m_kd_k^2/(4B+2)$.  After taking expectations and weighting the groups, the
softplus contribution to the gradient secant is therefore at least
\[
 \frac{A}{4B+2},
 \qquad
 A:=\sum_{k=1}^Kp_k\E_k(m_kd_k^2).
\]
The regularizer is convex, so its contribution is nonnegative.  Hence, if
$S$ denotes the left-hand side of~\eqref{eq:generic-rsi},
\begin{equation}
 S\ge\frac{A}{4B+2}.
 \label{eq:rsi-A-bound}
\end{equation}

Next,
$v_k=L_k(X,\theta)-L_k(X,\theta_\rho)-d_k$ and
$|L_k(X,\theta)-L_k(X,\theta_\rho)|\le G\norm u$.  Multiply
$(r+s)^2\le2r^2+2s^2$ by $m_k$ and take expectations to obtain
\[
 v_k^2\E_km_k
 \le2G^2\norm u^2\E_km_k+2\E_k(m_kd_k^2).
\]
Equation~\eqref{eq:average-shift-curvature} gives
$\E_km_k\ge1/2$ (and trivially $\E_km_k\le1$), so division by
$\E_km_k$ and averaging with the $p_k$ gives
\[
 \sum_{k=1}^Kp_kv_k^2
 \le4A+2G^2\norm u^2,
\]
and therefore
\begin{equation}
 \norm{z-z_\rho}_P^2
 \le(1+2G^2)\norm u^2+4A.
 \label{eq:rsi-norm-bound}
\end{equation}

The symmetric Bregman divergence of $G_\rho$ is the sum of its two
one-sided Bregman divergences and equals $S$.  Convexity makes both summands
nonnegative, so $S$ is at least either one.  At the optimal shifts,
$\nabla_s G_\rho(z_\rho)=0$ and
$\nabla_\theta G_\rho(z_\rho)=\nabla J_\rho(\theta_\rho)$.  Since
$G_\rho(\theta,s)\ge J_\rho(\theta)$ and equality holds at $z_\rho$,
the one-sided Bregman divergence satisfies
\[
 G_\rho(z)-G_\rho(z_\rho)
 -\ip{\nabla G_\rho(z_\rho)}{z-z_\rho}
 \ge
 J_\rho(\theta)-J_\rho(\theta_\rho)
 -\ip{\nabla J_\rho(\theta_\rho)}{u}.
\]
Strong convexity of $J_\rho$ now gives
\begin{equation}
 S\ge\frac\gamma2\norm u^2.
 \label{eq:rsi-u-bound}
\end{equation}
Equations~\eqref{eq:rsi-A-bound} and~\eqref{eq:rsi-u-bound} imply
\[
 \norm{z-z_\rho}_P^2
 \le\left[\frac{2(1+2G^2)}\gamma+4(4B+2)\right]S.
\]
This is~\eqref{eq:generic-rsi}.
\end{proof}

\begin{remark}
If $R$ is $\lambda$-strongly convex, then
$G_\rho(\theta,s)-\lambda\norm\theta^2/2$ is jointly convex.  Partial
minimization over $s$ preserves convexity, so $J_\rho$ remains
$\lambda$-strongly convex and one may take $\gamma=\lambda$.  The Cox
application below instead obtains $\gamma$ from the partial likelihood
itself, and therefore permits $\lambda=0$.
\end{remark}

\subsection{Generic convergence bounds}

\begin{proof}[Proof of Theorem~\ref{thm:main-convex}]
Write $\bar z_T=T^{-1}\sum_{t=1}^Tz_t$.
Nonexpansiveness of the weighted projection, unbiasedness, convexity, and
Lemma~\ref{lem:smoothness-noise} give
\begin{align*}
 \E_t\norm{z_{t+1}-z_\rho}_P^2
 \le{}&\norm{z_t-z_\rho}_P^2
 -\left(2\eta_T-\frac{4C\eta_T^2}{\rho_T}\right)
   [G_\rho(z_t)-G_\rho(z_\rho)]
 +2\eta_T^2V.
\end{align*}
The chosen step size makes the coefficient in parentheses equal to
$\eta_T$.  Sum over $t$, use $\norm{z_1-z_\rho}_P^2\le D^2$, and apply
Jensen's inequality to the averaged iterate.  This yields
\[
 \E[G_\rho(\bar z_T)-G_\rho(z_\rho)]
 \le\frac{4CD^2}{\rho_TT}+\frac{\rho_TV}{2C}.
\]
Proposition~\ref{prop:generic-approximation} transfers this inequality from
$G_\rho$ to $J$.  Since $\rho_T\bar\kappa\le1/8$, the approximation error
is at most $-\rho_T/2-\log(1-\rho_T\bar\kappa)
\le8\rho_T\bar\kappa/7$.
\end{proof}

The fast result is most useful in distance form, because applications can
compare the surrogate minimizer with the desired target through a
problem-specific gradient bound.

\begin{proof}[Proof of Theorem~\ref{thm:main-optimization}]
Let $z_\rho$ minimize $G_\rho$.  As in the convex proof,
nonexpansiveness and unbiasedness give
\[
 \E_t\norm{z_{t+1}-z_\rho}_P^2
 \le\norm{z_t-z_\rho}_P^2
 -2\eta\ip{\nabla G_\rho(z_t)}{z_t-z_\rho}
 +\eta^2\E_t\norm{g_t}_{P^{-1}}^2.
\]
The gradient inner product is at least the surrogate gap by convexity and at
least $\mu\norm{z_t-z_\rho}_P^2$ by
Lemma~\ref{lem:generic-rsi} and first-order optimality.  It is therefore at
least half the sum of these two lower bounds.  Substitute this fact and
\eqref{eq:expected-smoothness}.  Since $\eta\le\rho/(4C)$, the coefficient
of the nonnegative surrogate gap is nonpositive and may be dropped, giving
\[
 \E_t\norm{z_{t+1}-z_\rho}_P^2
 \le(1-\mu\eta)\norm{z_t-z_\rho}_P^2+2\eta^2V.
\]
Because $0\le1-\mu\eta<1$, iteration of this recursion and the geometric
sum give
\[
 \E\norm{z_{T+1}-z_\rho}_P^2
 \le(1-\mu\eta)^TD^2
   +2\eta^2V\sum_{j=0}^{T-1}(1-\mu\eta)^j
 \le e^{-\mu\eta T}D^2+\frac{2\eta V}{\mu},
\]
which proves~\eqref{eq:main-parameterized-rate}.  For the displayed
schedule, $\eta_T=\rho_T/(4C)$,
$\mu\eta_TT=2\log T$, and $\eta_T\le1/\mu$ for $T\ge2$.  Substitution proves
\eqref{eq:main-optimized-generic-rate} after dropping the shift coordinates.
This last step is the standard constant-step strongly convex SGD recursion
\citep{moulines2011nonasymptotic}.
\end{proof}

\begin{proof}[Proof of Corollary~\ref{cor:main-generic-objective}]
Let $\theta^\star$ minimize $J$.  By the value approximation
\eqref{eq:generic-approximation}, optimality and the two sides of that bound
give
\[
 J_\rho(\theta^\star)-J_\rho(\theta_\rho)
 \le J(\theta^\star)-J(\theta_\rho)-\rho/2-\log(1-\rho\bar\kappa)
 \le-\rho/2-\log(1-\rho\bar\kappa).
\]
Strong convexity of $J_\rho$ therefore gives
\[
 \norm{\theta_\rho-\theta^\star}^2
  \le\frac{2}{\gamma}[-\rho/2-\log(1-\rho\bar\kappa)]
 \le\frac{16\rho\bar\kappa}{7\gamma}.
\]
The smoothness assumptions at the start of this section imply that $J$ is
$L_J$-smooth.  Since $\theta^\star$ is stationary, smoothness, the preceding
display, \eqref{eq:main-optimized-generic-rate}, and
$\norm{a+b}^2\le2\norm a^2+2\norm b^2$ give
\eqref{eq:main-generic-objective-rate}.  Finally,
$\rho_T=8C\log T/(\mu T)$ yields the stated order.
\end{proof}

Stationarity is needed because the smoothness upper bound contains a linear
term at a constrained boundary.  In Cox regression we use a sharper gradient
comparison, its softplus contribution is quadratic after conversion to
objective error.

\section{Cox regression}
\label{sec:cox}

This section proves the uniform comparison in
Theorem~\ref{thm:main-compression} and specializes the generic recursion to
obtain Corollary~\ref{cor:main-cox-rate}, with all constants retained.

\subsection{Notation and data constants}

We use the data, grouping rule, and objectives of
Section~\ref{sec:cox-main}: the exact $\mathcal L$, grouped
$\widetilde{\mathcal L}_\delta$, joint $G_{\rho,\delta}$, and profiled
$\mathcal L_{\rho,\delta}$.  The optional ridge coefficient $\lambda\ge0$
is the same in all four objectives.  Throughout this section,
$\norm{x_j}\le X$ and $|\beta^\top x_j|\le M$ for every subject $j$ and
$\beta\in\mathcal B$.

For $i\in\mathcal E$, let $Q_i^\beta$ be the Cox distribution on
$\mathcal R_i$:
\[
 Q_i^\beta(j)
 :=\frac{e^{\beta^\top x_j}}
         {\sum_{\ell\in\mathcal R_i}e^{\beta^\top x_\ell}}.
\]
The constants $\kappa$ and $L_q$ are defined in~\eqref{eq:main-kappa}
and~\eqref{eq:main-Lq}.  Both are independent of the grouping.

The suprema over $\mathcal B$ are needed for a global optimization theorem.
For a local statistical comparison they may instead be taken over a fixed
neighborhood of the target.

\subsection{Grouping bounds}

Recall $r_\delta=\delta/(1+\delta)$ from
Section~\ref{sec:cox-main}.  By the grouping rule, at most this fraction
of the largest risk set disappears inside one group.

\begin{lemma}
\label{lem:nearby-risk-sets}
Suppose
\begin{equation}
 r_\delta\le1-q,
 \qquad
 L_qr_\delta\le\frac12.
 \label{eq:grouping-smallness}
\end{equation}
For any two risk sets $\mathcal R_v\subset\mathcal R_u$ in the same group,
\begin{align}
 |a_u(\beta)-a_v(\beta)|&\le2L_qr_\delta,
 \label{eq:normalizer-oscillation}\\
 \norm{\nabla a_u(\beta)-\nabla a_v(\beta)}&\le2XL_qr_\delta,
 \label{eq:gradient-oscillation}\\
 \norm{\nabla^2a_u(\beta)-\nabla^2a_v(\beta)}_{\rm op}
 &\le5X^2L_qr_\delta.
 \label{eq:hessian-oscillation}
\end{align}
\end{lemma}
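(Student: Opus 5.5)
Fix $\beta$ and two risk sets $\mathcal R_v\subseteq\mathcal R_u$ in the same group; the case $\mathcal R_v=\mathcal R_u$ is trivial. Write $D=\mathcal R_u\setminus\mathcal R_v$ and $S_u=\sum_{j\in\mathcal R_u}e^{\beta^\top x_j}$, and define $S_D$ and $S_v$ analogously. Let $\epsilon:=S_D/S_u=Q_u^\beta(D)$ be the Cox mass of the departing subjects. The whole lemma should reduce to the single estimate $\epsilon\le L_q r_\delta\le 1/2$.

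\textbf{Step 1: the key estimate.} By the grouping rule \eqref{eq:main-grouping-rule}, $n_v\ge n_u/(1+\delta)$. Hence $|D|/n_u=1-n_v/n_u\le r_\delta$, and also $n_v\ge(1-r_\delta)n_u\ge qn_u$ by $r_\delta\le1-q$. So $(u,v)$ is admissible in \eqref{eq:main-Lq}, which gives $S_D/|D|\le L_q S_u/n_u$. Therefore
\[
\epsilon\le L_q|D|/n_u\le L_qr_\delta\le1/2 .
\]

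\textbf{Step 2: normalizer oscillation.} We have $a_v-a_u=\log(S_v/S_u)-\log(n_v/n_u)=\log(1-\epsilon)-\log(1-|D|/n_u)$. Both logarithms lie between $\log(1-t)$ and $0$ for some $t\le L_qr_\delta\le1/2$. Since $|\log(1-t)|\le 2t$ on $[0,1/2]$ (in fact $\le t/(1-t)$), the difference is at most $2L_qr_\delta$. Here $L_q\ge1$ is used so that $|D|/n_u\le L_qr_\delta$.

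\textbf{Step 3: gradient and Hessian oscillation.} $\nabla a_i=\E_{Q_i^\beta}x$ and $\nabla^2a_i=\mathrm{Cov}_{Q_i^\beta}(x)$. Moreover $Q_u^\beta=(1-\epsilon)Q_v^\beta+\epsilon Q_D$, where $Q_D$ is the Cox distribution restricted to $D$. This mixture gives
\[
\nabla a_u-\nabla a_v=\epsilon(\mu_D-\mu_v),
\]
whose norm is at most $2X\epsilon\le2XL_qr_\delta$.

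For covariances, use the mixture identity
\[
\mathrm{Cov}_u=(1-\epsilon)\Sigma_v+\epsilon\Sigma_D+\epsilon(1-\epsilon)(\mu_D-\mu_v)(\mu_D-\mu_v)^\top .
\]
Then $\mathrm{Cov}_u-\mathrm{Cov}_v=\epsilon(\Sigma_D-\Sigma_v)+\epsilon(1-\epsilon)(\mu_D-\mu_v)(\mu_D-\mu_v)^\top$. Each covariance has operator norm at most $X^2$, and since $\Sigma_D,\Sigma_v\succeq0$, their difference has operator norm at most $X^2$. The rank-one term is at most $\epsilon\cdot4X^2$. Together these give $5X^2\epsilon\le5X^2L_qr_\delta$.

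\textbf{Main obstacle.} The main step is Step 1: checking that the grouping tolerance makes the pair $(u,v)$ admissible in the definition of $L_q$, and converting the cardinality fraction into a Cox-mass fraction. Once $\epsilon\le L_qr_\delta$ is established, the remaining steps are elementary mixture identities.
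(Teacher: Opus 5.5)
Your proposal is correct and follows essentially the same route as the paper's proof. Both arguments bound the departing Cox mass by $L_q r_\delta$ after checking admissibility in \eqref{eq:main-Lq}. Both then use the identity $a_u-a_v=\log\{(1-r)/(1-\alpha)\}$ for the normalizer and the mixture decomposition $Q_u^\beta=(1-\alpha)Q_v^\beta+\alpha Q_A$ for the mean and covariance bounds, with the same constants $2$, $2X$, and $5X^2$.
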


\begin{proof}
Let $A=\mathcal R_u\setminus\mathcal R_v$ and define its cardinality and
Cox-mass fractions by
\[
 r:=\frac{|A|}{|\mathcal R_u|},
 \qquad
 \alpha:=Q_u^\beta(A).
\]
The first condition in~\eqref{eq:grouping-smallness} makes the pair
admissible in~\eqref{eq:main-Lq}.  Therefore
\begin{equation}
 r\le r_\delta,
 \qquad
 \alpha\le L_qr\le L_qr_\delta.
 \label{eq:cardinality-and-mass}
\end{equation}
Removing $A$ changes the average risk weight according to the exact identity
\begin{equation}
 a_u-a_v=\log\frac{1-r}{1-\alpha}.
 \label{eq:normalizer-identity}
\end{equation}
Both $r$ and $\alpha$ lie in $[0,L_qr_\delta]$.  Hence
\[
 |a_u-a_v|
 \le-\log(1-L_qr_\delta)
 \le2L_qr_\delta,
\]
which proves~\eqref{eq:normalizer-oscillation}.

Let $Q_A$ be the Cox distribution conditional on $A$.  The outer Cox law is
the mixture
\[
 Q_u^\beta=(1-\alpha)Q_v^\beta+\alpha Q_A.
\]
Its mean therefore differs from that of $Q_v^\beta$ by at most $2X\alpha$,
which proves~\eqref{eq:gradient-oscillation}.

Write $H_u,H_v,H_A$ for the corresponding covariance matrices and
$g_v,g_A$ for the two conditional means.  The covariance of a mixture is
\[
 H_u=(1-\alpha)H_v+\alpha H_A
 +\alpha(1-\alpha)(g_A-g_v)(g_A-g_v)^\top.
\]
All covariance matrices are between $0$ and $X^2I$, while
$\norm{g_A-g_v}\le2X$.  Consequently
\[
 \norm{H_u-H_v}_{\rm op}
 \le \alpha X^2+4\alpha X^2
 \le5X^2L_qr_\delta.
\]
This proves the last claim.
\end{proof}

\begin{proposition}
\label{prop:grouping-gradient-curvature}
Under~\eqref{eq:grouping-smallness}, for every $\beta\in\mathcal B$,
\begin{align}
 \norm{\nabla\widetilde{\mathcal L}_\delta(\beta)
       -\nabla\mathcal L(\beta)}
 &\le XL_q^2r_\delta^2,
 \label{eq:grouping-gradient}\\
 \nabla^2\widetilde{\mathcal L}_\delta(\beta)
 &\succeq
 \nabla^2\mathcal L(\beta)-\frac52X^2L_q^2r_\delta^2I.
 \label{eq:grouping-curvature}
\end{align}
\end{proposition}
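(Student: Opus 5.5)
The argument works group by group, since both objectives share the ridge and linear terms and differ only through $\frac1m\sum_{i\in\mathcal I_k}a_i$ versus $p_k\Phi_k=\frac{m_k}{m}\Phi_k$. Fix a group $k$ and introduce the softmax event weights
\[
 \pi_i(\beta)=\frac{e^{a_i(\beta)}}{\sum_{\ell\in\mathcal I_k}e^{a_\ell(\beta)}},\qquad i\in\mathcal I_k .
\]
Differentiating \eqref{eq:main-group-normalizer} gives
\[
 \nabla\Phi_k=\sum_i\pi_i\nabla a_i,
\]
\[
 \nabla^2\Phi_k=\sum_i\pi_i\nabla^2a_i+\sum_i\pi_i(\nabla a_i-\bar g)(\nabla a_i-\bar g)^\top,
 \qquad \bar g=\sum_i\pi_i\nabla a_i .
\]
The group contribution to the gradient error is therefore
\[
 p_k\sum_i\Bigl(\pi_i-\frac1{m_k}\Bigr)\nabla a_i .
\]
The key observation is that $\sum_i(\pi_i-1/m_k)=0$. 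This lets us subtract any fixed vector $c$ from each $\nabla a_i$, for instance $c=\nabla a_{i_0}$ for a reference event in the group, or the uniform mean. Each factor then becomes small.

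\textbf{Gradient bound.} Lemma~\ref{lem:nearby-risk-sets} gives two inputs. First, all $a_i$ in the group lie within an interval of length $\omega\le2L_qr_\delta$, because the nested risk sets in a group are totally ordered and the smallest is contained in the largest. Second, $\norm{\nabla a_i-c}\le 2XL_qr_\delta$ when $c$ is the gradient at the extreme risk set. A naive combination, bounding $\sum_i|\pi_i-1/m_k|$ by roughly $e^{\omega}-1\approx2L_qr_\delta$ and multiplying by $2XL_qr_\delta$, loses a constant: it yields about $4XL_q^2r_\delta^2$ instead of $XL_q^2r_\delta^2$.

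To recover the sharp constant I would exploit the monotone structure. The $a_i$ are not arbitrary; both $a_i$ and $\nabla a_i$ are functions of the mass $\alpha_i=Q_{u}^\beta(A_i)$ removed relative to the first (largest) risk set $\mathcal R_u$ of the group. From \eqref{eq:normalizer-identity},
\[
 a_i-a_u=\log\frac{1-\alpha_i}{1-r_i},\qquad
 \nabla a_i-\nabla a_u=\frac{\alpha_i}{1-\alpha_i}\,(g_u-g_{A_i}),
\]
where $g_u=\nabla a_u$ and $g_{A_i}$ is the conditional mean on $A_i$; the second identity follows from the mixture decomposition in the lemma's proof. Writing the error as a covariance under the uniform law on $\mathcal I_k$, namely $\mathrm{Cov}_{\rm unif}(e^{a_i},\nabla a_i)$ divided by the mean of $e^{a_i}$, I would bound it by Cauchy--Schwarz as the product of a standard deviation (rather than a range) of each factor. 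The ranges are at most $L_qr_\delta$ in the relevant scale, using $\alpha_i,r_i\in[0,L_qr_\delta]$, the conditions $L_qr_\delta\le1/2$ and $\norm{g_u-g_{A_i}}\le2X$, and the fact that a standard deviation is at most half the range. This should produce roughly $(L_qr_\delta)\cdot(XL_qr_\delta)$. Summing over groups with weights $p_k$, which total one, gives \eqref{eq:grouping-gradient}.

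\textbf{Curvature bound.} The covariance term in $\nabla^2\Phi_k$ is positive semidefinite, so it can be dropped. It remains to compare $\sum_i\pi_i\nabla^2a_i$ with $\frac1{m_k}\sum_i\nabla^2a_i$. The same zero-sum trick applies: subtract a reference Hessian, then use \eqref{eq:hessian-oscillation} ($5X^2L_qr_\delta$) together with the weight deviation $\sum_i|\pi_i-1/m_k|\lesssim L_qr_\delta$. This gives an operator-norm error of order $X^2L_q^2r_\delta^2$, and Weyl's inequality turns it into the lower bound \eqref{eq:grouping-curvature}. With halved ranges, $\tfrac12\cdot 5=\tfrac52$ matches the stated constant, which suggests this is the intended bookkeeping.

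\textbf{Main obstacle.} The hard step is the exact constants 1 and $5/2$. They require controlling the weight deviation by $L_qr_\delta$ (half of $\omega$) rather than by $e^{2L_qr_\delta}-1$. I would first try to center the weights at the midpoint of the range of $a_i$, then use $|\tanh|$-type or $e^x-1$ bounds under $L_qr_\delta\le1/2$. If the constants do not close, I would fall back to the structural identity above, expressing both $e^{a_i-a_u}$ and $\nabla a_i$ through $\alpha_i$ and $r_i$ directly.
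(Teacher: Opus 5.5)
Your outline matches the paper's up to the one step you flag as the obstacle. You work group by group with $\pi_i\propto e^{a_i}$, use $\nabla\Phi_k=\sum_i\pi_i\nabla a_i$ and $\nabla^2\Phi_k=\sum_i\pi_i\nabla^2a_i+\operatorname{Cov}_\pi(\nabla a_i)$, and discard the positive semidefinite covariance. Both claims then reduce to bounding $\E_\pi f-\E_u f$ for $f_i=v^\top\nabla a_i$ and for $f_i=v^\top\nabla^2a_i\,v$, with $u$ uniform on $\mathcal I_k$. The paper closes this step with a single reweighting inequality, $|\E_\pi f-\E_u f|\le\frac14\osc(f)\osc(a)$. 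It tilts $\pi_{t,i}\propto e^{ta_i}$ for $t\in[0,1]$, so that $\frac{d}{dt}\E_{\pi_t}f=\operatorname{Cov}_{\pi_t}(f,a)$. Cauchy--Schwarz with $\operatorname{Var}(f)\le\osc(f)^2/4$ then bounds the derivative uniformly in $t$. Combined with $\osc(a)\le2L_qr_\delta$ and the diameters $2XL_qr_\delta$ and $5X^2L_qr_\delta$ from Lemma~\ref{lem:nearby-risk-sets}, this gives the constants $1$ and $\frac52$ directly, with no exponential correction.

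Your route can be completed, but not quite as written.

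\textbf{Gradient.} The identity $\E_\pi f-\E_u f=\operatorname{Cov}_u(e^{a},f)/\E_u e^{a}$, with half-range bounds on both standard deviations, gives $\frac14(e^{\osc a}-1)\osc(f)$. The lemma's bound $\osc(a)\le2L_qr_\delta$ is then not enough, because $e^{2x}-1>2x$. You need the sharper $e^{\osc a}\le(1-L_qr_\delta)^{-1}$ from \eqref{eq:normalizer-identity}. That yields $e^{\osc a}-1\le2L_qr_\delta$, and this holds precisely because $L_qr_\delta\le\frac12$. Also take the standard deviation of $v^\top\nabla a_i$ direction by direction and then the supremum over unit $v$; a vector-valued Cauchy--Schwarz can lose a factor $\sqrt2$.

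\textbf{Hessian.} Your bookkeeping does not yet give $\frac52$, for two reasons. Centering at a reference Hessian costs the full diameter $5X^2L_qr_\delta$, not half of it. And the bound $\sum_i|\pi_i-1/m_k|\le L_qr_\delta$ is asserted rather than proved. The simplest repair is to apply the same covariance identity to $f_i=v^\top\nabla^2a_i\,v$. If you prefer to keep the $\ell_1$ form, center $f$ at the midpoint of its range. Then obtain $\sum_i|\pi_i-1/m_k|=2\sup_S|\pi(S)-u(S)|\le\frac12\osc(a)$ from the tilting argument applied to $f=\mathbf 1_S$.
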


\begin{proof}
Fix a group and suppress $\beta$ in $a_i(\beta)$.  Write
$g_i=\nabla a_i(\beta)$ and $H_i=\nabla^2a_i(\beta)$.  Differentiating
\eqref{eq:main-group-normalizer} gives
\begin{align}
 \nabla\Phi_k&=\sum_{i\in\mathcal I_k}\pi_i g_i,
 \label{eq:group-gradient-formula}\\
 \nabla^2\Phi_k&=\sum_{i\in\mathcal I_k}\pi_iH_i
                  +\operatorname{Cov}_{\pi}(g_i),
 \label{eq:group-hessian-formula}
\end{align}
where $\pi_i=e^{a_i}/\sum_{r\in\mathcal I_k}e^{a_r}$.  The exact Cox group
uses the uniform average of the $g_i$ and $H_i$.

By Lemma~\ref{lem:nearby-risk-sets}, the oscillation of the $a_i$ is at most
$2L_qr_\delta$, the diameter of the $g_i$ is at most
$2XL_qr_\delta$, and the diameter of the $H_i$ is at most
$5X^2L_qr_\delta$.  To compare the group weights, interpolate from the
uniform law $u$ to $\pi$ via
$\pi_{t,i}=e^{ta_i}/\sum_{r\in\mathcal I_k}e^{ta_r}$, $0\le t\le1$.
For any scalar array $f_i$, write
$\osc(f)=\max_i f_i-\min_i f_i$.  Then
$\frac{d}{dt}\E_{\pi_t}f=\operatorname{Cov}_{\pi_t}(f,a)$.
Cauchy--Schwarz and the variance bound
$\operatorname{Var}(f)\le\osc(f)^2/4$ therefore give
\begin{equation}
 |\E_\pi f-\E_u f|
 \le\frac14\osc(f)\osc(a).
 \label{eq:group-reweighting}
\end{equation}
Apply this to $f_i=v^\top g_i$ and take the supremum over unit vectors
$v$.  The groupwise gradient difference is at most
\[
 \left\|\sum_i\pi_i g_i-\frac1{m_k}\sum_i g_i\right\|
 \le(2XL_qr_\delta)\frac{2L_qr_\delta}{4}
 =XL_q^2r_\delta^2.
\]
For the Hessian, apply~\eqref{eq:group-reweighting} to
$f_i=v^\top H_iv$ for each unit $v$.  The covariance term in
\eqref{eq:group-hessian-formula} is positive semidefinite, so
\[
 \nabla^2\Phi_k
 \succeq\frac1{m_k}\sum_iH_i
 - (5X^2L_qr_\delta)\frac{2L_qr_\delta}{4}I.
\]
Weighting the groupwise bounds by $p_k$ proves the proposition.
\end{proof}

The same oscillation bound removes the group dependence from the softplus
moment.

\begin{lemma}
\label{lem:group-kappa-envelope}
Under~\eqref{eq:grouping-smallness}, every group satisfies
\begin{equation}
 \sup_{\beta\in\mathcal B}
 \E_k\exp\!\left(2[\beta^\top x_J-\Phi_k(\beta)]\right)
 \le\frac98\kappa.
 \label{eq:group-kappa-envelope}
\end{equation}
\end{lemma}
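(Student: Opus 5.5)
The bound should follow from unfolding the two-stage expectation and comparing it with the per-event moments $\kappa$. Then a Kantorovich-type inequality handles the remaining outer average over events in the group, using the normalizer identity \eqref{eq:normalizer-identity} from the proof of Lemma~\ref{lem:nearby-risk-sets}.

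Fix $\beta\in\mathcal B$ and a group $k$, and write $b_i:=e^{a_i(\beta)}$ for $i\in\mathcal I_k$. By the definition of the two-stage law and by \eqref{eq:main-kappa},
\[
 \E_k e^{2\beta^\top x_J}
 =\frac1{m_k}\sum_{i\in\mathcal I_k}\frac1{n_i}\sum_{j\in\mathcal R_i}e^{2\beta^\top x_j}
 \le\frac{\kappa}{m_k}\sum_{i\in\mathcal I_k}b_i^2 .
\]
By \eqref{eq:main-group-normalizer}, $e^{2\Phi_k}=\bigl(m_k^{-1}\sum_i b_i\bigr)^2$. The left side of \eqref{eq:group-kappa-envelope} is therefore at most $\kappa\,\E_u[b^2]/(\E_u b)^2$, where $u$ is the uniform law on $\mathcal I_k$. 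It remains to show that this ratio is at most $9/8$.

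\textbf{Step 1: the $b_i$ lie within a factor $2$ of each other.} Risk sets within a group are nested, so for any $u,v\in\mathcal I_k$ with $\mathcal R_v\subset\mathcal R_u$, identity \eqref{eq:normalizer-identity} gives
\[
 e^{a_u-a_v}=\frac{1-r}{1-\alpha}.
\]
By \eqref{eq:cardinality-and-mass} and the second condition in \eqref{eq:grouping-smallness}, both $r$ and $\alpha$ lie in $[0,1/2]$. Hence this ratio lies in $[1/2,2]$. (This needs the pair to be admissible in \eqref{eq:main-Lq}, which the first condition in \eqref{eq:grouping-smallness} guarantees, exactly as in Lemma~\ref{lem:nearby-risk-sets}; the case $\mathcal R_v=\mathcal R_u$ is trivial.) Consequently $\max_i b_i\le 2\min_i b_i$.

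\textbf{Step 2: Kantorovich inequality.} For a positive random variable $Y$ supported in $[c,Mc]$, one has
\[
 \frac{\E Y^2}{(\E Y)^2}\le\frac{(1+M)^2}{4M}.
\]
To see this, use $(Y-c)(Mc-Y)\le0$, i.e.\ $\E Y^2\le(1+M)c\,\E Y-Mc^2$, and then maximize $\bigl((1+M)ct-Mc^2\bigr)/t^2$ over $t=\E Y$. With $M=2$ this gives $9/8$. Taking the supremum over $\beta$ proves \eqref{eq:group-kappa-envelope}.

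The main obstacle is obtaining the sharp constant $9/8$. Using the cruder oscillation bound \eqref{eq:normalizer-oscillation} together with $\operatorname{Var}\le\osc^2/4$ would only give roughly $1+(e^{1}-1)^2/4$. The plan therefore relies on the exact ratio identity to get the factor $2$, and on the Kantorovich extremal two-point bound rather than a variance bound.
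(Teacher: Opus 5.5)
Your proposal is correct and follows essentially the same route as the paper: you bound each event's second moment by $\kappa e^{2a_i}$, use the normalizer identity to get $\max_i e^{a_i}\le 2\min_i e^{a_i}$ within a group, and finish with the Kantorovich bound $(1+2)^2/(4\cdot 2)=9/8$. One small slip: for $Y\in[c,Mc]$ the product $(Y-c)(Mc-Y)$ is $\ge 0$, not $\le 0$, but the inequality $\E Y^2\le(1+M)c\,\E Y-Mc^2$ that you derive from it is the correct one.
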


\begin{proof}
Put
\[
 A_i=e^{a_i(\beta)},
 \qquad
 B_i=\frac1{n_i}\sum_{j\in\mathcal R_i}e^{2\beta^\top x_j}.
\]
Definition~\eqref{eq:main-kappa} gives
$B_i\le\kappa A_i^2$.  Consequently the group moment is at most
\[
 \kappa
 \frac{m_k^{-1}\sum_iA_i^2}
      {(m_k^{-1}\sum_iA_i)^2}.
\]
If positive numbers $y$ lie in $[a,b]$, then
$(y-a)(b-y)\ge0$ implies $y^2\le(a+b)y-ab$.  Writing
$\overline y=\E y$ and maximizing over $\overline y\in[a,b]$ gives
\[
 \frac{\E y^2}{(\E y)^2}
 \le\frac{a+b}{\overline y}-\frac{ab}{\overline y^2}
 \le\frac{(a+b)^2}{4ab}
\]
the last maximum occurs at $\overline y=2ab/(a+b)$.
For any pair in the group, identity~\eqref{eq:normalizer-identity} shows that
both $e^{a_u-a_v}$ and $e^{a_v-a_u}$ are at most
$(1-L_qr_\delta)^{-1}$.  Hence the largest and smallest group values obey the
sharper ratio bound
$b/a=e^{\osc a}\le(1-L_qr_\delta)^{-1}\le2$.  The last display is therefore at most
$(1+2)^2/(4\cdot2)=9/8$.
\end{proof}

\subsection{Softplus gradient and curvature}

The next lemma is stated for a single group distribution.  It will be
applied with the fixed envelope from
Lemma~\ref{lem:group-kappa-envelope}.  Its $\bar\kappa$ is the group-moment
envelope of Section~\ref{sec:optimization-main}, bounded here by
$9\kappa/8$.

\begin{lemma}
\label{lem:softplus-gradient-curvature}
Let
\[
 \Phi(\beta)=\log\E e^{\beta^\top X}
\]
for a distribution supported on $\norm X\le X_0$, and suppose
\[
 \E e^{2[\beta^\top X-\Phi(\beta)]}\le\bar\kappa.
\]
Let $\Phi_\rho$ be its profiled softplus representation.  If
$\rho\bar\kappa\le1/8$, then
\begin{align}
 \norm{\nabla \Phi_\rho(\beta)-\nabla \Phi(\beta)}
 &\le3X_0\rho\bar\kappa,
 \label{eq:one-group-softplus-gradient}\\
 \nabla^2\Phi_\rho(\beta)
 &\succeq\nabla^2\Phi(\beta)-16X_0^2\rho\bar\kappa I.
 \label{eq:one-group-softplus-hessian}
\end{align}
\end{lemma}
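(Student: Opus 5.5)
The plan is to make both $\Phi$ and $\Phi_\rho$ explicit as weighted moments of $X$, and then compare the weights. Let $s_*=s_\rho(\beta)$ be the optimal shift and set
\[
 W=e^{\beta^\top X-\Phi(\beta)},\qquad q=e^{\Phi(\beta)-s_*},\qquad w=w_\rho(\beta^\top X-s_*)=\frac{qW}{1+\rho qW},\qquad m=w(1-\rho w).
\]
Then $\E W=1$, and the shift first-order condition gives $\E w=1$. By Proposition~\ref{prop:generic-approximation} we have $1<q\le(1-\rho\bar\kappa)^{-1}\le 8/7$.

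\textbf{Gradient bound.} By the envelope theorem, $\nabla\Phi_\rho=\E[wX]$ and $\nabla\Phi=\E[WX]$, so the gradient error is at most $X_0\,\E|w-W|$.
\begin{itemize}
\item From the explicit form of $w$ we get $0\le qW-w=\rho qWw\le\rho q^2W^2$.
\item Taking expectations and using $\E w=1$ gives $q-1=\E[qW-w]\le\rho q^2\E W^2\le\rho q^2\bar\kappa$.
\item Hence
\[
 \E|w-W|\le\E(qW-w)+(q-1)\E W\le 2\rho q^2\bar\kappa\le\tfrac{128}{49}\rho\bar\kappa\le 3\rho\bar\kappa .
\]
\end{itemize}
This yields \eqref{eq:one-group-softplus-gradient}.

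\textbf{Hessian via a variational form.} I will use the Schur-complement formula \eqref{eq:profile-hessian}. The joint per-sample curvature is $h_\rho''=m$, so
\[
 G_{\beta\beta}=\E[mXX^\top],\qquad G_{\beta s}=-\E[mX],\qquad G_{ss}=\E m>0 .
\]
Substituting these into the Schur complement and completing the square gives, for every unit vector $v$,
\[
 v^\top\nabla^2\Phi_\rho v=\min_{c\in\R}\E[m(v^\top X-c)^2],
\]
with minimizer $c_*=\E[m\,v^\top X]/\E m$. Since $c_*$ is a weighted average, $|c_*|\le X_0$. Similarly $v^\top\nabla^2\Phi v=\operatorname{Var}_W(v^\top X)=\min_c\E[W(v^\top X-c)^2]$.

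Evaluating the second minimum at $c_*$, and using $|v^\top X-c_*|\le2X_0$, gives
\[
 v^\top\nabla^2\Phi v-v^\top\nabla^2\Phi_\rho v\le\E[(W-m)(v^\top X-c_*)^2]\le4X_0^2\,\E|W-m| .
\]

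\textbf{Main obstacle: bounding $\E|W-m|$.} This is the step that needs care, because $m$ can be pointwise tiny compared with $W$. Writing $W-m=(W-w)+\rho w^2$ avoids any pointwise lower bound on $m$ and gives
\[
 \E|W-m|\le\E|W-w|+\rho\E w^2 .
\]
The first term is at most $\tfrac{128}{49}\rho\bar\kappa$ from the gradient step. For the second, $w\le qW$ gives $\E w^2\le q^2\E W^2\le\tfrac{64}{49}\bar\kappa$. Together,
\[
 \E|W-m|\le\tfrac{192}{49}\rho\bar\kappa<4\rho\bar\kappa .
\]
Multiplying by $4X_0^2$ gives a deficit below $16X_0^2\rho\bar\kappa$, which is \eqref{eq:one-group-softplus-hessian}.

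The remaining point to verify is that differentiating the profile is legitimate here. This holds because the optimal shift is unique and interior with $G_{ss}=\E m>0$, as noted at the start of Appendix~\ref{sec:generic}.
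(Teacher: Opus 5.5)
Your proposal is correct and follows essentially the same route as the paper: the envelope gradient $\E[wX]$, the Schur complement rewritten as a weighted least-squares variance evaluated at the soft minimizer, and the split $\E|m-W|\le\E|w-W|+\rho\E w^2$ with $\E w^2\le q^2\E W^2$. The only cosmetic difference is in bounding $\E|w-W|$: you get $2(q-1)$ from the exact identity $\E(qW-w)=q-1$, whereas the paper uses the pointwise bound $|w-W|\le(q-1)W+\rho qW^2$, and both give constants below $3$ and $4$.
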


\begin{proof}
Put $W=e^{\beta^\top X-\Phi(\beta)}$, so $\E W=1$ and
$\E W^2\le\bar\kappa$.  At the optimal shift, write
\[
 c=e^{\Phi(\beta)-s_\rho(\beta)},
 \qquad
 w=\frac{cW}{1+\rho cW}.
\]
The shift first-order condition is $\E w=1$, and
Proposition~\ref{prop:generic-approximation} gives
\[
 1<c\le\frac1{1-\rho\bar\kappa}.
\]
Therefore
\begin{align*}
 \E|w-W|
 &\le(c-1)\E W+\rho c\E W^2\\
 &\le\frac{2\rho\bar\kappa}{1-\rho\bar\kappa}
 \le3\rho\bar\kappa.
\end{align*}
Since the two gradients are $\E(wX)$ and $\E(WX)$, this proves
\eqref{eq:one-group-softplus-gradient}.

For the Hessian, let
\[
 m=w(1-\rho w).
\]
The exact and profiled Hessians have the weighted least-squares forms
\begin{align}
 v^\top\nabla^2\Phi(\beta)v
 &=\min_{r\in\R}\E[W(v^\top X-r)^2],
 \label{eq:exact-weighted-variance}\\
 v^\top\nabla^2\Phi_\rho(\beta)v
 &=\min_{r\in\R}\E[m(v^\top X-r)^2].
 \label{eq:soft-weighted-variance}
\end{align}
For the first identity, $\E W=1$ and
$\nabla^2\Phi=\E(WXX^\top)-\E(WX)\E(WX)^\top$.  For the second, the joint
softplus Hessian blocks are
\[
 G_{\beta\beta}=\E(mXX^\top),\qquad
 G_{\beta s}=-\E(mX),\qquad
 G_{ss}=\E m.
\]
Taking their Schur complement gives
$\E(mXX^\top)-\E(mX)\E(mX)^\top/\E m$, which is exactly the minimum over
$r$ in~\eqref{eq:soft-weighted-variance}.

Moreover,
\[
 \E w^2\le c^2\E W^2
 \le\frac{\bar\kappa}{(1-\rho\bar\kappa)^2},
\]
and hence, with $a=\rho\bar\kappa\le1/8$,
\[
 \E|m-W|
 \le\E|w-W|+\rho\E w^2
 \le\frac{2a}{1-a}+\frac{a}{(1-a)^2}
 =\frac{a(3-2a)}{(1-a)^2}
 \le4a=4\rho\bar\kappa.
\]
Let $r_m$ minimize~\eqref{eq:soft-weighted-variance}.  It is a weighted
mean of $v^\top X$ and therefore lies between the smallest and largest
values of $v^\top X$.  Thus
$|v^\top X-r_m|\le2X_0\norm v$.  Evaluating both weighted variances at
$r_m$ gives
\begin{align*}
 v^\top\nabla^2\Phi_\rho v
 &=\E[m(v^\top X-r_m)^2]\\
 &\ge\min_r\E[W(v^\top X-r)^2]
      -4X_0^2\norm v^2\E|m-W|\\
  &\ge v^\top\nabla^2\Phi\,v-16X_0^2\rho\bar\kappa\norm v^2.
\end{align*}
This proves~\eqref{eq:one-group-softplus-hessian}.
\end{proof}

Combining the preceding results gives one finite-sample comparison that will
be used twice: first for computation and later for statistics.

\begin{proposition}
\label{prop:cox-total-comparison}
Suppose~\eqref{eq:grouping-smallness} holds and
\begin{equation}
 \rho\kappa\le\frac19.
 \label{eq:cox-rho-small}
\end{equation}
Then, uniformly over $\beta\in\mathcal B$,
\begin{align}
 \norm{\nabla\mathcal L_{\rho,\delta}(\beta)
       -\nabla\mathcal L(\beta)}
 &\le X L_q^2r_\delta^2+4X\rho\kappa,
 \label{eq:cox-total-gradient}\\
 \nabla^2\mathcal L_{\rho,\delta}(\beta)
 &\succeq\nabla^2\mathcal L(\beta)
 -\left(\frac52X^2L_q^2r_\delta^2
        +20X^2\rho\kappa\right)I.
 \label{eq:cox-total-hessian}
\end{align}
\end{proposition}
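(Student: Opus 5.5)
The plan is a triangle inequality through the grouped objective $\widetilde{\mathcal L}_\delta$. The linear and ridge terms $\lambda\norm\beta^2/2-\beta^\top\bar x_{\mathcal E}$ are common to $\mathcal L$, $\widetilde{\mathcal L}_\delta$ and $\mathcal L_{\rho,\delta}$, so they cancel in every gradient and Hessian difference. Only the normalizer parts remain to compare. We write
\[
 \nabla\mathcal L_{\rho,\delta}-\nabla\mathcal L
 =\bigl(\nabla\mathcal L_{\rho,\delta}-\nabla\widetilde{\mathcal L}_\delta\bigr)
 +\bigl(\nabla\widetilde{\mathcal L}_\delta-\nabla\mathcal L\bigr),
\]
and do the same for the Hessians. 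Proposition~\ref{prop:grouping-gradient-curvature} bounds the second bracket by $XL_q^2r_\delta^2$. It also gives the matrix inequality $\nabla^2\widetilde{\mathcal L}_\delta\succeq\nabla^2\mathcal L-\frac52X^2L_q^2r_\delta^2I$. Both need only~\eqref{eq:grouping-smallness}.

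For the softplus bracket we argue group by group. The profile minimization over $s\in\R^K$ separates across groups, so
\[
 \mathcal L_{\rho,\delta}-\widetilde{\mathcal L}_\delta=\sum_kp_k(\Phi_{k,\rho}-\Phi_k),
\]
where $\Phi_{k,\rho}$ is the one-group profiled softplus of $\Phi_k$ under the two-stage law $\E_k$. That law is supported on covariates with $\norm{x_J}\le X$, so we apply Lemma~\ref{lem:softplus-gradient-curvature} with $X_0=X$. By Lemma~\ref{lem:group-kappa-envelope}, the group moment envelope can be taken as $\bar\kappa=\frac98\kappa$, uniformly in $\beta$ and $k$. The condition $\rho\kappa\le1/9$ in~\eqref{eq:cox-rho-small} gives exactly $\rho\bar\kappa\le\frac98\cdot\frac19=\frac18$, which is the hypothesis of the lemma. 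The lemma then yields a groupwise gradient error at most $3X\rho\cdot\frac98\kappa=\frac{27}{8}X\rho\kappa\le4X\rho\kappa$. It also yields a Hessian deficit at most $16X^2\rho\cdot\frac98\kappa=18X^2\rho\kappa\le20X^2\rho\kappa$. Averaging with the weights $p_k$, which sum to one, preserves both bounds by convexity of the norm and linearity of the Loewner order.

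Finally we add the two brackets. The triangle inequality gives~\eqref{eq:cox-total-gradient}, and chaining the two Loewner inequalities gives~\eqref{eq:cox-total-hessian}.

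The main point needing care is that Lemma~\ref{lem:softplus-gradient-curvature} is actually applicable to each grouped normalizer. Its profile $\Phi_\rho$ must coincide with the group-$k$ summand of $\mathcal L_{\rho,\delta}$, and $\mathcal L_{\rho,\delta}$ must be twice differentiable. Twice differentiability comes from the implicit-function and Schur-complement justification in Appendix~\ref{sec:generic}: the scalar minimizer is unique and interior, with positive second derivative. The other check is that the envelope from Lemma~\ref{lem:group-kappa-envelope} is uniform over $\beta\in\mathcal B$, so the resulting bounds hold uniformly in $\beta$ as claimed.
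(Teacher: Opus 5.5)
Your proposal is correct and follows the paper's proof essentially step for step. Like the paper, you route through $\widetilde{\mathcal L}_\delta$ and use Proposition~\ref{prop:grouping-gradient-curvature} for the grouping error. You then apply Lemma~\ref{lem:softplus-gradient-curvature} group by group with the envelope $\bar\kappa=\frac98\kappa$ from Lemma~\ref{lem:group-kappa-envelope}, so $\rho\kappa\le1/9$ gives $\rho\bar\kappa\le1/8$, and the constants $\frac{27}{8}\le4$ and $18\le20$ are absorbed after $p_k$-averaging. Your explicit checks that the profile separates across groups and that the envelope is uniform in $\beta$ fill in details the paper leaves implicit.
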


\begin{proof}
Lemma~\ref{lem:group-kappa-envelope} bounds each group moment by
$9\kappa/8$.  Condition~\eqref{eq:cox-rho-small} therefore makes
$\rho(9\kappa/8)\le1/8$ in every group.  Lemma
\ref{lem:softplus-gradient-curvature}, followed by averaging with the $p_k$,
gives a softplus gradient error smaller than $4X\rho\kappa$ and a
Hessian loss smaller than $20X^2\rho\kappa$.  Add the grouping
bounds in Proposition~\ref{prop:grouping-gradient-curvature}.
\end{proof}

\subsection{Curvature transfer and deterministic minimizer comparison}

We now impose the main Cox assumption:
\begin{equation}
 \nabla^2\mathcal L(\beta)\succeq\nu I
 \qquad\text{for every }\beta\in\mathcal B,
 \label{eq:exact-cox-curvature}
\end{equation}
for some $\nu>0$.  When $\lambda=0$, this is an identifiability and
design condition on the exact partial likelihood, not curvature supplied by
a penalty.

\begin{theorem}
\label{thm:cox-transfer}
Assume~\eqref{eq:exact-cox-curvature},
\eqref{eq:grouping-smallness}, and~\eqref{eq:cox-rho-small}.  Let
$\beta^\star$ minimize $\mathcal L$ and let $\beta_{\rho,\delta}$ minimize
$\mathcal L_{\rho,\delta}$ on $\mathcal B$.  Then
\begin{equation}
 \norm{\beta_{\rho,\delta}-\beta^\star}
 \le
 \frac X\nu\left(L_q^2r_\delta^2+4\rho\kappa\right).
 \label{eq:cox-minimizer-distance}
\end{equation}
If, in addition,
\begin{equation}
 \frac52X^2L_q^2r_\delta^2+20X^2\rho\kappa
 \le\frac\nu2,
 \label{eq:cox-curvature-budget}
\end{equation}
then
\begin{equation}
 \nabla^2\mathcal L_{\rho,\delta}(\beta)
 \succeq\frac\nu2I
 \qquad(\beta\in\mathcal B).
 \label{eq:profile-cox-curvature}
\end{equation}
\end{theorem}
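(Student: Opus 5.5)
The plan is to derive both claims of Theorem~\ref{thm:cox-transfer} from the uniform comparison in Proposition~\ref{prop:cox-total-comparison}. I combine it with strong convexity of the exact objective and the variational inequalities characterizing constrained minimizers on the convex set $\mathcal B$. Write $\varepsilon:=X(L_q^2r_\delta^2+4\rho\kappa)$. Under \eqref{eq:grouping-smallness} and \eqref{eq:cox-rho-small}, the proposition gives $\norm{\nabla\mathcal L_{\rho,\delta}(\beta)-\nabla\mathcal L(\beta)}\le\varepsilon$ for every $\beta\in\mathcal B$.

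For the minimizer bound \eqref{eq:cox-minimizer-distance}, I use the following ingredients.
\begin{itemize}
\item \emph{Strong monotonicity.} Assumption~\eqref{eq:exact-cox-curvature}, integrated along the segment between two points of the convex set $\mathcal B$, gives $\ip{\nabla\mathcal L(\beta)-\nabla\mathcal L(\beta')}{\beta-\beta'}\ge\nu\norm{\beta-\beta'}^2$.
\item \emph{Optimality conditions.} The minimizers satisfy $\ip{\nabla\mathcal L(\beta^\star)}{\beta_{\rho,\delta}-\beta^\star}\ge0$ and $\ip{\nabla\mathcal L_{\rho,\delta}(\beta_{\rho,\delta})}{\beta^\star-\beta_{\rho,\delta}}\ge0$. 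The second requires $\mathcal L_{\rho,\delta}$ to be differentiable, which the envelope theorem provides as discussed in Appendix~\ref{sec:generic}. It does not require convexity of $\mathcal L_{\rho,\delta}$: $\beta_{\rho,\delta}$ is a global minimizer on $\mathcal B$, so the directional derivative toward any feasible point is nonnegative.
\end{itemize}
Adding the two variational inequalities and inserting $\pm\nabla\mathcal L(\beta_{\rho,\delta})$ gives
\[
 \nu\norm{\beta_{\rho,\delta}-\beta^\star}^2
 \le\ip{\nabla\mathcal L(\beta_{\rho,\delta})-\nabla\mathcal L(\beta^\star)}{\beta_{\rho,\delta}-\beta^\star}
 \le\ip{\nabla\mathcal L(\beta_{\rho,\delta})-\nabla\mathcal L_{\rho,\delta}(\beta_{\rho,\delta})}{\beta_{\rho,\delta}-\beta^\star}.
\]
Cauchy--Schwarz bounds the right-hand side by $\varepsilon\norm{\beta_{\rho,\delta}-\beta^\star}$. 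Dividing by the distance, or noting the bound is trivial if the distance is zero, yields \eqref{eq:cox-minimizer-distance}.

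For the curvature claim \eqref{eq:profile-cox-curvature}, apply the Hessian bound \eqref{eq:cox-total-hessian} together with \eqref{eq:exact-cox-curvature}. This gives $\nabla^2\mathcal L_{\rho,\delta}(\beta)\succeq(\nu-\tfrac52X^2L_q^2r_\delta^2-20X^2\rho\kappa)I$, and by \eqref{eq:cox-curvature-budget} the right-hand side is at least $\nu/2$. Twice differentiability of the profile, via the Schur complement \eqref{eq:profile-hessian}, is already justified in Appendix~\ref{sec:generic}.

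There is no real obstacle, only two bookkeeping points. First, the first part must avoid assuming convexity of $\mathcal L_{\rho,\delta}$, and the global first-order condition above handles this. Second, existence of both minimizers follows from compactness of $\mathcal B$ and continuity.
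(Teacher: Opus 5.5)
Your proposal is correct and matches the paper's proof essentially step for step: strong monotonicity of $\nabla\mathcal L$ on $\mathcal B$, the two variational inequalities, Cauchy--Schwarz combined with \eqref{eq:cox-total-gradient}, and the curvature claim read off directly from \eqref{eq:cox-total-hessian} under the budget \eqref{eq:cox-curvature-budget}. One small slip: the term to insert and subtract is $\pm\nabla\mathcal L_{\rho,\delta}(\beta_{\rho,\delta})$, not $\pm\nabla\mathcal L(\beta_{\rho,\delta})$ as your text says, but your displayed inequality chain is already correct.
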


\begin{proof}
Under the additional curvature budget, the curvature conclusion follows
immediately from \eqref{eq:cox-total-hessian}.  For the distance bound, which
does not use that budget, put
$h=\beta_{\rho,\delta}-\beta^\star$.  Strong convexity of the exact
objective gives
\[
 \ip{\nabla\mathcal L(\beta_{\rho,\delta})
      -\nabla\mathcal L(\beta^\star)}{h}
 \ge\nu\norm h^2.
\]
The variational inequalities for the two constrained minimizers give
\[
 \ip{\nabla\mathcal L(\beta^\star)}{h}\ge0,
 \qquad
 \ip{\nabla\mathcal L_{\rho,\delta}
      (\beta_{\rho,\delta})}{h}\le0.
\]
Insert and subtract
$\nabla\mathcal L_{\rho,\delta}(\beta_{\rho,\delta})$ in the
left side of the strong-convexity display.  The two variational-inequality
terms are nonpositive after this rearrangement, so
\[
 \nu\norm h^2
 \le
 \ip{\nabla\mathcal L(\beta_{\rho,\delta})
       -\nabla\mathcal L_{\rho,\delta}
        (\beta_{\rho,\delta})}{h}
 \le
 \norm{\nabla\mathcal L(\beta_{\rho,\delta})
       -\nabla\mathcal L_{\rho,\delta}
        (\beta_{\rho,\delta})}\norm h.
\]
Now use~\eqref{eq:cox-total-gradient}. If $h=0$ there is nothing to prove.
\end{proof}

\subsection{End-to-end computational rate}

For a declared horizon $T$, choose the grouping tolerance $\delta_T$, build
the groups once, and keep them fixed throughout the run.  Let $K_T$ be their
number and define
\begin{align}
 A_{K_T}&:=\lambda+\frac{K_T}{4}(X^2+1),
 \label{eq:cox-C}\\
 V_T&:=2\lambda^2B_0^2
       +\frac92K_T\kappa(5X^2+1),
 \label{eq:cox-V}\\
 D^2&:=\operatorname{diam}(\mathcal B)^2+(2M+1)^2,
 \label{eq:cox-D}
\end{align}
where $B_0:=\sup_{\beta\in\mathcal B}\norm\beta$.  These are the generic
constants~\eqref{eq:main-generic-constants}--\eqref{eq:main-generic-diameter},
using the fixed group moment envelope $9\kappa/8$.

Under the curvature budget~\eqref{eq:cox-curvature-budget}, the profiled
Cox surrogate has curvature at least $\nu/2$.  Taking $\gamma=\nu/2$
in~\eqref{eq:main-generic-mu} gives the $T$-independent restricted-secant
constant $\mu$ stated in Corollary~\ref{cor:main-cox-rate}.
This constant has only polynomial dependence on $M$.  The separate
intrinsic factors $L_q$ and $\kappa$ may still
be large when risk weights are highly heterogeneous.

The specialized stochastic step samples a group uniformly, then an event
uniformly from that group, and finally a subject uniformly from the event's
risk set.  Its expected objective has
\[
 R(\beta)=\frac\lambda2\norm\beta^2
          -\beta^\top\bar x_{\mathcal E},
 \qquad
 L(\beta,x)=\beta^\top x.
\]
The weighted projection and shift box
$[-M-1,M]^{K_T}$ are as in~\eqref{eq:main-generic-update}.
All expectations in this subsection are conditional on the data and are over
the random draws made by the algorithm.

\begin{lemma}
\label{lem:cox-sampled-event-noise}
Suppose~\eqref{eq:grouping-smallness} holds and
$\rho\kappa\le1/9$.  For the single-triple gradient $g_t$ in
\eqref{eq:main-beta-update}--\eqref{eq:main-shift-update}, the minimizer
$z_\rho$ of $G_{\rho,\delta}$ on
$\mathcal B\times[-M-1,M]^{K_T}$ satisfies
\begin{equation}
 \E_t\norm{g_t}_{P^{-1}}^2
 \le\frac{4A_{K_T}}\rho
 [G_{\rho,\delta}(z_t)-G_{\rho,\delta}(z_\rho)]+2V_T.
 \label{eq:cox-sampled-event-noise}
\end{equation}
\end{lemma}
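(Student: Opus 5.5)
The plan is to rerun the expected-smoothness argument of Lemma~\ref{lem:smoothness-noise} for the Cox instance. There is one change: the linear part of $R$ is no longer differentiated exactly, because it is replaced by the sampled term $Kp_{k_t}x_{i_t}$. Since $L_k((I,J),\beta)=\beta^\top x_J$ is linear in $\beta$, and the sampled linear term $-Kp_k\beta^\top x_i$ is also linear, neither contributes to any Bregman divergence. So the ``$r$'' part of the previous proof goes through verbatim, and only the ``$b$'' part (the noise at the optimum) must be recomputed.

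\emph{Step 1 (smoothness and the group moment envelope).} For a sampled triple $(k,i,j)$, the sample function is
\[
\tfrac\lambda2\norm\beta^2-Kp_k\beta^\top x_i+Kp_k\bigl[s_k-1+h_\rho(\beta^\top x_j-s_k)\bigr].
\]
Its Hessian in direction $(u,v)$ is
\[
\lambda\norm u^2+Kp_kh_\rho''\,(\ip{x_j}{u}-v_k)^2 .
\]
By the weighted Cauchy--Schwarz step and $h_\rho''\le1/(4\rho)$, this is at most $(A_{K_T}/\rho)\norm{(u,v)}_P^2$, using $\rho<1$ to absorb $\lambda$. Lemma~\ref{lem:group-kappa-envelope} gives the group envelope $\bar\kappa\le 9\kappa/8$. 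Hence $\rho\bar\kappa\le1/8$, and Proposition~\ref{prop:generic-approximation} applies, as does the bound $\E_kw_*^2\le(64/49)\bar\kappa\le\tfrac32\kappa$ from the proof of Lemma~\ref{lem:smoothness-noise}. The first-order condition for the shifts also gives $\E_kw_*=1$. That identity needs the optimal shifts to be interior to $[-M-1,M]$, which holds by the location bound \eqref{eq:generic-shift-location} with $B=M$.

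\emph{Step 2 (decomposition).} Write $g_t=b+r$, with
\[
b=\bigl(\lambda\beta_t+Kp_k(w_*x_j-x_i),\,Kp_k(1-w_*)e_k\bigr),\qquad
r=Kp_k(w-w_*)(x_j,-e_k).
\]
Here $w_*$ is the weight at $z_\rho$ for the same draw. The $r$ term is handled exactly as before. Scalar co-coercivity gives $(w-w_*)^2\le D_h(a,a_*)/(2\rho)$. The Bregman identity for $G_{\rho,\delta}$ has no $D_{L}$ or sampled-linear contributions, so
\[
D_{G_{\rho,\delta}}(z,z_\rho)\ge\textstyle\sum_kp_k\E_kD_h .
\]
Constrained optimality then turns this into the gap $G_{\rho,\delta}(z_t)-G_{\rho,\delta}(z_\rho)$. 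Using $K(X^2+1)\le4A_{K_T}$ yields the term $\tfrac{4A_{K_T}}{\rho}[\cdots]$ after the factor $2$ from $\norm{b+r}^2\le2\norm b^2+2\norm r^2$.

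\emph{Step 3 (noise at the optimum: the main check).} This is the only genuinely new computation. We have
\[
\norm{b_\theta}^2\le2\lambda^2B_0^2+2K^2p_k^2\norm{w_*x_j-x_i}^2 ,
\qquad
\norm{w_*x_j-x_i}^2\le2X^2(w_*^2+1).
\]
Averaging over the uniform $k$ and using $p_k\le1$ gives
\[
\E\norm{b_\theta}^2\le2\lambda^2B_0^2+4KX^2\textstyle\sum_kp_k(\E_kw_*^2+1)\le2\lambda^2B_0^2+4KX^2(\tfrac32\kappa+1).
\]
The shift part contributes
\[
K\textstyle\sum_kp_k\E_k(1-w_*)^2=K\sum_kp_k(\E_kw_*^2-1)\le\tfrac32K\kappa .
\]
Since $\kappa\ge1$ by Jensen, the total is at most $2\lambda^2B_0^2+10KX^2\kappa+\tfrac32K\kappa$, which is at most $V_T$ as defined in \eqref{eq:cox-V}. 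The factor $2$ from the decomposition then produces $2V_T$. The main obstacle is bookkeeping this $x_i$ cross term without using $\E_t\widehat x_{\mathcal E,t}=\bar x_{\mathcal E}$ to cancel anything. I deliberately bound it crudely by $2X^2$ per draw, since the constant in $V_T$ leaves ample slack. Conditioning on the history and taking a fresh draw gives \eqref{eq:cox-sampled-event-noise}.
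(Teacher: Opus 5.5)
Your proposal is correct and follows the paper's proof essentially step for step. Both use the same $b+r$ decomposition at $z_\rho$, and both handle $r$ through scalar co-coercivity of $h_\rho$ together with the vanishing Bregman divergence of the linear terms. Both then bound $b$ crudely per draw: you use the sharper $\E_kw_*^2\le(72/49)\kappa$ and $p_k^2\le p_k$, where the paper uses $\E_kw_*^2\le4\bar\kappa_{\rm grp}$ and $\sum_kp_k^2\le1$, and both land under $V_T$. Your Step~1 smoothness computation is harmless but unnecessary, since the inequality only relies on the co-coercivity of $h_\rho$ and constrained optimality.
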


\begin{proof}
Write $\bar\kappa_{\rm grp}$ for the maximum normalized exponential
moment of the group distributions.  Lemma~\ref{lem:group-kappa-envelope}
gives $\bar\kappa_{\rm grp}\le9\kappa/8$, so
$\rho\bar\kappa_{\rm grp}\le1/8$.  At $z_\rho$, the groupwise shift
condition and the moment bound used in Lemma~\ref{lem:smoothness-noise}
give $\E_kw_{k,*}=1$ and
$\E_kw_{k,*}^2\le4\bar\kappa_{\rm grp}$.  For the same sampled triple
at $z=(\beta,s)$ and $z_\rho$, decompose its gradient as $g(z)=b+r$, where
\begin{align*}
 b_\beta&=\lambda\beta+K_Tp_k(w_{k,*}x_J-x_I),
 &b_s&=K_Tp_k(1-w_{k,*})e_k,\\
 r&=K_Tp_k(w-w_{k,*})(x_J,-e_k).
\end{align*}
The bound $\norm\beta\le B_0$, $\norm{x_j}\le X$, and uniform group
sampling give
\begin{align*}
 \E\norm b_{P^{-1}}^2
 &\le2\lambda^2B_0^2
  +4K_TX^2\sum_kp_k^2(\E_kw_{k,*}^2+1)
  +K_T\sum_kp_k\E_k(1-w_{k,*})^2\\
 &\le2\lambda^2B_0^2
  +4K_TX^2(4\bar\kappa_{\rm grp}+1)
  +4K_T\bar\kappa_{\rm grp}
 \le V_T.
\end{align*}
The last step uses $1\le\bar\kappa_{\rm grp}\le9\kappa/8$.
As in the proof of Lemma~\ref{lem:smoothness-noise}, scalar
co-coercivity of $h_\rho$ yields
\[
 \E\norm r_{P^{-1}}^2
 \le\frac{K_T(X^2+1)}{2\rho}
 \sum_kp_k\E_k D_{h_\rho}(a,a_*).
\]
The sampled event term is affine in $\beta$, so its Bregman divergence
vanishes.  Consequently the sum on the right is at most
$D_{G_{\rho,\delta}}(z,z_\rho)$, which is at most the objective gap by
constrained optimality.  Apply
$\norm{b+r}_{P^{-1}}^2\le2\norm b_{P^{-1}}^2+2\norm r_{P^{-1}}^2$
and $K_T(X^2+1)\le4A_{K_T}$ to obtain
\eqref{eq:cox-sampled-event-noise}.
\end{proof}

\begin{theorem}
\label{thm:cox-computational}
Assume the standing covariate and predictor bounds and
\eqref{eq:exact-cox-curvature}.  Fix $T\ge2$ and $q\in(0,1)$, and suppose the chosen
$\delta_T$ satisfies
\[
 r_T:=\frac{\delta_T}{1+\delta_T},
 \qquad
 r_T\le1-q,
 \qquad
 L_qr_T\le\frac12.
\]
Set
\begin{equation}
 \rho_T:=\frac{8A_{K_T}\log T}{\mu T},
 \qquad
 \eta_T:=\frac{2\log T}{\mu T}.
 \label{eq:cox-algorithm-parameters}
\end{equation}
Suppose $\rho_T\le1$, $\rho_T\kappa\le1/9$, and
\begin{equation}
 \frac52X^2L_q^2r_T^2+20X^2\rho_T\kappa
 \le\frac\nu2.
 \label{eq:cox-algorithm-curvature}
\end{equation}
Run weighted projected softplus SGD for $T$ iterations and let
$\widehat\beta_T=\beta_{T+1}$.  Then
\begin{align}
 \E\norm{\widehat\beta_T-\beta^\star}^2
 \le{}&
 \frac{2D^2}{T^2}
 +\frac{8V_T\log T}{\mu^2T}
 \nonumber\\
 &+\frac{2X^2}{\nu^2}
 \left(L_q^2r_T^2+4\rho_T\kappa\right)^2.
 \label{eq:cox-mean-square-bound}
\end{align}
If $\beta^\star$ is in the interior of $\mathcal B$, then also
\begin{align}
 \E[\mathcal L(\widehat\beta_T)-\mathcal L(\beta^\star)]
 \le(\lambda+X^2)\left\{
 \frac{D^2}{T^2}
 +\frac{4V_T\log T}{\mu^2T}
 +\frac{X^2}{\nu^2}
 \left(L_q^2r_T^2+4\rho_T\kappa\right)^2
 \right\}.
 \label{eq:cox-objective-bound}
\end{align}
\end{theorem}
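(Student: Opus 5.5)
The plan is to split the error into an optimization part and a bias part,
\[
 \norm{\widehat\beta_T-\beta^\star}^2\le2\norm{\beta_{T+1}-\beta_{\rho_T,\delta_T}}^2+2\norm{\beta_{\rho_T,\delta_T}-\beta^\star}^2,
\]
where $\beta_{\rho_T,\delta_T}$ denotes the $\beta$-component of the minimizer $z_\rho$ of $G_{\rho_T,\delta_T}$ on $\mathcal B\times[-M-1,M]^{K_T}$. This $\beta$-component coincides with the minimizer of the profile $\mathcal L_{\rho_T,\delta_T}$ on $\mathcal B$, because Proposition~\ref{prop:generic-approximation} places every optimal shift inside the box once $\rho\bar\kappa_{\rm grp}\le1/8$. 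The bias term is handled directly by Theorem~\ref{thm:cox-transfer}. The grouping conditions~\eqref{eq:grouping-smallness} hold with $r_T$, and $\rho_T\kappa\le1/9$ is assumed. Squaring~\eqref{eq:cox-minimizer-distance} and multiplying by $2$ gives exactly the last term of~\eqref{eq:cox-mean-square-bound}.

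For the optimization term I will rerun the proof of Theorem~\ref{thm:main-optimization} with Cox-specific constants. First, the curvature budget~\eqref{eq:cox-algorithm-curvature} together with Theorem~\ref{thm:cox-transfer} gives $\nabla^2\mathcal L_{\rho_T,\delta_T}\succeq(\nu/2)I$, so $\gamma=\nu/2$. Lemma~\ref{lem:generic-rsi} then applies with $G=X$ and $B=M$. Its hypotheses are the group moment envelope $9\kappa/8$ from Lemma~\ref{lem:group-kappa-envelope} and $\rho_T\cdot9\kappa/8\le1/8$. It yields the restricted-secant inequality with the $\mu$ of Corollary~\ref{cor:main-cox-rate}.

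Second, I replace Lemma~\ref{lem:smoothness-noise} by Lemma~\ref{lem:cox-sampled-event-noise}, which gives expected smoothness with constants $A_{K_T}$ and $V_T$. The recursion proof goes through verbatim provided $\eta_T\le\rho_T/(4A_{K_T})$ and $\eta_T\le1/\mu$. With the schedule~\eqref{eq:cox-algorithm-parameters}, $\rho_T/(4A_{K_T})=2\log T/(\mu T)=\eta_T$, so the first condition holds with equality. The second condition $\mu\eta_T=2\log T/T\le1$ holds for $T\ge2$.

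The recursion then gives
\[
 \E\norm{z_{T+1}-z_\rho}_P^2\le e^{-2\log T}D^2+2\eta_TV_T/\mu=D^2/T^2+4V_T\log T/(\mu^2T).
\]
Dropping the shift coordinates and multiplying by $2$ gives the first two terms of~\eqref{eq:cox-mean-square-bound}.

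For the objective bound~\eqref{eq:cox-objective-bound}, interiority makes $\nabla\mathcal L(\beta^\star)=0$. The Cox Hessian is at most $(\lambda+X^2)I$, since each $\nabla^2a_i$ is a covariance of vectors of norm at most $X$. This gives $\mathcal L(\widehat\beta_T)-\mathcal L(\beta^\star)\le\frac{\lambda+X^2}{2}\norm{\widehat\beta_T-\beta^\star}^2$, and inserting the mean-square bound yields exactly the stated factor after the $1/2$ cancels the $2$'s.

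The main obstacle is the bookkeeping needed to legitimately reuse the generic machinery. Three things must be checked. The Cox sample function has the extra event term $-K_Tp_kx_I$, but it is affine, so unbiasedness and the Bregman argument survive; Lemma~\ref{lem:cox-sampled-event-noise} already handles this. The bound $|L_k|\le B$ must be read with $B=M$ and the shift box $[-M-1,M]$. The restricted-secant constant must use $\gamma=\nu/2$ from the profile curvature rather than the ridge term, which is why $\mu$ has $4(1+2X^2)/\nu$ in place of $2(1+2G^2)/\gamma$. I would verify each of these substitutions explicitly but expect no new estimates to be needed.
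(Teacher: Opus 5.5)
Your proposal is correct and follows the paper's proof essentially step for step. The steps are: curvature $\nu/2$ for the profile from the budget~\eqref{eq:cox-algorithm-curvature}, Lemma~\ref{lem:generic-rsi} with $\gamma=\nu/2$, and Lemma~\ref{lem:cox-sampled-event-noise} in place of the generic noise bound. These feed the constant-step recursion with $\eta_T=\rho_T/(4A_{K_T})$, after which Theorem~\ref{thm:cox-transfer} handles the bias and $(\lambda+X^2)$-smoothness at an interior stationary $\beta^\star$ gives the objective bound. Your explicit check that the optimal shifts lie in $[-M-1,M]^{K_T}$, so that the $\beta$-component of $z_\rho$ is the profile minimizer, fills in a step the paper leaves implicit.
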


\begin{proof}
Proposition~\ref{prop:cox-total-comparison} and
\eqref{eq:cox-algorithm-curvature} show that, at the chosen $\rho_T$, the
profiled objective has Hessian at least $\nu I/2$.  Its group moment is at
most $9\kappa/8$.  Lemma~\ref{lem:generic-rsi} therefore gives the
joint restricted-secant constant $\mu$.  The sampled-event estimate
\eqref{eq:cox-sampled-event-noise}, unbiasedness, and the weighted
projection give the same recursion as in the proof of
Theorem~\ref{thm:main-optimization}, with $C=A_{K_T}$ and $V=V_T$.
Thus, for the profiled-surrogate minimizer
$\beta_{\rho_T,\delta_T}$,
\[
 \E\norm{\widehat\beta_T-\beta_{\rho_T,\delta_T}}^2
 \le\frac{D^2}{T^2}+\frac{4V_T\log T}{\mu^2T}.
\]
Theorem~\ref{thm:cox-transfer} bounds the squared distance from
$\beta_{\rho_T,\delta_T}$ to $\beta^\star$.  Apply
$\norm{a+b}^2\le2\norm a^2+2\norm b^2$ to obtain
\eqref{eq:cox-mean-square-bound}.

The exact objective is $(\lambda+X^2)$-smooth: each $\nabla^2a_i$ is a
covariance matrix bounded by $X^2I$.  If $\beta^\star$ is interior, then
$\nabla\mathcal L(\beta^\star)=0$, and smoothness gives
\[
 \mathcal L(\beta)-\mathcal L(\beta^\star)
 \le\frac{\lambda+X^2}{2}\norm{\beta-\beta^\star}^2.
\]
Combining this with~\eqref{eq:cox-mean-square-bound} proves
\eqref{eq:cox-objective-bound}.
\end{proof}

We now make the dependence on $T$ explicit.  Suppose every event risk set
has at least $cN$ subjects for a fixed $c\in(0,1)$.  Maximal grouping gives
\begin{equation}
 K_T
 \le\min\left\{m,
  \left\lceil\frac{\log(1/c)}{\log(1+\delta_T)}\right\rceil
 \right\}.
 \label{eq:cox-group-count}
\end{equation}
Indeed, let $n_k^{\rm start}$ be the first risk-set size in group $k$.
Maximality gives
$n_k^{\rm start}/n_{k+1}^{\rm start}>1+\delta_T$
for $k=1,\ldots,K_T-1$. Since
$n_1^{\rm start}/n_{K_T}^{\rm start}\le1/c$, multiplication yields
$(1+\delta_T)^{K_T-1}<1/c$, which is equivalent to
\eqref{eq:cox-group-count} after taking integer parts.

\begin{corollary}
\label{cor:cox-four-fifths}
Suppose every event risk set has at least $cN$ subjects for a fixed
$c\in(0,1)$, and suppose the constants in
Theorem~\ref{thm:cox-computational}, together with $c$, do not depend on
$T$.  For all sufficiently large $T$, choose
\begin{equation}
 \delta_T:=\left(\frac{\log T}{T}\right)^{1/5}
 \label{eq:optimal-delta}
\end{equation}
and use~\eqref{eq:cox-algorithm-parameters}.  Then
\begin{align}
 \E\norm{\widehat\beta_T-\beta^\star}^2
 &=O\!\left(\left(\frac{\log T}{T}\right)^{4/5}\right),
 \label{eq:four-fifths-square}\\
 \E[\mathcal L(\widehat\beta_T)-\mathcal L(\beta^\star)]
 &=O\!\left(\left(\frac{\log T}{T}\right)^{4/5}\right)
 \label{eq:four-fifths-objective}
\end{align}
when $\beta^\star$ is interior.  Moreover,
\begin{equation}
 \E\norm{\widehat\beta_T-\beta^\star}
 =O\!\left(\left(\frac{\log T}{T}\right)^{2/5}\right).
 \label{eq:two-fifths-norm}
\end{equation}
\end{corollary}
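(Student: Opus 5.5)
The plan is to plug the schedule into the finite-time bound \eqref{eq:cox-mean-square-bound} of Theorem~\ref{thm:cox-computational} and track the $T$-dependence of each term. Two preliminary steps come first.

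The first step is to verify that all hypotheses of that theorem hold for large $T$. Since $\delta_T\to0$, we have $r_T\le\delta_T\to0$. Hence $r_T\le1-q$ and $L_qr_T\le1/2$ hold eventually, because $L_q$ and $q$ are fixed.

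The second step is to control $K_T$ through \eqref{eq:cox-group-count}. For $\delta_T\le1$ we have $\log(1+\delta_T)\ge\delta_T/2$, so
\[
K_T\le 1+2\log(1/c)/\delta_T=O(\delta_T^{-1})=O\bigl((T/\log T)^{1/5}\bigr).
\]
It follows that $A_{K_T}=O(\delta_T^{-1})$ and $V_T=O(\delta_T^{-1})$. The constant $\mu$ does not depend on $T$, since it involves only $\nu,X,M$. Therefore
\[
\rho_T=O\bigl(\log T/(\delta_T T)\bigr)=O\bigl((\log T/T)^{4/5}\bigr)\to0.
\]
This gives $\rho_T\le1$ and $\rho_T\kappa\le1/9$ eventually. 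The curvature budget \eqref{eq:cox-algorithm-curvature} also holds eventually, because its left side is $O(r_T^2+\rho_T)\to0$ while $\nu$ is fixed.

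I expect these eventual-validity checks to be the main point needing care. One subtlety is that $\rho_T$ depends on $K_T$, which itself depends on $\delta_T$. The bound $K_T=O(\delta_T^{-1})$ must therefore be secured before any smallness of $\rho_T$ is asserted. A second subtlety is that $\mu$ must stay $T$-free, which Lemma~\ref{lem:generic-rsi} guarantees once $\gamma=\nu/2$ is available.

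Next we bound the three terms of \eqref{eq:cox-mean-square-bound} separately.
\begin{itemize}
\item The initialization term is $2D^2/T^2=O(T^{-2})$.
\item The noise term is $8V_T\log T/(\mu^2T)=O\bigl(\log T/(\delta_TT)\bigr)=O\bigl((\log T/T)^{4/5}\bigr)$.
\item The bias term is $(2X^2/\nu^2)(L_q^2r_T^2+4\rho_T\kappa)^2$. Using $r_T\le\delta_T$, it is $O(\delta_T^4+\rho_T^2)$. We have $\delta_T^4=(\log T/T)^{4/5}$, and $\rho_T^2=O((\log T/T)^{8/5})$ is of lower order.
\end{itemize}
Summing the three terms gives \eqref{eq:four-fifths-square}.

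When $\beta^\star$ is interior, the same term-by-term bounds applied to \eqref{eq:cox-objective-bound} give \eqref{eq:four-fifths-objective}, since that bound differs only by the factor $\lambda+X^2$ and changed constants. Finally, \eqref{eq:two-fifths-norm} follows from Jensen's inequality in the form $\E\norm{\cdot}\le(\E\norm{\cdot}^2)^{1/2}$.
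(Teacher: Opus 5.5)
Your proposal is correct and follows the paper's proof step for step: you bound $K_T=O(\delta_T^{-1})$ via \eqref{eq:cox-group-count}, deduce $A_{K_T},V_T=O(\delta_T^{-1})$ and $\rho_T=O((\log T/T)^{4/5})$ so that the smallness and curvature conditions hold eventually, then read off the noise term $O(\log T/(\delta_TT))$ and the bias $O(\delta_T^4+\rho_T^2)$ from \eqref{eq:cox-mean-square-bound}, and finish with \eqref{eq:cox-objective-bound} and Jensen. Your explicit use of $\log(1+\delta_T)\ge\delta_T/2$ and $r_T\le\delta_T$ only spells out details the paper leaves implicit.
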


\begin{proof}
For $\delta_T\le1$,~\eqref{eq:cox-group-count} gives
$K_T=O(\delta_T^{-1})$.  Hence
\[
 A_{K_T}=O(\delta_T^{-1}),
 \qquad
 V_T=O(\delta_T^{-1}),
 \qquad
 \rho_T=O\!\left(\frac{\log T}{\delta_TT}\right).
\]
With~\eqref{eq:optimal-delta}, $\rho_T$ is
$O((\log T/T)^{4/5})$, so all the smallness and curvature conditions hold
eventually.  The stochastic term in~\eqref{eq:cox-mean-square-bound} is
\[
 O\!\left(\frac{\log T}{\delta_TT}\right)
 =O\!\left(\left(\frac{\log T}{T}\right)^{4/5}\right).
\]
The grouping part of the squared minimizer bias is
$O(\delta_T^4)$ and has the same order.  The squared softplus bias is
$O(\rho_T^2)$ and is smaller.  This proves
\eqref{eq:four-fifths-square}. The objective result follows from
Theorem~\ref{thm:cox-computational}.  Jensen's inequality gives
\eqref{eq:two-fifths-norm}.
\end{proof}

\begin{remark}
For a fixed finite data set, decreasing $\delta_T$ eventually produces
singleton event groups.  At that point $K_T\le m$ and the grouping bias is
zero, so the bound crosses over from the grouping-limited
$T^{-4/5}$ regime toward the ordinary $\widetilde O(m/T)$ finite-problem
rate.
\end{remark}

\begin{remark}
Suppose instead that $\beta^\star$ is unique and interior and only
\[
 \nabla^2\mathcal L(\beta^\star)\succeq\nu_*I
\]
is known.  Continuity gives a neighborhood on which
$\mathcal L(\beta)-\mathcal L(\beta^\star)
\ge(\nu_*/4)\norm{\beta-\beta^\star}^2$.  On the compact complement of
that neighborhood, uniqueness gives a strictly positive minimum objective
gap.  Combining the two regions shows that convergence in objective implies
convergence in argument, and an expected objective rate also gives the same
rate for expected squared distance.  This local observation does not by
itself establish the uniform profile curvature needed by the RSI during the
whole optimization path. That is why Theorem~\ref{thm:cox-computational}
uses~\eqref{eq:exact-cox-curvature} on $\mathcal B$.
\end{remark}

\section{Statistical consequence}
\label{sec:statistics}

This section proves Theorem~\ref{thm:main-statistics}.  To clarify its
minimum-risk-set condition, let $T^0$ and $C$ be a subject's event and
censoring times, and put $\widetilde T=\min(T^0,C)$,
$\Delta=\mathbf 1\{T^0\le C\}$, and
$Y(t)=\mathbf 1\{\widetilde T\ge t\}$.  Only observed failures at or before a
fixed horizon $t_{\max}<\infty$ enter the empirical partial likelihood.  If
$\inf_{0\le t\le t_{\max}}\Pr\{Y(t)=1\}>0$, a uniform law of large numbers
for independent subjects makes every risk set through this horizon contain a
fixed positive fraction of the sample with probability tending to one.
Independent subjects, the proportional-hazards model, conditionally
independent censoring, bounded covariates, and nonsingular population
information are familiar sufficient conditions for the full estimator's
classical limit \citep{tsiatis1981large,andersen1982cox}.  As in
Section~\ref{sec:statistics-main}, we take that limit as given.

\begin{proof}[Proof of Theorem~\ref{thm:main-statistics}]
Condition~\eqref{eq:main-statistical-condition} and $L_{q,N}\ge1$ imply
$\delta_N\to_p0$, $L_{q,N}\delta_N\to_p0$, and
$\rho_N\kappa_N\to_p0$.  Since $q$ is fixed, the smallness conditions of
Theorem~\ref{thm:main-compression} therefore hold with probability tending
to one.  On the same event, the segment joining the two estimators lies in
the convex neighborhood $U$.  Integrating the Hessian bound on that segment
and using the variational inequalities for the two constrained minimizers,
as in Theorem~\ref{thm:cox-transfer}, gives
\[
 \norm{\widehat\beta_N^{\rho,\delta}-\widehat\beta_N}
 \le\frac X{\nu_0}
 \left(L_{q,N}^2r_{\delta_N}^2+4\rho_N\kappa_N\right)
 \le\frac X{\nu_0}
 \left(L_{q,N}^2\delta_N^2+4\rho_N\kappa_N\right).
\]
After multiplication by $\sqrt N$, this tends to zero in probability.
Slutsky's theorem transfers the assumed limit of $\widehat\beta_N$ to
$\widehat\beta_N^{\rho,\delta}$.

For the schedule in Theorem~\ref{thm:main-statistics}, the two terms of
\eqref{eq:main-statistical-condition} are respectively
$O_p(\ell_N^{-2})$ and $O_p(\ell_N^{-1})$.  If every event risk set has at
least $cN$ subjects with probability tending to one, then
\eqref{eq:cox-group-count} and
$\log(1+\delta_N)\ge\delta_N/2$ eventually give
$K_N=O_p(\delta_N^{-1})=O_p(N^{1/4}\ell_N)=o_p(N)$ when
$\ell_N=o(N^{3/4})$.
\end{proof}

\paragraph{Vanishing ridge.}
A fixed ridge coefficient changes the population target.  If the exact and
grouped objectives share a coefficient $\lambda_N$ and the ridge and
unregularized estimators lie in $U$, the same curvature argument gives
$\norm{\widehat\beta_{N,\lambda_N}-\widehat\beta_{N,0}}
\le B_0\lambda_N/\nu_0$.  Thus $\sqrt N\lambda_N\to0$ preserves the
ordinary Cox limit.

\paragraph{Numerical accuracy and the two limits.}
Here $N$ is sample size, while $T$ is the iteration horizon.  The schedule in
Theorem~\ref{thm:main-statistics} controls statistical approximation as $N$
grows. Corollary~\ref{cor:cox-four-fifths} balances approximation and
computation as $T$ grows.  Once the statistical condition holds, a numerical
iterate $\widehat\beta_{N,T_N}$ retains the exact Cox limit if
$\norm{\widehat\beta_{N,T_N}-\widehat\beta_N^{\rho,\delta}}
=o_p(N^{-1/2})$.  A sufficient unconditional criterion is
$N\E\norm{\widehat\beta_{N,T_N}-\widehat\beta_N^{\rho,\delta}}^2\to0$.
When the grouped estimator and algorithm use the same balanced $\delta$ and
$\rho$, the global
computational assumptions hold with probability tending to one, and the
conditional error prefactor is $O_p(1)$, condition
\eqref{eq:main-end-to-end-statistics} suffices in probability.  An
unconditional mean-square conclusion additionally requires integrable
bounds, for example deterministic computational assumptions and a uniformly
integrable prefactor.

\input{experiments_appendix.en.tex}

\end{document}

%% file: tables/data.tex
\begin{table}[htbp]
\centering\footnotesize
\setlength{\tabcolsep}{4pt}
\caption{Datasets, training event counts $m_{\rm tr}$, group counts $K$ at $\delta=0.05$, and per-fit vector budgets (millions). Dimension $d$ is after preprocessing.}
\label{tab:study27-data}
\begin{tabular}{lrrrrrr}
\hline
Dataset & $N$ & $d$ & $m_{\rm tr}$ & $K$ & Tuning cap & Final cap \\
\hline
SUPPORT2 & 8,873 & 22 & 3,621 & 61 & 3 & 4.5 \\
NWTCO & 4,028 & 11 & 342 & 15 & 3 & 4.5 \\
Correlated 100k & 100,000 & 10 & 6,005 & 133 & 10 & 15 \\
Correlated 1M & 1,000,000 & 20 & 60,136 & 178 & 100 & 150 \\
Independent 100k & 100,000 & 20 & 41,921 & 166 & 10 & 15 \\
\hline
\end{tabular}
\end{table}

%% file: experiments_appendix.en.tex
\input{experimental_protocol.en.tex}
\input{experimental_diagnostics.en.tex}
\input{experimental_trajectories.en.tex}

%% file: experimental_protocol.en.tex
\section{Experimental protocol}
\label{sec:study27-protocol}

\subsection{Data preparation}

For SUPPORT2 \citep{rhoexpSupportData}, we use time to death and 14 covariates:
age, sex, race, number of comorbidities, diabetes, dementia, cancer status, mean blood pressure, heart rate, respiratory rate, temperature, white blood
cell count, sodium, and creatinine. Missing race is a category, records
missing another selected covariate or the outcome are excluded, retaining
8,873 of 9,105 subjects. Prognostic scores and outcome-derived variables
are excluded. For NWTCO \citep{rhoexpNwtcoData}, we use time to relapse and the corresponding
event indicator. Predictors are histology assessments
from the local institution and central laboratory, stage, study, and age.
Identifiers and the subcohort indicator
are excluded.

Synthetic event times follow
\[
 T=\{E\exp(-x^\top\beta_*)\}^{1/1.5},\qquad
 E\sim\operatorname{Exp}(1),\quad x\sim\mathcal N(0,\Sigma).
\]
Independently, we draw $b\sim\mathcal N(0,I_d)$ and set
$\beta_{\rm white}=0.5b/\|b\|_2$.
The correlated cases use $\Sigma_{uv}=r^{|u-v|}$, with $r=0.95$ for
$(N,d)=(10^5,10)$ and $r=0.90$ for $(10^6,20)$.
Let $Z\in\mathbb R^{N\times d}$ have independent standard normal entries.
We form $X=ZL^\top$, $LL^\top=\Sigma$, and set
$\beta_*=L^{-\top}\beta_{\rm white}$.
Independent 100k uses $(N,d)=(10^5,20)$, $\Sigma=I_{20}$, and
$\beta_*=\beta_{\rm white}$. Thus the population standard deviation
of the linear predictor is $0.5$ in every case.
Independent exponential censoring targets 90\% in the correlated cases
and 30\% in Independent 100k, calibrated on separate 65,536-observation
pilots; realized fractions are 89.991\%, 89.9774\%, and 30.131\%.
Data seeds are 1701 for both correlated cases and 2701 for the independent
case. The correlated cases differ in dimension and correlation, as well as $N$.

All methods share event-stratified 60/20/20 splits, using seed 1701 for
training and 1702 for splitting the remainder. Imputation medians, categorical vocabularies, and
standardization are fitted on training data. After one-hot encoding,
columns with training standard deviation at most $10^{-12}$ are removed.
Validation and test use the fitted transformation unchanged. Sorting
and risk indices are shared across all fits within each dataset and
training phase. Breslow ties share risk
sets while retaining event multiplicities in both the objective and
sampling distribution. Table~\ref{tab:study27-data} gives training event
counts and the resulting numbers of groups at $\delta=0.05$, alongside
dimensions and budgets.

Each dataset uses one realization and one fixed split. Dataset and
candidate-menu choices were informed by exploratory optimization results.

\subsection{Native objectives and optimizer conventions}
\label{sec:study27-baselines}

All methods use $x^\top\beta$, initialize $\beta=0$, and add the explicit
ridge penalty $\lambda\|\beta\|_2^2/2$, with $\lambda=10^{-3}$ and
zero optimizer weight decay. Their sampled risk sets and native training
objectives differ. Every trajectory is evaluated using the same
regularized full-Cox objective defined below.

\paragraph{Our method.}
Groups are sampled uniformly, each with probability $1/K$.
We use mini-batches of independent stochastic gradient samples,
the unbiased sampled event mean
$Kp_kx_i$ with $i\sim\operatorname{Unif}(\mathcal I_k)$, weighted projection
and arithmetic averaging of the projected post-update coefficient iterates. We project the coefficient vector onto the Euclidean ball
$\{\beta:\|\beta\|_2\le B_0\}$ and clip the updated auxiliary shifts
to $[-S,S]$. We fix $\rho=10^{-4}$, $\delta=0.05$, $B_0=10$, $S=40$,
and $t_0=1000$. 
Batch size and initial step are tuned. Full-Cox diagnostics never enter updates.

\paragraph{Batch LSE.}
Adam averages over $Q$ sampled events, each with $c$ controls from its
full training risk set, the loss
\[
 -z_i+\log\left(\frac1c\sum_{r=1}^c e^{z_{j_r}}\right),
 \qquad z_j=x_j^\top\beta,
\]
plus ridge. Taking the logarithm of a sampled mean generally gives a
biased full-Cox gradient.

\paragraph{Minibatch Cox.}
Adam uses event-normalized Cox loss with risk sets restricted to shuffled
observation batches of size $b$, retaining incomplete final batches.
A batch with no observed events has zero data gradient; the ridge term
and the optimizer's existing moment state remain active.

\paragraph{BigSurv.}
Updates average event-sum losses over $\ell$ strata of 20
observations \citep{rhoexpTarkhanSimon2020}. This data loss is divided by
$a_{\rm BS}=20m_{\rm tr}/n_{\rm tr}$, the expected number of observed
events in a uniformly sampled stratum, before adding ridge. Here
$n_{\rm tr}$ and $m_{\rm tr}$ are the training subject and event counts.
The
denominator is fixed from the training split rather than the realized
event count of each stratum. AMSGrad uses uncorrected moments
$(0.9,0.99)$, $\epsilon=10^{-8}$, $\eta_t=\eta_0/\sqrt t$ ($t\ge1$),
and arithmetic iterate averaging. Strata are formed from shuffled
training observations; incomplete strata batches are discarded at epoch
boundaries.

\paragraph{Cox-CC.}
Adam minimizes the case-control loss~\citep{kvamme2019time}
\[
 \frac1Q\sum_{i\in\text{sampled cases}}
 \log\left(1+\sum_{r=1}^c e^{z_{j_r}-z_i}\right)
\]
plus ridge, including the case score in the denominator. Batch LSE and
Cox-CC sample with replacement, allow controls equal to the case, and
do not deduplicate score evaluations.

Batch LSE, Minibatch Cox, and Cox-CC use Adam with moment decay rates
$(0.9,0.999)$, $\epsilon=10^{-8}$, bias correction, and a constant
selected learning rate. These local linear implementations use Breslow
ties and stable log-sum-exp. BigSurv jitter and Cox-CC score
clamping/additional shrinkage are disabled.

\subsection{Tuning menus and selection}

Twelve candidate configurations per method use tuning seeds 100 and 101:
\begin{equation}
 h_* = \arg\min_h\frac12\sum_{r\in\{100,101\}}
                 \min_{j\in\mathcal J_{h,r}}
                 \ell_{\rm val}(\beta_{h,r,j}),
 \label{eq:study27-hpo-selection}
\end{equation}
where $\mathcal J_{h,r}$ contains saved checkpoints; ties favor the lower
candidate index. Each method has two structural settings and six learning
rates per setting (Table~\ref{tab:study27-hpo}), giving 600 configuration--seed
evaluations across the five datasets and five methods. For each dataset,
all five configurations are fixed before its final fits. Each selected
configuration is trained from zero with optimizer seeds 0--9 on the same
fixed split, giving 250 final runs in total.
Per-fit caps are $U_{\rm HPO}=\max\{3\cdot10^6,100N\}$ and
$U_{\rm final}=1.5U_{\rm HPO}$, where $N$ is the retained pre-split
sample size. Budgets are enforced separately for each run.

\input{tables/hpo.tex}
\input{tables/selected.tex}
\FloatBarrier

\subsection{Vector-work accounting and timing}
\label{sec:study27-cost}

A vector unit is one length-$d$ inner product, accumulation, arithmetic
pass, or norm; $X\beta$ and $X^\top v$ each cost one per processed row.
Our updates cost three batch passes (risk scores, weighted risk and event
feature sums) and seven coefficient passes (gradient difference, ridge,
update, norm, and averaging); active projection adds one. Setup requires
no feature-vector arithmetic. Baseline Adam/AMSGrad use a fixed estimate
of 12 passes per update: total $2P+15t$ for $P$ physical scores and $t$
updates, plus $3t$ for BigSurv averaging. Units exclude scalar
nonlinearities, shifts, sampling, sorting, searches, memory copies, and
runtime overhead; this work enters elapsed time when performed within
the timed intervals below. Units model vector work, not hardware FLOPs.

All runs use float64 arithmetic, one compute thread, and one training
worker. Our method uses NumPy and the baselines
use PyTorch. Reported time is method setup plus optimizer execution,
including sampling and budget checks. Common data preparation,
reference computation, checkpoint snapshot recording, full-data
diagnostics, and serialization are excluded. Runtime comparisons
characterize these implementations and were recorded in separate execution
sessions; software dependencies are listed in the code repository.

\subsection{Metrics, checkpoints, and aggregation}

For split $D$ with $m_D$ events,
\[
 \ell_D(\beta)=\frac1{m_D}\sum_{i\in\mathcal E_D}
 \left[\log\sum_{j\in\mathcal R_{i,D}}e^{x_j^\top\beta}
       -x_i^\top\beta\right].
\]
On the training split, $\ell_{\rm tr}(\beta)+\lambda\norm\beta^2/2$
differs from $\mathcal L(\beta)$ in~\eqref{eq:main-exact-cox} only by
$m^{-1}\sum_{i\in\mathcal E}\log n_i$. We report the signed gap
$\mathcal L(\beta)-\mathcal L(\beta_{\rm ref})$, with recorded resolution
approximately $1.2$--$1.8\cdot10^{-13}$. References minimize the
full-risk-set, event-normalized Breslow objective with $\lambda=10^{-3}$,
using float64 L-BFGS/strong-Wolfe from zero and a Newton-polishing
fallback. All five stopped at full regularized gradient norm below
$10^{-10}$. Reference computation is separate from compared trajectories.

Validation/test use unpenalized $\ell_D$ and reporting-only Harrell
C-index. Each tuning and final run uses 48 geometrically spaced work
targets from 128 to its respective cap, together with the initial and
actual terminal states. We record the first completed update crossing
each target and coalesce targets crossed by the same update. Target
locations are shared within a dataset and phase, while realized
checkpoint costs can differ between methods. The saved checkpoint with
the smallest validation Cox loss is selected, with ties resolved in favor
of the earliest checkpoint. Full-data diagnostics are computed after
training and do not affect updates.

A run attains a threshold through the cap at the earliest saved post-update
checkpoint at which the gap plus reference resolution, and its value at
every later saved checkpoint, is at most that threshold. We use thresholds
$10^{-3},10^{-5},10^{-7},10^{-9}$. A first crossing at the terminal
checkpoint also qualifies. Costs use recorded points without
interpolation; nonattainment is reported separately from the median cost
among attainers. This assesses saved checkpoints rather than all iterates.

Figures show ten seed traces, medians, and 25th--75th percentile bands.
Aggregation follows linear interpolation in displayed axis coordinates
on common observed support, without extrapolation or best-so-far
smoothing. Tables report means and sample SD where stated. Variability
concerns optimizer randomness on one split, not independent cohorts or
realizations.

For each dataset, all 50 validation-selected models are fixed before
their test evaluation. Each selected model is scored once on its test split. These test cohorts were also evaluated during earlier
exploratory work, so the predictive results are descriptive rather than
a new independent confirmation.

\FloatBarrier
\section{Additional results}
\label{sec:study27-additional-results}

\subsection{Optimization accuracy and target attainment}

Our method achieves the lowest median terminal full-Cox training gap
on all five datasets at matched vector-work caps
(Table~\ref{tab:study27-terminal}). BigSurv is the closest baseline,
with median gaps 3.74, 2.38, 16.14, 26.15, and 1.22 times ours in table
order. Our method has a smaller gap than BigSurv for all ten seed labels
on four datasets and for eight on Independent 100k.
These results compare complete training
procedures against a common regularized full-Cox reference; their native
objectives and optimizers differ
(Appendix~\ref{sec:study27-baselines}).

\input{tables/terminal.tex}

At the $10^{-5}$ target, our method is the only method to attain the
required accuracy within the final caps: seven of ten runs on
Correlated 100k and all ten on Correlated 1M
(Table~\ref{tab:study27-attainment}). At $10^{-3}$, our method and
BigSurv attain the target in all ten runs on every dataset. Among these
two methods, our method uses less median work on Correlated 1M, while
BigSurv uses less on SUPPORT2, NWTCO, Correlated 100k, and Independent
100k. No method attains $10^{-7}$ or $10^{-9}$ within these caps.
Reported costs are conditional on sustained attainment at saved
checkpoints through the cap and depend on the target accuracy.

\input{tables/attainment.tex}
\clearpage
\begingroup
\setlength{\intextsep}{3pt}
\setlength{\floatsep}{6pt}
\setlength{\textfloatsep}{6pt}
\setlength{\abovecaptionskip}{3pt}

\subsection{Predictive performance}

Tables~\ref{tab:study27-test} and~\ref{tab:study27-validation} report
validation-selected metrics. Mean test Cox loss is lowest for our method
on NWTCO, Correlated 100k, and Independent 100k, for Minibatch Cox on
SUPPORT2, and for BigSurv on Correlated 1M. Mean test C-index is highest
for our method on NWTCO and Correlated 100k, for BigSurv on SUPPORT2 and
Independent 100k, and for Cox-CC on Correlated 1M.

\input{tables/test.tex}
\input{tables/validation.tex}
\FloatBarrier
\endgroup

%% file: tables/hpo.tex
\begin{table}[htbp]
\centering
\footnotesize
\setlength{\tabcolsep}{4pt}
\caption{Twelve candidate configurations per method: two structural settings and six learning rates per setting, shared across datasets. Here $b$ is batch size, $Q$ is the number of sampled cases, $c$ the controls per case, and $\ell$ the strata per update.}
\label{tab:study27-hpo}
\begin{tabular}{lp{0.70\linewidth}}
\hline
Method & Candidates \\
\hline
Ours & $(b,\eta_0)\in\{128,256\}\times\{0.003,0.01,0.03,0.1,0.3,1\}$ \\
Batch LSE & $(Q,c)=(8,8)$: $\eta\in\{0.0009,0.003,0.009,0.03,0.09,0.3\}$; \newline $(Q,c)=(16,64)$: $\eta\in\{0.0003,0.001,0.003,0.01,0.03,0.1\}$ \\
Minibatch Cox & $(b,\eta)\in\{32,256\}\times\{0.001,0.003,0.01,0.03,0.1,0.3\}$ \\
BigSurv & Stratum size 20; $(\ell,\eta_0)\in\{1,16\}\times\{0.006,0.02,0.06,0.2,0.6,2\}$ \\
Cox-CC & $Q=32$; $(c,\eta)\in\{1,8\}\times\{0.0003,0.001,0.003,0.01,0.03,0.1\}$ \\
\hline
\end{tabular}
\end{table}

%% file: tables/selected.tex
\begin{table}[htbp]
\centering\footnotesize
\setlength{\tabcolsep}{4pt}
\caption{Configurations minimizing the mean, over two tuning seeds,
of the minimum saved validation Cox loss within the tuning budget.
Tuples are $(b,\eta_0)$ for Ours, $(Q,c,\eta)$ for Batch LSE,
$(b,\eta)$ for Minibatch Cox, $(\ell,\eta_0)$ for BigSurv, and
$(c,\eta)$ for Cox-CC (with $Q=32$). Candidate menus are given in
Table~\ref{tab:study27-hpo}.}
\label{tab:study27-selected}
\begin{tabular}{lrrrrr}
\hline
Method & SUPPORT2 & NWTCO & Correlated 100k & Correlated 1M & Independent 100k \\
\hline
Ours & $(256,0.03)$ & $(128,0.03)$ & $(128,0.1)$ & $(256,0.1)$ & $(128,0.03)$ \\
Batch LSE & $(16,64,0.001)$ & $(16,64,0.01)$ & $(8,8,0.009)$ & $(16,64,0.0003)$ & $(16,64,0.0003)$ \\
Minibatch Cox & $(256,0.01)$ & $(256,0.03)$ & $(32,0.003)$ & $(256,0.001)$ & $(256,0.001)$ \\
BigSurv & $(1,0.02)$ & $(16,0.06)$ & $(1,2)$ & $(1,2)$ & $(16,0.06)$ \\
Cox-CC & $(8,0.001)$ & $(8,0.03)$ & $(8,0.03)$ & $(1,0.0003)$ & $(8,0.0003)$ \\
\hline
\end{tabular}
\end{table}

%% file: tables/terminal.tex
\begin{table}[htbp]
\centering\footnotesize
\setlength{\tabcolsep}{4pt}
\caption{Median terminal full-Cox training gap over ten optimizer seeds}
\label{tab:study27-terminal}
\begin{tabular}{lrrrrr}
\hline
Dataset & Ours & Batch LSE & Minibatch Cox & BigSurv & Cox-CC \\
\hline
SUPPORT2 & $\mathbf{3.93\cdot10^{-5}}$ & $1.47\cdot10^{-3}$ & $2.13\cdot10^{-3}$ & $1.47\cdot10^{-4}$ & $1.16\cdot10^{-3}$ \\
NWTCO & $\mathbf{4.96\cdot10^{-5}}$ & $7.77\cdot10^{-3}$ & $2.98\cdot10^{-3}$ & $1.18\cdot10^{-4}$ & $1.19\cdot10^{-2}$ \\
Correlated 100k & $\mathbf{7.92\cdot10^{-6}}$ & $1.76\cdot10^{-2}$ & $1.61\cdot10^{-3}$ & $1.28\cdot10^{-4}$ & $1.24\cdot10^{-2}$ \\
Correlated 1M & $\mathbf{1.39\cdot10^{-6}}$ & $4.79\cdot10^{-4}$ & $7.11\cdot10^{-4}$ & $3.64\cdot10^{-5}$ & $6.46\cdot10^{-4}$ \\
Independent 100k & $\mathbf{1.36\cdot10^{-5}}$ & $4.49\cdot10^{-4}$ & $2.27\cdot10^{-4}$ & $1.65\cdot10^{-5}$ & $2.68\cdot10^{-4}$ \\
\hline
\end{tabular}
\end{table}

%% file: tables/attainment.tex
\begin{table}[htbp]
\centering\footnotesize
\setlength{\tabcolsep}{4pt}
\caption{Sustained training-gap attainment at saved post-update checkpoints
through the final cap. Threshold columns give the number of attainers
out of ten runs. Costs are median work in millions of vector units
among attainers. A first crossing at the terminal checkpoint qualifies,
a dash indicates no attainment.}
\label{tab:study27-attainment}
\begin{tabular}{llrrrr}
\hline
Dataset & Method & $10^{-3}$ & Cost (M) & $10^{-5}$ & Cost (M) \\
\hline
SUPPORT2 & Ours & 10/10 & 0.3115 & 0/10 & -- \\
SUPPORT2 & Batch LSE & 1/10 & 4.4980 & 0/10 & -- \\
SUPPORT2 & Minibatch Cox & 0/10 & -- & 0/10 & -- \\
SUPPORT2 & BigSurv & 10/10 & 0.0654 & 0/10 & -- \\
SUPPORT2 & Cox-CC & 3/10 & 4.4999 & 0/10 & -- \\
NWTCO & Ours & 10/10 & 0.3108 & 0/10 & -- \\
NWTCO & Batch LSE & 0/10 & -- & 0/10 & -- \\
NWTCO & Minibatch Cox & 1/10 & 4.4998 & 0/10 & -- \\
NWTCO & BigSurv & 10/10 & 0.0474 & 0/10 & -- \\
NWTCO & Cox-CC & 0/10 & -- & 0/10 & -- \\
Correlated 100k & Ours & 10/10 & 0.9767 & 7/10 & 11.7018 \\
Correlated 100k & Batch LSE & 0/10 & -- & 0/10 & -- \\
Correlated 100k & Minibatch Cox & 4/10 & 12.0642 & 0/10 & -- \\
Correlated 100k & BigSurv & 10/10 & 0.4127 & 0/10 & -- \\
Correlated 100k & Cox-CC & 0/10 & -- & 0/10 & -- \\
Correlated 1M & Ours & 10/10 & 0.9571 & 10/10 & 25.1968 \\
Correlated 1M & Batch LSE & 10/10 & 61.4778 & 0/10 & -- \\
Correlated 1M & Minibatch Cox & 6/10 & 72.6711 & 0/10 & -- \\
Correlated 1M & BigSurv & 10/10 & 1.2885 & 0/10 & -- \\
Correlated 1M & Cox-CC & 9/10 & 13.9025 & 0/10 & -- \\
Independent 100k & Ours & 10/10 & 0.3621 & 0/10 & -- \\
Independent 100k & Batch LSE & 10/10 & 5.5559 & 0/10 & -- \\
Independent 100k & Minibatch Cox & 10/10 & 0.2205 & 0/10 & -- \\
Independent 100k & BigSurv & 10/10 & 0.0388 & 0/10 & -- \\
Independent 100k & Cox-CC & 10/10 & 1.2523 & 0/10 & -- \\
\hline
\end{tabular}
\end{table}

%% file: tables/test.tex
\begin{table}[!ht]
\centering\footnotesize
\setlength{\tabcolsep}{4pt}
\caption{Test metrics at validation-selected checkpoints: mean $\pm$ sample SD over ten optimizer seeds. One fixed split; unpenalized Cox loss per event ($\downarrow$), Harrell C-index ($\uparrow$).}
\label{tab:study27-test}
\begin{tabular}{llrr}
\hline
Dataset & Method & Test Cox loss & Test C-index \\
\hline
SUPPORT2 & Ours & $6.866840\pm0.000700$ & $0.602019\pm0.000693$ \\
SUPPORT2 & Batch LSE & $6.867794\pm0.001330$ & $0.600758\pm0.001080$ \\
SUPPORT2 & Minibatch Cox & $6.866488\pm0.002054$ & $0.601296\pm0.001637$ \\
SUPPORT2 & BigSurv & $6.867411\pm0.000304$ & $0.602970\pm0.000435$ \\
SUPPORT2 & Cox-CC & $6.867097\pm0.001045$ & $0.601770\pm0.001126$ \\
NWTCO & Ours & $6.227649\pm0.001645$ & $0.731458\pm0.000142$ \\
NWTCO & Batch LSE & $6.241507\pm0.015734$ & $0.726018\pm0.007678$ \\
NWTCO & Minibatch Cox & $6.248697\pm0.012824$ & $0.723723\pm0.004764$ \\
NWTCO & BigSurv & $6.232977\pm0.005778$ & $0.730110\pm0.001328$ \\
NWTCO & Cox-CC & $6.263264\pm0.020098$ & $0.717591\pm0.008785$ \\
Correlated 100k & Ours & $8.420984\pm0.000132$ & $0.635188\pm0.000056$ \\
Correlated 100k & Batch LSE & $8.424820\pm0.001777$ & $0.634273\pm0.001613$ \\
Correlated 100k & Minibatch Cox & $8.422421\pm0.001239$ & $0.634502\pm0.000873$ \\
Correlated 100k & BigSurv & $8.421654\pm0.000315$ & $0.634838\pm0.000220$ \\
Correlated 100k & Cox-CC & $8.422216\pm0.001405$ & $0.634370\pm0.000942$ \\
Correlated 1M & Ours & $10.702666\pm0.000042$ & $0.639196\pm0.000023$ \\
Correlated 1M & Batch LSE & $10.702925\pm0.000225$ & $0.639051\pm0.000096$ \\
Correlated 1M & Minibatch Cox & $10.702853\pm0.000179$ & $0.639099\pm0.000066$ \\
Correlated 1M & BigSurv & $10.702654\pm0.000035$ & $0.639178\pm0.000019$ \\
Correlated 1M & Cox-CC & $10.702712\pm0.000103$ & $0.639204\pm0.000037$ \\
Independent 100k & Ours & $8.725094\pm0.000077$ & $0.630792\pm0.000043$ \\
Independent 100k & Batch LSE & $8.725489\pm0.000289$ & $0.630621\pm0.000188$ \\
Independent 100k & Minibatch Cox & $8.725183\pm0.000084$ & $0.630782\pm0.000090$ \\
Independent 100k & BigSurv & $8.725162\pm0.000225$ & $0.630811\pm0.000022$ \\
Independent 100k & Cox-CC & $8.725228\pm0.000211$ & $0.630728\pm0.000135$ \\
\hline
\end{tabular}
\end{table}

%% file: tables/validation.tex
\begin{table}[!ht]
\centering\footnotesize
\setlength{\tabcolsep}{4pt}
\caption{Validation metrics at checkpoints selected by validation Cox loss: mean $\pm$ sample SD over ten optimizer seeds. One fixed split; unpenalized Cox loss per event ($\downarrow$), Harrell C-index ($\uparrow$).}
\label{tab:study27-validation}
\begin{tabular}{llrr}
\hline
Dataset & Method & Validation Cox loss & Validation C-index \\
\hline
SUPPORT2 & Ours & $6.871160\pm0.000429$ & $0.594114\pm0.000440$ \\
SUPPORT2 & Batch LSE & $6.871841\pm0.000956$ & $0.593927\pm0.001286$ \\
SUPPORT2 & Minibatch Cox & $6.870807\pm0.000519$ & $0.594091\pm0.000511$ \\
SUPPORT2 & BigSurv & $6.871334\pm0.000152$ & $0.594797\pm0.000370$ \\
SUPPORT2 & Cox-CC & $6.870968\pm0.000646$ & $0.594495\pm0.000652$ \\
NWTCO & Ours & $6.153779\pm0.000821$ & $0.723944\pm0.001528$ \\
NWTCO & Batch LSE & $6.141673\pm0.003232$ & $0.728380\pm0.002980$ \\
NWTCO & Minibatch Cox & $6.139356\pm0.005738$ & $0.728560\pm0.003219$ \\
NWTCO & BigSurv & $6.141284\pm0.005943$ & $0.727106\pm0.001478$ \\
NWTCO & Cox-CC & $6.135449\pm0.005278$ & $0.731840\pm0.002403$ \\
Correlated 100k & Ours & $8.402429\pm0.000093$ & $0.642029\pm0.000081$ \\
Correlated 100k & Batch LSE & $8.404346\pm0.001473$ & $0.641605\pm0.001088$ \\
Correlated 100k & Minibatch Cox & $8.402208\pm0.000598$ & $0.641995\pm0.000268$ \\
Correlated 100k & BigSurv & $8.402015\pm0.000557$ & $0.642109\pm0.000269$ \\
Correlated 100k & Cox-CC & $8.402560\pm0.000860$ & $0.641725\pm0.000603$ \\
Correlated 1M & Ours & $10.705947\pm0.000033$ & $0.637771\pm0.000027$ \\
Correlated 1M & Batch LSE & $10.706178\pm0.000166$ & $0.637626\pm0.000096$ \\
Correlated 1M & Minibatch Cox & $10.706057\pm0.000099$ & $0.637729\pm0.000065$ \\
Correlated 1M & BigSurv & $10.705949\pm0.000016$ & $0.637750\pm0.000011$ \\
Correlated 1M & Cox-CC & $10.705979\pm0.000040$ & $0.637758\pm0.000047$ \\
Independent 100k & Ours & $8.711592\pm0.000117$ & $0.634014\pm0.000091$ \\
Independent 100k & Batch LSE & $8.711744\pm0.000291$ & $0.633971\pm0.000185$ \\
Independent 100k & Minibatch Cox & $8.711506\pm0.000052$ & $0.634057\pm0.000052$ \\
Independent 100k & BigSurv & $8.711569\pm0.000096$ & $0.634021\pm0.000090$ \\
Independent 100k & Cox-CC & $8.711505\pm0.000148$ & $0.634071\pm0.000087$ \\
\hline
\end{tabular}
\end{table}

%% file: experimental_diagnostics.en.tex
\clearpage
\section{Approximation diagnostics}
\label{sec:study27-diagnostics}

On each training dataset we evaluate
$v\in\{0,0.5\beta_{\rm ref},\beta_{\rm ref},1.5\beta_{\rm ref}\}$,
where $\beta_{\rm ref}$ is the saved regularized full-Cox reference.
The grouping grid is $\delta\in\{0.2,0.1,0.05,0.025,0.0125\}$;
the softplus grid is
$\rho\in\{10^{-2},3\cdot10^{-3},10^{-3},3\cdot10^{-4},10^{-4},3\cdot10^{-5}\}$
at $\delta=0.05$.

We use $\mathcal L$, $\widetilde{\mathcal L}_\delta$, and
$\mathcal L_{\rho,\delta}$ from Section~\ref{sec:cox-main}, all with
$\lambda=10^{-3}$.  The grouped objective uses exact group normalizers;
the softplus objective profiles the shifts over $\mathbb R^K$.
The errors are Euclidean coefficient-gradient differences:
\begin{align}
 e_{\rm group}(v,\delta)
 &=\|\nabla\widetilde{\mathcal L}_\delta(v)
       -\nabla\mathcal L(v)\|,\\
 e_{\rm soft}(v,\rho)
 &=\|\nabla\mathcal L_{\rho,0.05}(v)
       -\nabla\widetilde{\mathcal L}_{0.05}(v)\|.
\end{align}
Softplus shifts are profiled by safeguarded scalar roots with tolerance
$5\cdot10^{-13}$. All 100 grouping and 120 softplus comparisons are
retained; near-zero errors at $v=0$ are excluded from logarithmic fits.

Figure~\ref{fig:study27-diagnostics} and Table~\ref{tab:study27-slopes}
use every grid point at $\beta_{\rm ref}$. Ordinary least-squares
log--log slopes are 1.37--1.87 for grouping and 0.997--1.000 for
softplus error. Softplus error is nearly linear in $\rho$ over the
tested range. Grouping changes discretely as $\delta$ varies, so its
finite-grid slopes also reflect changes in the partition.
The theoretical $O(\delta^2)$ bound applies under the stated compression
conditions and does not prescribe an exact finite-grid slope.
These diagnostics characterize approximation at fixed coefficients;
they do not estimate convergence over optimizer iterations.

\input{tables/diagnostic_slopes.tex}

The markers evaluate $r_\delta\le1-q$, $L_q(v)r_\delta\le1/2$,
and $\rho\kappa(v)\le1/9$, with $q=0.5$ and
$r_\delta=\delta/(1+\delta)$. Here $L_q(v)$ and $\kappa(v)$ are the
expressions in~\eqref{eq:main-Lq} and~\eqref{eq:main-kappa}
evaluated at $\beta=v$ before taking the supremum over $\mathcal B$.
The applicable conditions hold in 80/100 grouping and 96/120 softplus
comparisons. These pointwise checks do not certify the assumptions
uniformly over the radius-10 ball. The calculations preserve event
multiplicity and the Breslow convention for ties.

%% file: tables/diagnostic_slopes.tex
\begin{table}[htbp]
\centering\footnotesize
\setlength{\tabcolsep}{4pt}
\caption{Group count $K$ at $\delta=0.05$ and ordinary least-squares
log--log slopes of coefficient-gradient error against $\delta$
(grouping) and $\rho$ (softplus), using every point in the respective
grid at $\beta_{\rm ref}$.}
\label{tab:study27-slopes}
\begin{tabular}{lrrr}
\hline
Dataset & $K$ & $\delta$ slope & $\rho$ slope \\
\hline
SUPPORT2 & 61 & 1.783 & 1.000 \\
NWTCO & 15 & 1.601 & 0.997 \\
Correlated 100k & 133 & 1.372 & 0.999 \\
Correlated 1M & 178 & 1.716 & 0.999 \\
Independent 100k & 166 & 1.866 & 0.999 \\
\hline
\end{tabular}
\end{table}

%% file: experimental_trajectories.en.tex
\clearpage
\section{Complete trajectory figures}
\label{sec:study27-figures}

\begin{figure}[!ht]
\centering
\includegraphics[scale=0.76]{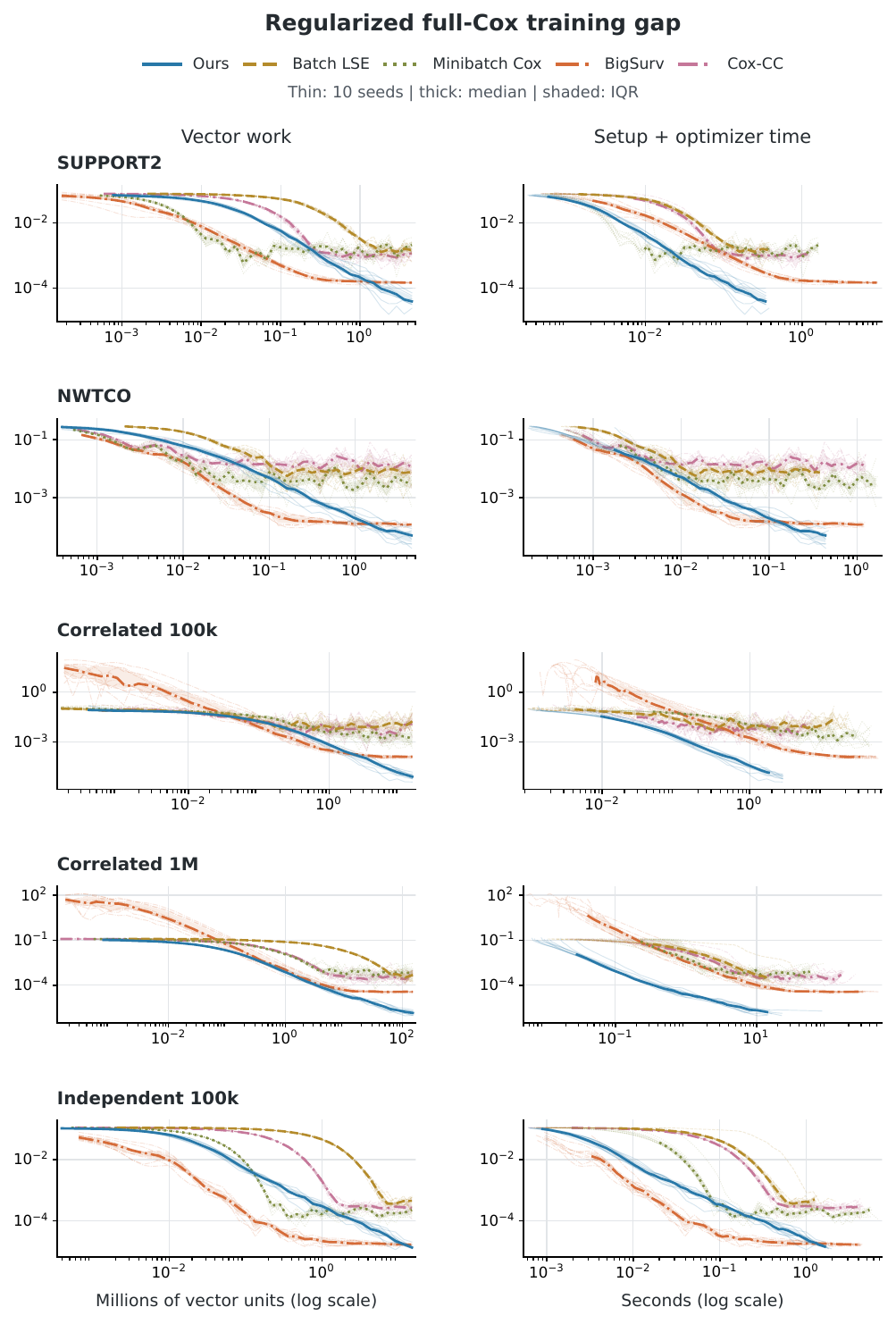}
\caption{Regularized full-Cox training gap against vector work and setup plus optimizer time. Thin lines show individual optimizer seeds, thick lines and bands give the median and interquartile range. Ours and BigSurv use running arithmetic averages of post-update
coefficient iterates; Ours averages after projection. The other methods
use current coefficient iterates.}
\label{fig:study27-trajectories-gap}
\end{figure}

\clearpage
\begin{figure}[p]
\centering
\includegraphics[width=\linewidth]{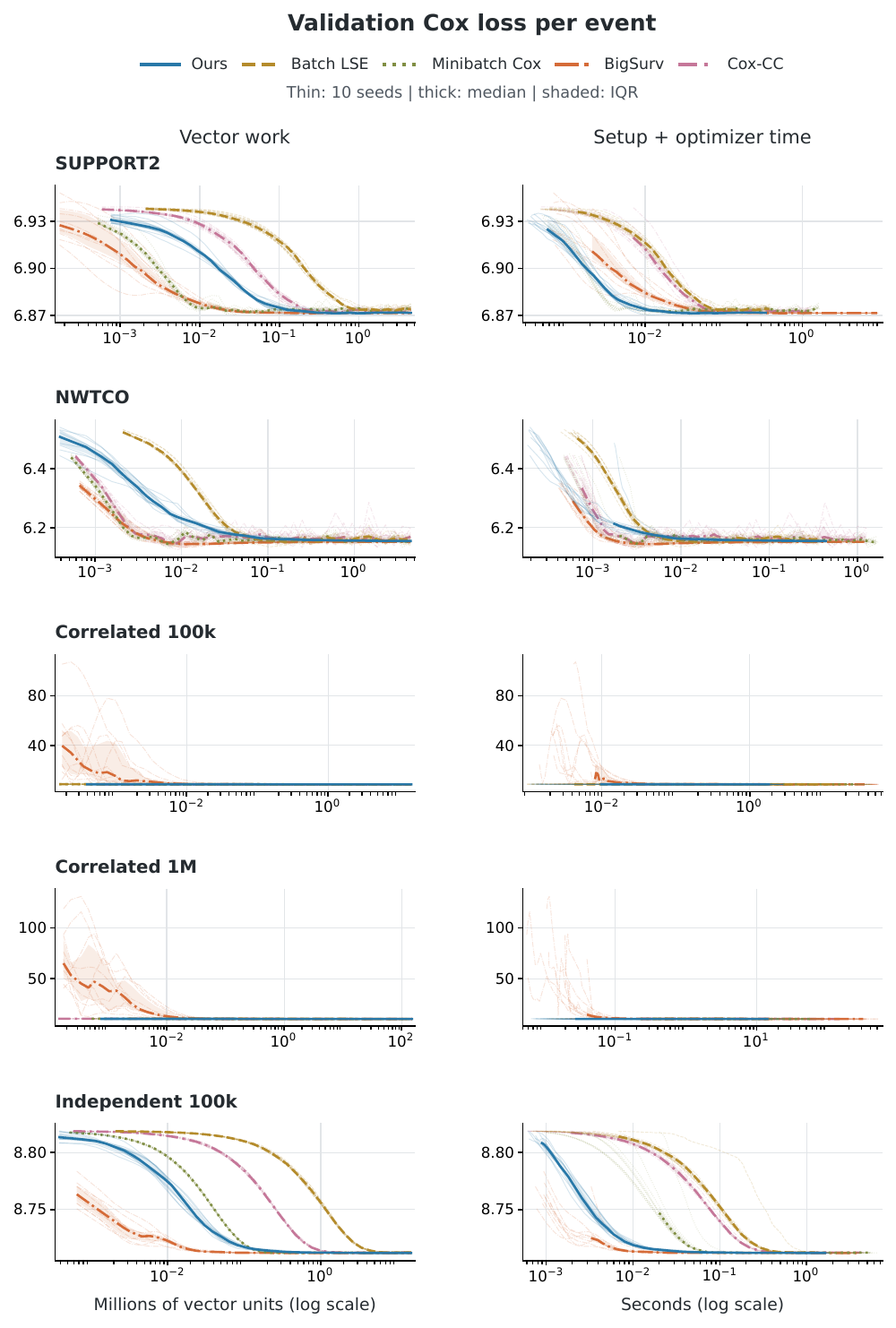}
\caption{Unpenalized validation Cox loss per event against vector work (left) and setup plus optimizer time (right); lower is better. Matched panels share Y-axis limits.}
\label{fig:study27-trajectories-validation-loss}
\end{figure}

\clearpage
\begin{figure}[p]
\centering
\includegraphics[width=\linewidth]{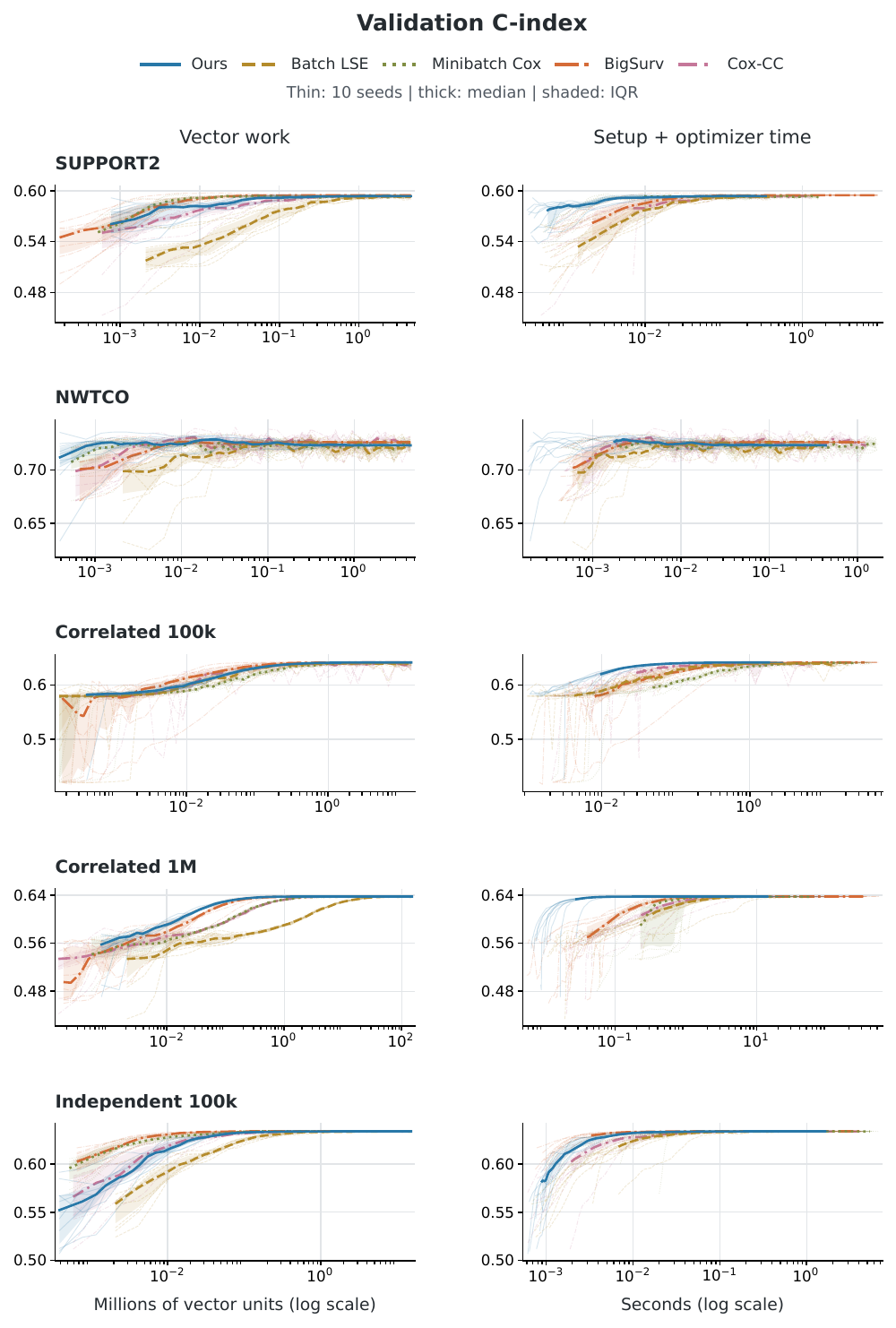}
\caption{Validation Harrell C-index against vector work (left) and
setup plus optimizer time (right); higher is better.
Matched panels share Y-axis limits.}
\label{fig:study27-trajectories-validation-cindex}
\end{figure}